\tikzset{
  treenode/.style = {align=center, inner sep=0pt, text centered,
    font=\sffamily},
  arn_n/.style = {treenode, circle, black, draw=black,
    fill=white, text width=1.5em, thin},
  arn_r/.style = {treenode, circle, red, draw=red, 
    text width=1.5em, very thick},
  arn_b/.style = {treenode, circle,  draw=green, 
    text width=1.5em, very thick},
  arn_x/.style = {treenode, rectangle, draw=black, thick,
    minimum width=1.5em, minimum height=1.5em},
  arn_Leaf/.style = {treenode, fill = red, minimum size = 12pt, inner sep = 2pt, rectangle, draw=black, very thick, 
    minimum width=1.5em, minimum height=1.5em},
  emph/.style={edge from parent/.style={blue,thick,draw}},
    norm/.style={edge from parent/.style={black,thin,draw}}
}
\numberwithin{equation}{section}
\newtheorem{rem}{Remark}[section]
\newtheorem{defi}{Definition}[section]
\newtheorem{lem}{Lemma}[section]
\newtheorem{pro}{Proposition}[section]
\newtheorem{theo}{Theorem}[section]
\newcommand{\bX}{\mathbf{X}}
\newcommand{\bx}{\mathbf{x}}
\newcommand{\bz}{\mathbf{z}}
\DeclareMathOperator*{\argmax}{arg\,max}
\newcommand{\Mtry}{\mathscr{M}_{\textrm{try}}}
\newcommand{\V}{\mathds{V}}
\renewcommand{\P}{\mathds{P}}
\definecolor{gris25}{gray}{0.90}
\begin{document}

\begin{center}
{\Large 
\textbf{\textsf{Neural Random Forests}}}
\medskip
\medskip
\end{center}

{\bf G\'erard Biau}\\
{\it Sorbonne Universit\'e, CNRS, LPSM, Paris, France}\\
\href{mailto:gerard.biau@upmc.fr}{gerard.biau@upmc.fr}
\bigskip

{\bf Erwan Scornet}\\
{\it Centre de Math\'ematiques Appliqu\'ees, Ecole Polytechnique, CNRS,\\ 
Palaiseau, France}\\
\href{mailto:erwan.scornet@polytechnique.edu}{erwan.scornet@polytechnique.edu}
\bigskip

{\bf Johannes Welbl}\\
{\it University College London, London, England}\\
\href{mailto: J.Welbl@cs.ucl.ac.uk}{J.Welbl@cs.ucl.ac.uk}

\begin{abstract}
\noindent {\rm 
Given an ensemble of randomized regression trees, it is possible to restructure them as a collection of multilayered neural networks with particular connection weights. Following this principle, we reformulate the random forest method of Breiman (2001) into a neural network setting, and in turn propose two new hybrid procedures that we call neural random forests. Both predictors exploit prior knowledge of regression trees for their architecture, have less parameters to tune than standard networks, and less restrictions on the geometry of the decision boundaries than trees. Consistency results are proved, and substantial numerical evidence is provided on both synthetic and real data sets to assess the excellent performance of our methods in a large variety of prediction problems.
\medskip
 
\noindent \emph{Index Terms} --- Random forests, neural networks, ensemble methods, randomization, sparse networks.
\medskip

\noindent {2010 Mathematics Subject Classification}: 62G08, 62G20, 68T05.}

\end{abstract}

\section{Introduction}
\setcounter{MaxMatrixCols}{20}
Decision tree learning is a popular data-modeling technique that has been around for over fifty years in the fields of statistics, artificial intelligence, and machine learning. The approach and its innumerable variants have been successfully involved in many challenges requiring classification and regression tasks, and it is no exaggeration to say that many modern predictive algorithms rely directly or indirectly on tree principles. What has greatly contributed to this success is the simplicity and transparency of trees, together with their ability to explain complex data sets. The monographs by \citet{BrFrOlSt84}, \citet{DeGyLu96}, \citet{RoMa08}, and \citet{HaTiFr09} will provide the reader with introductions to the general subject area, both from a practical and theoretical perspective.

The history of trees goes on today with random forests \citep{Br01}, which are on the list of the most successful machine learning algorithms currently available to handle large-scale and high-dimensional data sets. This method works according to the simple but effective {\it bagging} principle: sample fractions of the data, grow a predictor (a decision tree in the case of forests) on each small piece, and then paste the results together. Although the theory is still incomplete, random forests have been shown to give state-of-the-art performance on a
number of practical problems and in different contexts \citep[e.g.,][]{Me06, IsKoChMi11}. They work fast, generally exhibit a substantial improvement over single tree learners, and yield generalization error rates that often rank among the best \citep[see, e.g.,][]{FeCeBaAm14}. The surveys by \citet{BoJaKrKo12} and \citet{BiSc15}, which include practical guidelines and updated references, are a good starting point for understanding the method.

It is sometimes alluded to that forests have the flavor of deep network architectures \citep[e.g.,][]{Be09}, insofar as ensemble of trees allow to discriminate between a very large number of regions. The richness of forest partitioning results from the fact that the number of intersections of the leaf regions can be exponential in the number of trees. That being said, the connection between random forests and neural networks is largely unexamined. On the one hand, the many parameters of neural networks make them a versatile and expressively rich tool for complex data modeling. However, their expressive power comes with the downside of increased overfitting risk, especially on small data sets. Conversely, random forests have fewer parameters to tune, but the greedy feature space separation by orthogonal hyperplanes results in typical stair or box-like decision surfaces, which may be advantageous for some data but suboptimal for other, particularly for colinear data with correlated features \citep{GuHoMaVa11}. In this context, the empirical studies by \citet{We14} and \citet{RiKaYaMyRo15} have highlighted the advantage of casting random forests into a neural network framework, with the intention to exploit the benefits of both approaches to overcome their respective shortcomings. 

In view of the above, the objective of this article is to reformulate the random forest method into a neural network setting, and in turn propose two new hybrid procedures that we call {\it neural random forests}. In a nutshell, given an ensemble of random trees, it is possible to restructure them as a collection of (random) multilayered neural networks, which have sparse connections and less restrictions on the geometry of the decision boundaries. 
Their activation functions are soft nonlinear and differentiable, thus trainable with a gradient-based optimization algorithm and expected to exhibit better generalization performance.
The idea of constructing a decision tree and using this tree to obtain a neural network is by no means new---see for example \citet{Se90,Se91}, \citet{Br91}, and \citet[][Chapter 30]{DeGyLu96}. 
Similar work in the intersection between decision trees and neural networks has been undertaken by \cite{Ko15}, who learn differentiable split functions to guide inputs through a tree. The \emph{conditional networks} from \cite{Io16} also use trainable routing functions to perform conditional transformations on the inputs---they are thus capable of transferring computational efficiency benefits of decision trees into the domain of convolutional networks. 
However, to our best knowledge, no theoretical study has yet been reported regarding the connection between random forests and networks.

This paper makes several important contributions. First, in Section 2, we show that any regression tree can be seen as a particular neural network. In Section 3, we exploit this equivalence to combine neural networks in a random forest style and define the two neural random forest predictors. Both predictors exploit prior knowledge of regression trees for their initialization. This provides a major advantage in terms of interpretability (compared to more ``black-box-like'' neural network models) and effectively offers the networks a warm start at the prediction level of traditional forests. Section 4 is devoted to the derivation of theoretical consistency guarantees of the two methods. We illustrate in Section 5 the excellent performance of our approach both on simulated and real data sets, and show that it outperforms random forests in most situations. For clarity, the most technical proofs are gathered in Section 6.
\section{Trees, forests, and networks}
The general framework is nonparametric regression estimation, in which an input random vector $\bX \in [0,1]^d$ is observed and the goal is to predict the square integrable random response $Y\in \mathds R$ by estimating the regression function $r(\bx)=\mathds E[Y|\bX=\bx]$. With this aim in mind, we assume we are given a training sample $\mathscr D_n=((\bX_1,Y_1), \hdots, (\bX_n,Y_n))$, $n \ge 2$, of independent random variables distributed the same as the independent prototype pair $(\bX, Y)$. The data set $\mathscr D_n$ is used to construct an estimate $r(\cdot\,;  \mathscr{D}_n): [0,1]^d \to \mathds R$ of the function $r$. We abbreviate $r(\bx\,;\mathscr{D}_n)$ to $r_{n}(\bx)$ and say that the regression function estimate $r_n$ is (mean squared error) consistent if $\mathds E |r_n(\bX)-r(\bX)|^2 \to 0$ as $n \to \infty$ (the expectation is evaluated over $\bX$ and the sample $\mathscr D_n$).
\subsection{From one tree to a neural network}
A regression tree is a regression function estimate that uses a hierarchical segmentation of the input space, where each tree node corresponds to one of the segmentation subsets in $[0,1]^d$. In the following, we consider ordinary binary regression trees. In this model, a node has exactly either zero children (in which case it is called a terminal node or leaf) or two children. If a node $u$ represents the set $A \subseteq [0,1]^d$ and its children $u_L$, $u_R$ (L and R for \textsl{Left} and \textsl{Right}) represent $A_L  \subseteq [0,1]^d$ and $A_R  \subseteq [0,1]^d$, then we require that $A = A_L \cup A_R$ and $A_L\cap A_R= \emptyset$. The root represents the entire space $[0,1]^d$ and the leaves, taken together, form a partition of $[0,1]^d$. In an ordinary tree, we pass from $A$ to $A_L$ and $A_R$ by answering a question on $\bx=(x^{(1)}, \hdots, x^{(d)})$ of the form: ``Is $x^{(j)}\geq \alpha$'', for some dimension $j\in\{1, \hdots, d\}$ and some $\alpha \in [0,1]$. Thus, the feature space $[0,1]^d$ is partitioned into hyperrectangles  whose sides are parallel to the coordinate axes. 
During prediction, the input is first passed into the tree root node. It is then iteratively transmitted to the child node that belongs to the subspace in which the input is located; this is repeated until a leaf node is reached.
If a leaf represents region $A$, then the natural regression function estimate takes the simple form $t_n(\bx)=(\sum_{i=1}^n Y_i\mathds 1_{\bX_i \in A})/N_n(A)$, $\bx \in A$, where $N_n(A)$ is the number of observations in cell $A$ (where, by convention, $0/0=0$). In other words, the prediction for a query point $\bx$ in leaf node $A$ is the average of the $Y_i$ of all training instances that fall into this region $A$. An example in dimension $d=2$ is depicted in Figure \ref{figure1}.
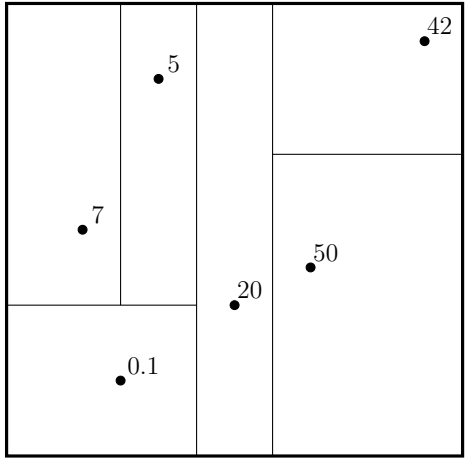
\begin{figure}[h!!]
    \centering
    \begin{tikzpicture}
 \tikzstyle{points}=[circle,  draw=black, fill ,scale=0.3]
 \tikzstyle{value}=[circle,  scale=0.8]
    
\draw[very thick] (0,0) rectangle (6,6);
\draw (2.5,0) -- (2.5, 6);
\draw (3.5,0) -- (3.5, 6);
\draw (0,2) -- (2.5,2);
\draw (1.5,2) -- (1.5,6);
\draw (3.5,4) -- (6, 4);

\node[points]  (I-1) at (1.5,1) {};
\node[value]  (I-2) at (1.8,1.2) {$0.1$};

\node[points]  (I-1) at (3,2) {};
\node[value]  (I-2) at (3.2,2.2) {$20$};

\node[points]  (I-1) at (2,5) {};
\node[value]  (I-2) at (2.2,5.2) {$5$};

\node[points]  (I-1) at (1,3) {};
\node[value]  (I-2) at (1.2,3.2) {$7$};

\node[points]  (I-1) at (4,2.5) {};
\node[value]  (I-2) at (4.2,2.7) {$50$};

\node[points]  (I-1) at (5.5,5.5) {};
\node[value]  (I-2) at (5.7,5.7) {$42$};

\end{tikzpicture}
    \caption{Tree partitioning in dimension $d=2$, with $n=6$ data points.}
    \label{figure1}
\end{figure}

The tree structure is usually data-dependent and indeed, it is in the construction itself that different trees differ. Of interest in the present paper is the CART program of \citet{BrFrOlSt84}, which will be described later on. Let us assume for now that we have at hand a regression tree $t_n$ (whose construction eventually depends upon the data $\mathscr D_n$), which takes constant values on each of $K\geq 2$ terminal nodes. It turns out that this estimate may be reinterpreted as a three-layer neural network estimate with two hidden layers and one output layer, as summarized in the following. Let $\mathscr H=\{H_1, \hdots, H_{K-1}\}$ be the collection of all hyperplanes participating in the construction of $t_n$. We note that each $H_k\in \mathscr H$ is of the form $H_k=\{\bx \in \mathds [0,1]^d:h_k(\bx)=0\}$, where $h_k(\bx)=x^{(j_k)}-\alpha_{j_k}$ for some (eventually data-dependent) $j_k \in \{1, \hdots, d\}$ and $\alpha_{j_k} \in [0,1]$. To reach the leaf of the query point $\bx$, we find, for each hyperplane $H_k$, the side on which $\bx$ falls ($+1$ codes for right and $-1$ for left). With this notation, the tree estimate $t_n$ is identical to the neural network described below. 

{\bf First hidden layer.}  
The first hidden layer of neurons corresponds to $K-1$ perceptrons (one for each inner tree node), whose activation is defined as 
$$
\tau(h_k(\bx)) = \tau(x^{(j_k)}-\alpha_{j_k}),
$$
where $\tau(u)=2\mathds 1_{u\geq 0}-1$ is a threshold activation function. The weight vector is merely a single one-hot vector for feature $j_k$, and $-\alpha_{j_k}$ is the bias value. So, for each split in the tree, there is a neuron in layer 1 whose activity encodes the relative position of an input $\bx$ with respect to the concerned split. In total, the first layer outputs the $\pm 1$-vector $(\tau(h_1(\bx)), \hdots, \tau(h_{K-1}(\bx)))$, which describes all decisions of the inner tree nodes (including nodes off the tree path of $\bx$). The quantity $\tau(h_k(\bx))$ is $+1$ if $\bx$ is on one side of the hyperplane $H_k$, $-1$ if $\bx$ is on the other side of $H_k$, and by convention $+1$ if $\bx \in H_k$. We again stress that each neuron $k$ of this layer is connected to one, and only one, input $x^{(j_k)}$, and that this connection has weight 1 and offset $-\alpha_{j_k}$. An example is presented in Figure \ref{figure2}. Given these particular activations of layer 1, layer 2 can then easily reconstruct the precise tree leaf membership (i.e., the terminal cell) of $\bx$. 
\begin{figure}[!h]
    \centering
\begin{tabular}{lcr}
\begin{tikzpicture}[scale=0.8]
 \tikzstyle{neuron}=[circle,  draw=black,
    fill=white, minimum size=12pt,  inner sep=0pt,  font=\sffamily]
     \tikzstyle{leaf neuron}=[rectangle,  draw=black,
    fill=white, minimum size=12pt,  inner sep=0pt,  font=\sffamily]
 \tikzstyle{branch neuron}=[neuron, draw=green, very thick, fill = white, minimum size = 16pt, inner sep = 2pt]
    
    \tikzstyle{special leaf neuron} = [rectangle, draw=black, very thick, fill = red, minimum size = 12pt, inner sep = 2pt]

\draw[very thick] (0,0) rectangle (6,6);
\draw[blue, very thick] (2.5,0) -- (2.5, 6);
\draw (3.5,0) -- (3.5, 6);
\draw[blue, very thick] (0,2) -- (2.5,2);
\draw[blue, very thick] (1.5,2) -- (1.5,6);
\draw (3.5,4) -- (6, 4);

 \node[branch neuron] (I-1) at (2.5,5.5) {$0$};
 \node[branch neuron] (I-2) at (0.75,2) {$1$};
 \node[branch neuron] (I-3) at (1.5,3) {$3$};
 \node[neuron] (I-4) at (3.5,2) {$6$};
 \node[neuron] (I-5) at (5.5,4) {$8$};

 \node[leaf neuron] (II-1) at (1.25,1) {$2$};
 \node[leaf neuron] (II-3) at (2.05,4.25) {$5$};
 \node[leaf neuron] (II-4) at (3,3) {$7$};
 \node[leaf neuron] (II-5) at (4.75,2) {$9$};
 \node[leaf neuron] (II-6) at (4.75,5) {$10$};

 \node[special leaf neuron] (II-1) at (0.75,4.25) {$4$};
\end{tikzpicture}
&  
\hspace{0.5cm}
&
   \begin{tikzpicture}[->,>=stealth',level/.style={sibling distance = 4cm/#1,
  level distance = 1.5cm}, scale = 0.8] 

\node [arn_b] {0}
    child[emph]{ node [arn_b] {1} 
            child[norm]{ node [arn_x] {2} 
            }	
            child{ node [arn_b] {3}
							child{ node [arn_Leaf] {4}}
							child[norm]{ node [arn_x] {5}}	}                            
    }
    child{ node [arn_n] {6}
            child{ node [arn_x] {7} }
            child{ node [arn_n] {8}
							child{ node [arn_x] {9}}
							child{ node [arn_x] {10}}
            }
		}
; 
\end{tikzpicture}

\end{tabular}

\def\layersep{2.5cm}
\def\HLOneN{5}
\def\HLTwoN{6}
\def\nFeats{2}
\vspace{0.7cm}

\begin{tikzpicture}[shorten >=0pt,-,draw=black, node distance=\layersep, scale = 0.8]
    \tikzstyle{every pin edge}=[-, draw = black!30]
    \tikzstyle{neuron}=[circle,  draw=black,
    fill=white, minimum size=14pt,  inner sep=0pt,  font=\sffamily]

    \tikzstyle{input neuron}=[neuron];
    \tikzstyle{branch neuron}=[neuron, draw=green, very thick, fill = white, minimum size = 16pt, inner sep = 2pt]
    \tikzstyle{invisible neuron}=[circle, draw = white]
    
    \tikzstyle{leaf neuron}=[rectangle,  draw=black,
    fill=white, minimum size=14pt,  inner sep=0pt,  font=\sffamily]

    \tikzstyle{special leaf neuron} = [rectangle, draw=black, very thick, fill = red, minimum size = 16pt, inner sep = 2pt]
    \tikzstyle{output neuron}=[neuron];
    \tikzstyle{hidden neuron}=[neuron];
    \tikzstyle{annot} = [text width=8em, text centered, font = \footnotesize]

    \foreach \name / \y in {1,...,\nFeats}
        \node[input neuron] (I-\name) at (0,-2-\y) {$x_{\y}$};



             \foreach \name / \y in {1}
        \path[yshift=0.5cm]
         node[branch neuron] (HL1-\name) at (\layersep,-\y cm-0.5 cm) { 0};
         
         \foreach \name / \y in {2}
        \path[yshift=0.5cm]
         node[branch neuron] (HL1-\name) at (\layersep,-\y cm-0.5 cm) { 1};

%

	\foreach \name / \y in {3}
        \path[yshift=0.5cm]
           node[branch neuron] (HL1-\name) at (\layersep,-\y cm -0.5 cm) {3};

    \foreach \name / \y in {4}
        \path[yshift=0.5cm]
           node[hidden neuron] (HL1-\name) at (\layersep,-\y cm -0.5 cm) {6};
    \foreach \name / \y in {5}
        \path[yshift=0.5cm]
           node[hidden neuron] (HL1-\name) at (\layersep,-\y cm -0.5 cm) {8};


  \foreach \name / \y in {1}
        \path[yshift=0.5cm]
            node[leaf neuron] (HL2-\name) at (2*\layersep,-\y cm) {2};
            
	
	\foreach \name / \y in {2}
        \path[yshift=0.5cm]
            node[special leaf neuron] (HL2-\name) at (2*\layersep,-\y cm) {4};
     
    \foreach \name / \y in {3}
        \path[yshift=0.5cm]
            node[leaf neuron] (HL2-\name) at (2*\layersep,-\y cm) {5};
 \foreach \name / \y in {4}
        \path[yshift=0.5cm]
            node[leaf neuron] (HL2-\name) at (2*\layersep,-\y cm) {7};
 \foreach \name / \y in {5}
        \path[yshift=0.5cm]
            node[leaf neuron] (HL2-\name) at (2*\layersep,-\y cm) {9};
 \foreach \name / \y in {6}
        \path[yshift=0.5cm]
            node[leaf neuron] (HL2-\name) at (2*\layersep,-\y cm) {10};

     \node[output neuron,pin={[pin edge={-}]right:}] (O1) at (3*\layersep,-3.5 cm)  {};

   
    \path (I-1) edge (HL1-1)[black, thick];
    \path (I-1) edge (HL1-3)[black, thick];
    \path (I-1) edge (HL1-4)[black!40];
    \path (I-2) edge (HL1-2)[black, thick];
    \path (I-2) edge (HL1-5)[black!40];



\foreach \source in {1,2}
	 \path (HL1-\source) edge (HL2-1)[black!40];
\foreach \source in {1,2,3}
	 \path (HL1-\source) edge (HL2-3)[black!40];
\foreach \source in {1,4}
	 \path (HL1-\source) edge (HL2-4)[black!40];
\foreach \source in {1,4,5}
	 \path (HL1-\source) edge (HL2-5)[black!40];
\foreach \source in {1,4,5}
	 \path (HL1-\source) edge (HL2-6)[black!40];

   \foreach \source in {1,2,3}
	 \path (HL1-\source) edge (HL2-2)[blue, thick];

    \foreach \source in {1,...,\HLTwoN}
        \path (HL2-\source) edge (O1)[black!40];

    \path (HL2-2) edge (O1)[red!90, thick];

    \node[annot,above of=HL1-1, node distance=1cm] (hl1) {$\mathscr{H}$};
    \node[annot,left of=hl1, node distance=2cm] {Input layer};
    \node[annot,right of=hl1, node distance=2cm](hl2){ $\mathscr{L}$};
    \node[annot,right of=hl2, node distance=2cm](hl2){Output layer};

\end{tikzpicture}

    \caption{An example of regression tree ({\bf top}) and the corresponding neural network ({\bf down}).}
    \label{figure2}
\end{figure}

{\bf Second hidden layer.} Layer 1 outputs a $(K-1)$-dimensional vector of $\pm 1$-bits that encodes for the precise location of $\bx$ in the leaves of the tree. The leaf node identity of $\bx$ can now be extracted from this vector using a weighted combination of the bits, together with an appropriate thresholding. 

Let $\mathscr L=\{L_1, \hdots, L_K\}$ be the collection of all tree leaves, and let $L(\bx)$ be the leaf containing $\bx$. The second hidden layer has $K$ neurons, one for each leaf, and assigns a terminal cell to $\bx$ as explained below.  We connect a unit $k$ from layer 1 to a unit $k'$ from layer 2 if and only if the hyperplane $H_k$ is involved in the sequence of splits forming the path from the root to the leaf $L_{k'}$. The connection has weight $+1$ if, in that path, the split by $H_k$ is from a node to a right child, and $-1$ otherwise.  So, if $(u_1(\bx), \hdots, u_{K-1}(\bx))$ is the vector of $\pm 1$-bits seen at the output of layer 1, the output $v_{k'}(\bx)\in \{-1,1\}$ of neuron $k'$ is $\tau(\sum_{k \to k' } b_{k,k'}u_k(\bx)+b^{0}_{k'})$, where notation $k \to k'$ means that $k$ is connected to $k'$ and $b_{k,k'} = \pm 1$ is the corresponding weight. The offset $b^0_{k'}$ is set to 
\begin{equation}
\label{offset}
b^0_{k'}=-\ell (k')+\frac{1}{2},
\end{equation}
where $\ell(k')$ is the length of the path from the root to $L_{k'}$. To understand the rationale behind the choice (\ref{offset}), observe that there are exactly $\ell(k')$ connections starting from the first layer and pointing to $k'$, and that 
\begin{align}
    \left\lbrace
\begin{array}{ll}
\sum_{k \to k' } b_{k,k'}u_k(\bx) -\ell(k') +  \frac{1}{2} = \frac{1}{2}  &  \textrm{if}~ \bx \in L_{k'} \\
\sum_{k \to k' } b_{k,k'}u_k(\bx) -\ell(k') +  \frac{1}{2}  \leq -\frac{1}{2}  & \textrm{otherwise.}
     \end{array}
     \right. \label{def_threshold_function}
     \end{align}
Thus, with the choice (\ref{offset}), the argument of the threshold function is 1/2 if $\bx \in L_{k'}$ and is smaller than $-1/2$ otherwise. Hence $v_{k'}(\bx) = 1$ if and only if the terminal cell of $\bx$ is $L_{k'}$. To summarize, the second hidden layer outputs a vector of $\pm 1$-bits $(v_1(\bx), \hdots, v_{K}(\bx))$ whose components equal $-1$ except the one corresponding to the leaf $L(\bx)$, which is $+1$. 

{\bf Output layer.} Let $(v_1(\bx), \hdots, v_{K}(\bx))$ be the output of the second hidden layer. If $v_{k'}(\bx)=1$, then the output layer computes the average $\bar Y_{k'}$ of the $Y_i$ corresponding to $\bX_i$ falling in $L_{k'}$. This is equivalent to take
\begin{equation}
\label{NFI}
t_n(\bx)=\sum_{k'=1}^{K} w_{k'} v_{k'}(\bx)+{b}_{\textrm{out}},
\end{equation}
where $w_{k'}=\frac{1}{2}\bar Y_{k'}$ for all $k'\in \{1, \hdots, K\}$, and ${b}_{\textrm{out}}=\frac{1}{2}\sum_{k'=1}^K \bar Y_{k'}$.
\subsection{The CART program}
We know from the preceding section that every regression tree may be seen as a neural network estimate with two hidden layers and threshold activation functions. Let us now be more precise about the details of the CART program of \citet{BrFrOlSt84} to induce a regression tree. The core of their approach is a tree with $K_n$ leaf regions defined by a partition of the space based on the $n$ data points. When constructing the tree, the so-called CART-split criterion is applied recursively. This criterion determines which input direction should be used for the split and where the cut should be made. Let $A$ be a generic cell and denote by $N_n(A)$ the number of examples falling in $A$. Formally, a cut in $A$ is a pair $(j,\alpha)$, where $j$ is a dimension from $\{1, \hdots, d\}$ and $\alpha\in[0,1]$ is the position of the cut along the $j$-th coordinate, within the limits of $A$. Let $\mathscr{C}_A$ be the set of all such possible cuts in $A$. Then, using the notation $\bX_i = (\bX_i^{(1)}, \hdots, \bX_i^{(d)} )$, the CART-split criterion takes the form, for any $(j,\alpha) \in \mathscr{C}_A$,
\begin{align}
L_{n}(j,\alpha) = & \frac{1}{N_n(A)} \sum_{i=1}^n (Y_i - \bar{Y}_{A})^2\mathds{1}_{\bX_i \in A} \nonumber \\
&  - \frac{1}{N_n(A)} \sum_{i=1}^n (Y_i - \bar{Y}_{A_{L}} \mathds{1}_{\bX_i^{(j)}  < \alpha} - \bar{Y}_{A_{R}} \mathds{1}_{\bX_i^{(j)} \geq \alpha})^2 \mathds{1}_{\bX_i \in A},  \label{chapitre0_definition_empirical_CART_criterion}
\end{align}
where $A_L = \{ \bx \in A: \bx^{(j)} < \alpha\}$, $A_R = \{ \bx \in A: \bx^{(j)} \geq \alpha\}$, and $\bar{Y}_{A}$ (resp., $\bar{Y}_{A_{L}}$, $\bar{Y}_{A_{R}}$) is the average of the $Y_i$ belonging to $A$ (resp., $A_{L}$, $A_{R}$), with the convention $0/0=0$. The quantity (\ref{chapitre0_definition_empirical_CART_criterion}) measures the (renormalized) difference between the empirical variance in the node before and after a cut is performed. For each cell $A$, the best cut $(j_n^{\star},\alpha_n^{\star})$ is selected by maximizing $L_n(j,\alpha)$ over $\mathscr{C}_A$; that is,
\begin{equation}
\label{un}
(j_n^{\star},\alpha_n^{\star}) \in \argmax_{(j,\alpha) \in \mathscr{C}_A } L_{n}(j,\alpha).
\end{equation}
(To remove some of the ties in the argmax, the best cut is always performed in the middle of two consecutive data points.) So, at each cell, the algorithm evaluates criterion (\ref{chapitre0_definition_empirical_CART_criterion}) over all possible cuts in the $d$ directions and returns the best one. This process is applied recursively down the tree, and stops when the tree contains exactly $K_n$ terminal nodes, where $K_n\geq 2$ is an integer eventually depending on $n$.
\subsection{Random forests}
A random forest is a predictor consisting of a collection of $M$ (large) randomized CART-type regression trees. For the $m$-th tree in the family, the predicted value at the query point $\bx$ is denoted by $t(\bx\,; \Theta_{m},\mathscr D_n)$, where $\Theta_1, \hdots,\Theta_M$ are random variables, distributed the same as a generic random variable $\Theta$. It is assumed that $\Theta_1, \hdots, \Theta_M$, $\Theta$, and $\mathscr D_n$ are mutually independent.
The variable $\Theta$, which models the extra randomness introduced in each tree construction, is used to $(i)$ resample the training set prior to the growing of individual trees, and $(ii)$ select the successive directions for splitting via a randomized version of the CART criterion---see below. Lastly, the trees are combined to form the forest estimate 
\begin{equation*}
t(\bx\,; \Theta_1, \hdots, \Theta_M, \mathscr{D}_n)=\frac{1}{M}\sum_{m=1}^M t(\bx\,; \Theta_m,\mathscr D_n).\label{chapitre0_finite_forest}
\end{equation*}
To lighten notation we write $t_{M,n}(\bx)$ instead of $t(\bx \,; \Theta_1, \hdots, \Theta_M, \mathscr{D}_n)$.

The method works by growing the $M$ randomized trees as follows. Let $a_n\geq 2$ be an integer smaller than or equal to $n$. Prior to the construction of each tree, $a_n$ observations are drawn at random without replacement from the original data set; then, at each cell of the current tree, a split is performed by choosing uniformly at random, without replacement, a subset $\Mtry \subseteq \{1,\hdots,d\}$ of cardinality $m_{\textrm{try}}:=|\Mtry|$, by evaluating criterion (\ref{chapitre0_definition_empirical_CART_criterion}) over all possible cuts in the $m_{\textrm{try}}$ directions, and by returning the best one. In other words, for each cell $A$, the best cut $(j_n^{\star},\alpha_n^{\star})$ is selected by maximizing $L_n(j,\alpha)$ over $\mathscr{M}_{\textrm{\scriptsize try}}$ and $\mathscr{C}_A$; that is,
\begin{equation}
\label{deux}
(j_n^{\star},\alpha_n^{\star}) \in \argmax\limits_{\substack{j \in \mathscr{M}_{ \textrm{\tiny try}}\\(j,\alpha) \in \mathscr{C}_A }} L_{n}(j,\alpha).
\end{equation}
The essential difference between (\ref{un}) and (\ref{deux}) is that (\ref{deux}) is evaluated over a subset  $\Mtry$ of randomly selected coordinates, {\it not} over the whole range $\{1, \hdots, d\}$. The parameter $m_{\textrm{try}}$, which aims at reducing the computational burden and creating some diversity between the trees, is independent of $n$ and often set to $d/3$. Also, because of the without-replacement sampling, each tree is constructed on a subset of $a_n$ examples picked within the initial sample, {\it not} on the whole sample $\mathscr D_n$. In accordance with the CART program, the construction of individual trees is stopped when each tree reaches exactly $K_n$ terminal nodes (recall that $K_n \in \{2, \hdots,a_n\}$ is a parameter of the algorithm). We insist on the fact that the number of leaves in each tree equals $K_n$. There are variants of the approach in terms of parameter choices and resampling mode that will not be explored here---see, e.g., the discussion in \citet{BiSc15} and the comments after Theorem \ref{maintheo1}.

Finally, by following the principles of Subsection 2.1, each random tree estimate $t(\cdot \, ;\Theta_m,\mathscr D_n)$, $1\leq m\leq M$, of the forest can be reinterpreted in the setting of neural networks. The $M$ resulting networks are different because they correspond to different random trees. 
We see in particular that the $m$-th network has exactly $K_n-1$ neurons in the first hidden layer and $K_n$ in the second one. We also note that the 
network architecture (i.e., the way neurons are connected) and the associated coefficients depend on both $\Theta_m$ {\it and} $\mathscr D_n$. 
\section{Neural forests}
Consider the $m$-th tree estimate $t(\cdot \, ;\Theta_m,\mathscr D_n)$ of the forest, seen as a neural network estimate. Conditional on $\Theta_m$ and $\mathscr D_n$, the architecture of this network is fixed, and so are the weights and offsets of the three layers. A natural idea is then to keep the structure of the network intact and let the parameters vary in a subsequent network training procedure with backpropagation training. In other words, once the connections between the neurons have been designed by the tree-to-network mapping, we could then learn even better network parameters by minimizing some empirical mean squared error for this network over the sample $\mathscr D_n$. This additional training can potentially improve the predictions of the original random forest, and we will see more about this later in the experiments.

To allow for training based on gradient backpropagation, the activation functions must be differentiable. A natural idea is to
replace the original relay-type activation function $\tau(u)=2\mathds 1_{u\geq 0}-1$ with a smooth approximation of it; for this the hyperbolic tangent activation function 
$$\sigma(u) := \tanh(u) = \frac{e^{u}-e^{-u}}{e^{u}+e^{-u}}=\frac{e^{2u}-1}{e^{2u}+1},$$
which has a range from $-1$ to $1$ is chosen. More precisely, we use $\sigma_1(u)=\sigma(\gamma_1 u)$ at every neuron of the first hidden layer and $\sigma_2(u)=\sigma(\gamma_2 u)$ at every neuron of the second one. Here, $\gamma_1$ and $\gamma_2$ are positive hyperparameters that determine the contrast of the hyperbolic tangent activation: the larger $\gamma_1$ and $\gamma_2$, the sharper the transition from $-1$ to $1$. Of course, as $\gamma_1$ and $\gamma_2$ approach infinity, the continuous functions $\sigma_1$ and $\sigma_2$ converge to the threshold function. Besides eventually providing better generalization, the hyperbolic tangent activation functions favor smoother decision boundaries and permit a relaxation of crisp tree node membership. Lastly, they allow to operate with a smooth approximation of the discontinuous step activation function. This makes the network loss function differentiable with respect to the parameters everywhere, and gradients can be backpropagated to train the network. Similar ideas are developed in the so-called soft tree models \citep{Jo94,OlWe03,GeWe05,YiAl13}.

In this sparse setting, the number of parameters is much smaller than in a fully connected feed-forward network with the same number of neurons. Recall that the two hidden layers have respectively $K_n-1$ and $K_n$ hidden nodes. Without any information regarding the connections between neurons (fully connected network), fitting such a network would require optimizing a total of $(d+1)(K_n-1)+K_n^2+K_n+1$ parameters, which is $\mathscr O(dK_n+K_n^2)$. On the other hand, assuming that the tree generated by the CART algorithm is roughly balanced, then the average depth of the tree is $\mathscr O(\log K_n)$. This gives, on average, $2(K_n-1)+\mathscr O(K_n\log K_n)+K_n+1$ parameters to fit, which is $\mathscr O(K_n\log K_n)$. For large $K_n$, this quantity can be much smaller than $\mathscr O(dK_n+K_n^2)$ and, in any case, it is independent of the dimension $d$---a major computational advantage in high-dimensional settings. If an unbalanced tree does occur, then it has at most $\mathscr O(K_n)$ levels and the total work required is $\mathscr O(K_n^2)$. Thus, even in this extreme case, our algorithm is independent of $d$ and should be competitive in high dimensions, when $d\gg K_n$.
\begin{rem}
Training a network with sparse connectivity retains some degree of interpretability of its internal representation and may be seen as a relaxation of original tree structures.
But besides these sparse networks, the relaxation of tree structures can go even further when allowing for full connectivity between layers---in this case, the tree structure is merely used as initialization for a fully connected network of the same size. In this setting, all weights belonging to tree structures have a nonzero initialization value, while the other weights start with 0. During training then, all weights can be modified so that arbitrary connections between the layers (i.e., also across trees) can be learned. The initial tree-type parametrization provides a strong inductive bias, compared to a random initialization, which contains valuable information and mimics the regression function of a CART-type tree already before backpropagation training. Compared to a random initialization, this gives the network an effective warm start.
The effects of using random forests as warm starts for neural networks will be further explored in the experimental Section 5.
\end{rem}
Thus, the above mechanisms allow to convert the $M$ CART-type trees into $M$ tree-type neural networks that have different sparse architectures and smooth activation functions at the nodes. Interestingly, this link opens up a principled way for inferring the structure of a neural network, by viewing a regression tree as a specially structured network and putting some kind of prior information about the weights.

To complete the presentation, it remains to explain how to combine the $M$ individual networks. This can be done in at least two different ways, which are described below. We call the two resulting estimates {\it neural forests}.

{\bf Method 1: Independent training.} The parameters of each tree-type network are fitted network by network, independently of each other. With this approach, we end up with an ensemble of $M$ ``small'' neural network estimates $r(\cdot \, ;\Theta_m,\mathscr D_n)$, $1\leq m \leq M$, which are finally averaged to form the estimate 
\begin{equation}
\label{average}
r(\bx\,; \Theta_1, \hdots, \Theta_M, \mathscr{D}_n)=\frac{1}{M}\sum_{m=1}^M r(\bx\,; \Theta_m,\mathscr D_n)
\end{equation}
(see the illustration in Figure \ref{fig_method1}). To lighten notation we write $r_{M,n}(\bx)$ instead of the more complicated $r(\bx \,; \Theta_1, \hdots, \Theta_M, \mathscr{D}_n)$, keeping in mind that $r_{M,n}(\bx)$ depends on both $\Theta_1, \hdots, \Theta_M$ and the sample $\mathscr D_n$. 

The minimization program implemented at each ``small'' network is  described in Section 4 below, together with the statistical properties of $r_{M,n}(\bx)$. This predictor has the flavor of a randomized ensemble (forest) of neural networks. 

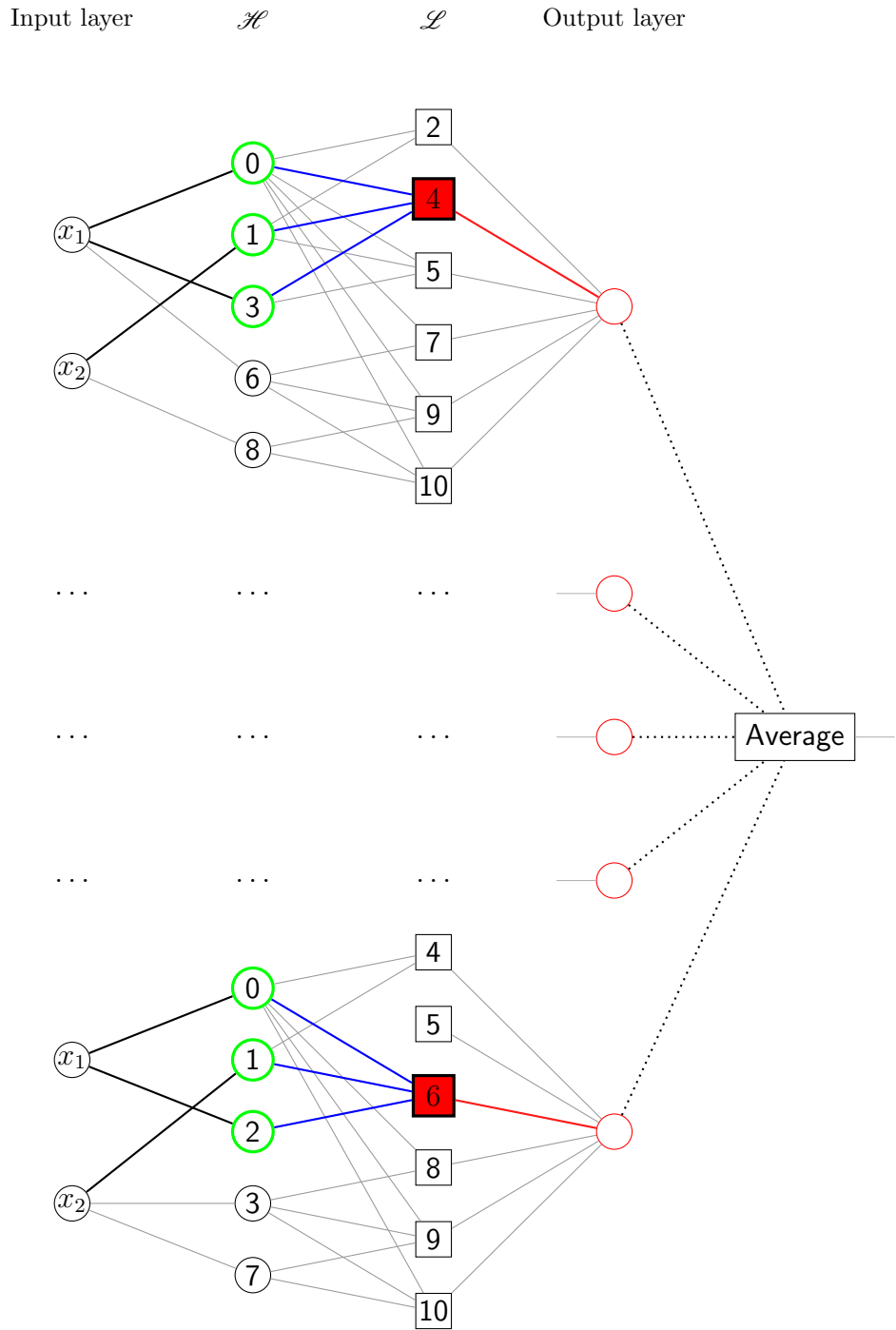
\begin{figure}
\def\layersep{2.5cm}
\def\HLOneN{5}
\def\HLTwoN{6}
\def\nFeats{2}

\begin{tikzpicture}[shorten >=0pt,-,draw=black, node distance=\layersep]
    \tikzstyle{every pin edge}=[-, draw = black!30]
    \tikzstyle{neuron}=[circle,  draw=black,
    fill=white, minimum size=14pt,  inner sep=0pt,  font=\sffamily]

    \tikzstyle{input neuron}=[neuron];
    \tikzstyle{branch neuron}=[neuron, draw=green, very thick, fill = white, minimum size = 16pt, inner sep = 2pt]
    \tikzstyle{invisible neuron}=[circle, draw = white]
    
    \tikzstyle{leaf neuron}=[rectangle,  draw=black,
    fill=white, minimum size=14pt,  inner sep=0pt,  font=\sffamily]

    \tikzstyle{special leaf neuron} = [rectangle, draw=black, very thick, fill = red, minimum size = 16pt, inner sep = 2pt]
    \tikzstyle{output neuron}=[circle,  draw=red,
    fill=white, minimum size=14pt,  inner sep=0pt,  font=\sffamily];
        \tikzstyle{output neuron bis}==[rectangle,  draw=black,
    fill=white, minimum size=14pt,  inner sep=4pt,  font=\sffamily];

    \tikzstyle{hidden neuron}=[neuron];
    \tikzstyle{annot} = [text width=8em, text centered, font = \footnotesize]

    \foreach \name / \y in {1}
        \node[input neuron] (I-\name) at (\layersep,-2*\y) {$x_{\y}$};
    \foreach \name / \y in {2}
        \node[input neuron] (I-\name) at (\layersep,-3.9) {$x_{\y}$};


    \foreach \name / \y in {1}
        \path[yshift=0.5cm]
         node[branch neuron] (HL1-\name) at (2*\layersep,-\y cm-0.5 cm) { 0};
    
    \foreach \name / \y in {2}
        \path[yshift=0.5cm]
         node[branch neuron] (HL1-\name) at (2*\layersep,-\y cm-0.5 cm) { 1};
    

	\foreach \name / \y in {3}
        \path[yshift=0.5cm]
           node[branch neuron] (HL1-\name) at (2*\layersep,-\y cm -0.5 cm) {3};
           
    \foreach \name / \y in {4}
        \path[yshift=0.5cm]
           node[hidden neuron] (HL1-\name) at (2*\layersep,-\y cm -0.5 cm) {6};
    \foreach \name / \y in {5}
        \path[yshift=0.5cm]
           node[hidden neuron] (HL1-\name) at (2*\layersep,-\y cm -0.5 cm) {8};


  \foreach \name / \y in {1}
        \path[yshift=0.5cm]
            node[leaf neuron] (HL2-\name) at (3*\layersep,-\y cm) {2};
            
	
	\foreach \name / \y in {2}
        \path[yshift=0.5cm]
            node[special leaf neuron] (HL2-\name) at (3*\layersep,-\y cm) {4};
     
    \foreach \name / \y in {3}
        \path[yshift=0.5cm]
            node[leaf neuron] (HL2-\name) at (3*\layersep,-\y cm) {5};
 \foreach \name / \y in {4}
        \path[yshift=0.5cm]
            node[leaf neuron] (HL2-\name) at (3*\layersep,-\y cm) {7};
 \foreach \name / \y in {5}
        \path[yshift=0.5cm]
            node[leaf neuron] (HL2-\name) at (3*\layersep,-\y cm) {9};
 \foreach \name / \y in {6}
        \path[yshift=0.5cm]
            node[leaf neuron] (HL2-\name) at (3*\layersep,-\y cm) {10};

     \node[output neuron] (O1) at (4*\layersep,-3 cm)  {};

   
    \path (I-1) edge (HL1-1)[black, thick];
    \path (I-1) edge (HL1-3)[black, thick];
    \path (I-1) edge (HL1-4)[black!40];
    \path (I-2) edge (HL1-2)[black, thick];
    \path (I-2) edge (HL1-5)[black!40];



\foreach \source in {1,2}
	 \path (HL1-\source) edge (HL2-1)[black!40];
\foreach \source in {1,2,3}
	 \path (HL1-\source) edge (HL2-3)[black!40];
\foreach \source in {1,4}
	 \path (HL1-\source) edge (HL2-4)[black!40];
\foreach \source in {1,4,5}
	 \path (HL1-\source) edge (HL2-5)[black!40];
\foreach \source in {1,4,5}
	 \path (HL1-\source) edge (HL2-6)[black!40];

   \foreach \source in {1,2,3}
	 \path (HL1-\source) edge (HL2-2)[blue, thick];

    \foreach \source in {1,...,\HLTwoN}
        \path (HL2-\source) edge (O1)[black!40];

    \path (HL2-2) edge (O1)[red!90, thick];



\foreach \name / \y in {1,...,\nFeats}
        \node[input neuron] (Ib-\name) at (\layersep,-11.5-2*\y) {$x_{\y}$};

 \foreach \name / \y in {1}
        \path[yshift=0.5cm]
         node[branch neuron] (HL1b-\name) at (2*\layersep,-12- \y) { 0};

\foreach \name / \y in {2}
        \path[yshift=0.5cm]
         node[branch neuron] (HL1b-\name) at (2*\layersep,-12- \y) { 1};

	\foreach \name / \y in {3}
        \path[yshift=0.5cm]
           node[branch neuron] (HL1b-\name) at (2*\layersep,-12 -\y ) {2};
           
    \foreach \name / \y in {4}
        \path[yshift=0.5cm]
           node[hidden neuron] (HL1b-\name) at (2*\layersep,-12 -\y) {3};
    \foreach \name / \y in {5}
        \path[yshift=0.5cm]
           node[hidden neuron] (HL1b-\name) at (2*\layersep,-12 -\y) {7};

\foreach \name / \y in {1}
        \path[yshift=0.5cm]
            node[leaf neuron] (HL2b-\name) at (3*\layersep,-11.5 -\y ) {4};
         
	\foreach \name / \y in {2}
        \path[yshift=0.5cm]
            node[leaf neuron] (HL2b-\name) at (3*\layersep,-11.5-\y ) {5};
     
    \foreach \name / \y in {3}
        \path[yshift=0.5cm]
            node[special leaf neuron] (HL2b-\name) at (3*\layersep,-11.5-\y ) {6};
 \foreach \name / \y in {4}
        \path[yshift=0.5cm]
            node[leaf neuron] (HL2b-\name) at (3*\layersep,-11.5-\y ) {8};
 \foreach \name / \y in {5}
        \path[yshift=0.5cm]
            node[leaf neuron] (HL2b-\name) at (3*\layersep,-11.5-\y ) {9};
 \foreach \name / \y in {6}
        \path[yshift=0.5cm]
            node[leaf neuron] (HL2b-\name) at (3*\layersep,-11.5-\y ) {10};

     \node[output neuron] (O1b) at (4*\layersep,-14.5)  {};

    \path (Ib-1) edge (HL1b-1)[black, thick];
    \path (Ib-1) edge (HL1b-3)[black, thick];
    \path (Ib-2) edge (HL1b-4)[black!40];
    \path (Ib-2) edge (HL1b-2)[black, thick];
    \path (Ib-2) edge (HL1b-5)[black!40];

\foreach \source in {1,2}
	 \path (HL1b-\source) edge (HL2b-1)[black!40];
\foreach \source in {1,2,3}
	 \path (HL1b-\source) edge (HL2b-3)[black!40];
\foreach \source in {1,4}
	 \path (HL1b-\source) edge (HL2b-4)[black!40];
\foreach \source in {1,4,5}
	 \path (HL1b-\source) edge (HL2b-5)[black!40];
\foreach \source in {1,4,5}
	 \path (HL1b-\source) edge (HL2b-6)[black!40];

   \foreach \source in {1,2,3}
	 \path (HL1b-\source) edge (HL2b-3)[blue, thick];

    \foreach \source in {1,...,\HLTwoN}
        \path (HL2b-\source) edge (O1b)[black!40];

    \path (HL2b-3) edge (O1b)[red!90, thick];

    \node[annot,above of=HL1-1, node distance=2cm] (hl1) { $\mathscr{H}$};
   \node[annot,left of=hl1, node distance=2.5cm] {Input layer};
    \node[annot,right of=hl1](hl2){ $\mathscr{L}$};

    \node[annot,right of=hl2, node distance=2.5cm](hl3){Output layer};

     \node[output neuron bis,pin={[pin edge={-}]right:}] (O1c) at (5*\layersep,-9)  {Average};

     \node[output neuron,pin={[pin edge={-}]left:}] (O1d) at (4*\layersep,-7 cm)  {};
     \node[output neuron,pin={[pin edge={-}]left:}] (O1e) at (4*\layersep,-11 cm)  {};
     \node[output neuron,pin={[pin edge={-}]left:}] (O1f) at (4*\layersep,-9 cm)  {};

     \node[invisible neuron] (HOd) at (\layersep,-7 cm)  {$\hdots$};
     \node[invisible neuron] (H1d) at (2*\layersep,-7 cm)  {$\hdots$};
     \node[invisible neuron] (H2d) at (3*\layersep,-7 cm)  {$\hdots$};

     \node[invisible neuron] (HOe) at (\layersep,-11 cm)  {$\hdots$};
     \node[invisible neuron] (H1e) at (2*\layersep,-11 cm)  {$\hdots$};
     \node[invisible neuron] (H2e) at (3*\layersep,-11 cm)  {$\hdots$};

     \node[invisible neuron] (HOf) at (\layersep,-9 cm)  {$\hdots$};
     \node[invisible neuron] (H1f) at (2*\layersep,-9 cm)  {$\hdots$};
     \node[invisible neuron] (H2f) at (3*\layersep,-9 cm)  {$\hdots$};

	 \path (O1d) edge (O1c)[black, thick, dotted];
	 \path (O1e) edge (O1c)[black, thick, dotted];
	 \path (O1f) edge (O1c)[black, thick, dotted];
	 \path (O1) edge (O1c)[black, thick, dotted];
	 \path (O1b) edge (O1c)[black, thick, dotted];

\end{tikzpicture}
\caption{Method 1: Independent training.}
\label{fig_method1}
\end{figure}
{\bf Method 2: Joint training.} In this approach, the individual tree networks are first concatenated into one single ``big'' network, as shown in Figure \ref{fig_method2}. The parameters of the resulting ``big'' network are then fitted jointly in one optimization procedure over the whole network.
\begin{figure}
\hspace{-2cm}
\centering
\def\layersep{2.5cm}
\def\HLOneN{5}
\def\HLTwoN{6}
\def\nFeats{2}

\begin{tikzpicture}[shorten >=0pt,-,draw=black, node distance=\layersep]
    \tikzstyle{every pin edge}=[-, draw = black!30]
    \tikzstyle{neuron}=[circle,  draw=black,
    fill=white, minimum size=14pt,  inner sep=0pt,  font=\sffamily]

    \tikzstyle{input neuron}=[neuron];
    \tikzstyle{branch neuron}=[neuron, draw=green, very thick, fill = white, minimum size = 16pt, inner sep = 2pt]
    \tikzstyle{invisible neuron}=[circle, draw = white]
    
    \tikzstyle{leaf neuron}=[rectangle,  draw=black,
    fill=white, minimum size=14pt,  inner sep=0pt,  font=\sffamily]

    \tikzstyle{special leaf neuron} = [rectangle, draw=black, very thick, fill = red, minimum size = 16pt, inner sep = 2pt]
    \tikzstyle{output neuron}=[neuron];
    \tikzstyle{hidden neuron}=[neuron];
    \tikzstyle{annot} = [text width=8em, text centered, font = \footnotesize]

    \foreach \name / \y in {1,...,\nFeats}
        \node[input neuron] (I-\name) at (\layersep,-6-2*\y) {$x_{\y}$};


    \foreach \name / \y in {1}
        \path[yshift=0.5cm]
         node[branch neuron] (HL1-\name) at (2*\layersep,-\y cm-0.5 cm) { 0};
         
       \foreach \name / \y in {2}
        \path[yshift=0.5cm]
         node[branch neuron] (HL1-\name) at (2*\layersep,-\y cm-0.5 cm) { 1};

	\foreach \name / \y in {3}
        \path[yshift=0.5cm]
           node[branch neuron] (HL1-\name) at (2*\layersep,-\y cm -0.5 cm) {3};
           
    \foreach \name / \y in {4}
        \path[yshift=0.5cm]
           node[hidden neuron] (HL1-\name) at (2*\layersep,-\y cm -0.5 cm) {6};
    \foreach \name / \y in {5}
        \path[yshift=0.5cm]
           node[hidden neuron] (HL1-\name) at (2*\layersep,-\y cm -0.5 cm) {8};


  \foreach \name / \y in {1}
        \path[yshift=0.5cm]
            node[leaf neuron] (HL2-\name) at (3*\layersep,-\y cm) {2};
            
	
	\foreach \name / \y in {2}
        \path[yshift=0.5cm]
            node[special leaf neuron] (HL2-\name) at (3*\layersep,-\y cm) {4};
     
    \foreach \name / \y in {3}
        \path[yshift=0.5cm]
            node[leaf neuron] (HL2-\name) at (3*\layersep,-\y cm) {5};
 \foreach \name / \y in {4}
        \path[yshift=0.5cm]
            node[leaf neuron] (HL2-\name) at (3*\layersep,-\y cm) {7};
 \foreach \name / \y in {5}
        \path[yshift=0.5cm]
            node[leaf neuron] (HL2-\name) at (3*\layersep,-\y cm) {9};
 \foreach \name / \y in {6}
        \path[yshift=0.5cm]
            node[leaf neuron] (HL2-\name) at (3*\layersep,-\y cm) {10};

     \node[output neuron, pin={[pin edge={-}]right:}] (O1) at (5*\layersep,-9 cm)  {};

   
    \path (I-1) edge (HL1-1)[black, thick];
    \path (I-1) edge (HL1-3)[black, thick];
    \path (I-1) edge (HL1-4)[black!40];
    \path (I-2) edge (HL1-2)[black, thick];
    \path (I-2) edge (HL1-5)[black!40];



\foreach \source in {1,2}
	 \path (HL1-\source) edge (HL2-1)[black!40];
\foreach \source in {1,2,3}
	 \path (HL1-\source) edge (HL2-3)[black!40];
\foreach \source in {1,4}
	 \path (HL1-\source) edge (HL2-4)[black!40];
\foreach \source in {1,4,5}
	 \path (HL1-\source) edge (HL2-5)[black!40];
\foreach \source in {1,4,5}
	 \path (HL1-\source) edge (HL2-6)[black!40];

   \foreach \source in {1,2,3}
	 \path (HL1-\source) edge (HL2-2)[blue, thick];

    \foreach \source in {1,...,\HLTwoN}
        \path (HL2-\source) edge (O1)[black!40];

    \path (HL2-2) edge (O1)[red!90, thick];



 \foreach \name / \y in {1}
        \path[yshift=0.5cm]
         node[branch neuron] (HL1b-\name) at (2*\layersep,-12- \y) { 0};
 
 \foreach \name / \y in {2}
        \path[yshift=0.5cm]
         node[branch neuron] (HL1b-\name) at (2*\layersep,-12- \y) { 1};

	\foreach \name / \y in {3}
        \path[yshift=0.5cm]
           node[branch neuron] (HL1b-\name) at (2*\layersep,-12 -\y ) {2};
           
    \foreach \name / \y in {4}
        \path[yshift=0.5cm]
           node[hidden neuron] (HL1b-\name) at (2*\layersep,-12 -\y) {3};
    \foreach \name / \y in {5}
        \path[yshift=0.5cm]
           node[hidden neuron] (HL1b-\name) at (2*\layersep,-12 -\y) {7};

\foreach \name / \y in {1}
        \path[yshift=0.5cm]
            node[leaf neuron] (HL2b-\name) at (3*\layersep,-11.5 -\y ) {4};
         
	\foreach \name / \y in {2}
        \path[yshift=0.5cm]
            node[leaf neuron] (HL2b-\name) at (3*\layersep,-11.5-\y ) {5};
     
    \foreach \name / \y in {3}
        \path[yshift=0.5cm]
            node[special leaf neuron] (HL2b-\name) at (3*\layersep,-11.5-\y ) {6};
 \foreach \name / \y in {4}
        \path[yshift=0.5cm]
            node[leaf neuron] (HL2b-\name) at (3*\layersep,-11.5-\y ) {8};
 \foreach \name / \y in {5}
        \path[yshift=0.5cm]
            node[leaf neuron] (HL2b-\name) at (3*\layersep,-11.5-\y ) {9};
 \foreach \name / \y in {6}
        \path[yshift=0.5cm]
            node[leaf neuron] (HL2b-\name) at (3*\layersep,-11.5-\y ) {10};


    \path (I-1) edge (HL1b-1)[black, thick];
    \path (I-1) edge (HL1b-3)[black, thick];
    \path (I-2) edge (HL1b-4)[black!40];
    \path (I-2) edge (HL1b-2)[black, thick];
    \path (I-2) edge (HL1b-5)[black!40];

\foreach \source in {1,2}
	 \path (HL1b-\source) edge (HL2b-1)[black!40];
\foreach \source in {1,2,3}
	 \path (HL1b-\source) edge (HL2b-3)[black!40];
\foreach \source in {1,4}
	 \path (HL1b-\source) edge (HL2b-4)[black!40];
\foreach \source in {1,4,5}
	 \path (HL1b-\source) edge (HL2b-5)[black!40];
\foreach \source in {1,4,5}
	 \path (HL1b-\source) edge (HL2b-6)[black!40];

   \foreach \source in {1,2,3}
	 \path (HL1b-\source) edge (HL2b-3)[blue, thick];

    \foreach \source in {1,...,\HLTwoN}
        \path (HL2b-\source) edge (O1)[black!40];

    \path (HL2b-3) edge (O1)[red!90, thick];

    \node[annot,above of=HL1-1, node distance=1.5cm] (hl1) { $\mathscr{H}$};
   \node[annot,left of=hl1, node distance=2.5cm] {Input layer};
    \node[annot,right of=hl1](hl2){ $\mathscr{L}$};
        \node[annot,right of=hl2, node distance=5cm](hl3){Output layer};





     \node[invisible neuron] (H1d) at (2*\layersep,-7 cm)  {$\hdots$};
     \node[invisible neuron] (H2d) at (3*\layersep,-7 cm)  {$\hdots$};

     \node[invisible neuron] (H1e) at (2*\layersep,-11 cm)  {$\hdots$};
     \node[invisible neuron] (H2e) at (3*\layersep,-11 cm)  {$\hdots$};

     \node[invisible neuron] (H1f) at (2*\layersep,-9 cm)  {$\hdots$};
     \node[invisible neuron] (H2f) at (3*\layersep,-9 cm)  {$\hdots$};


\end{tikzpicture}
\caption{Method 2: Joint training.}
\label{fig_method2}
\end{figure}
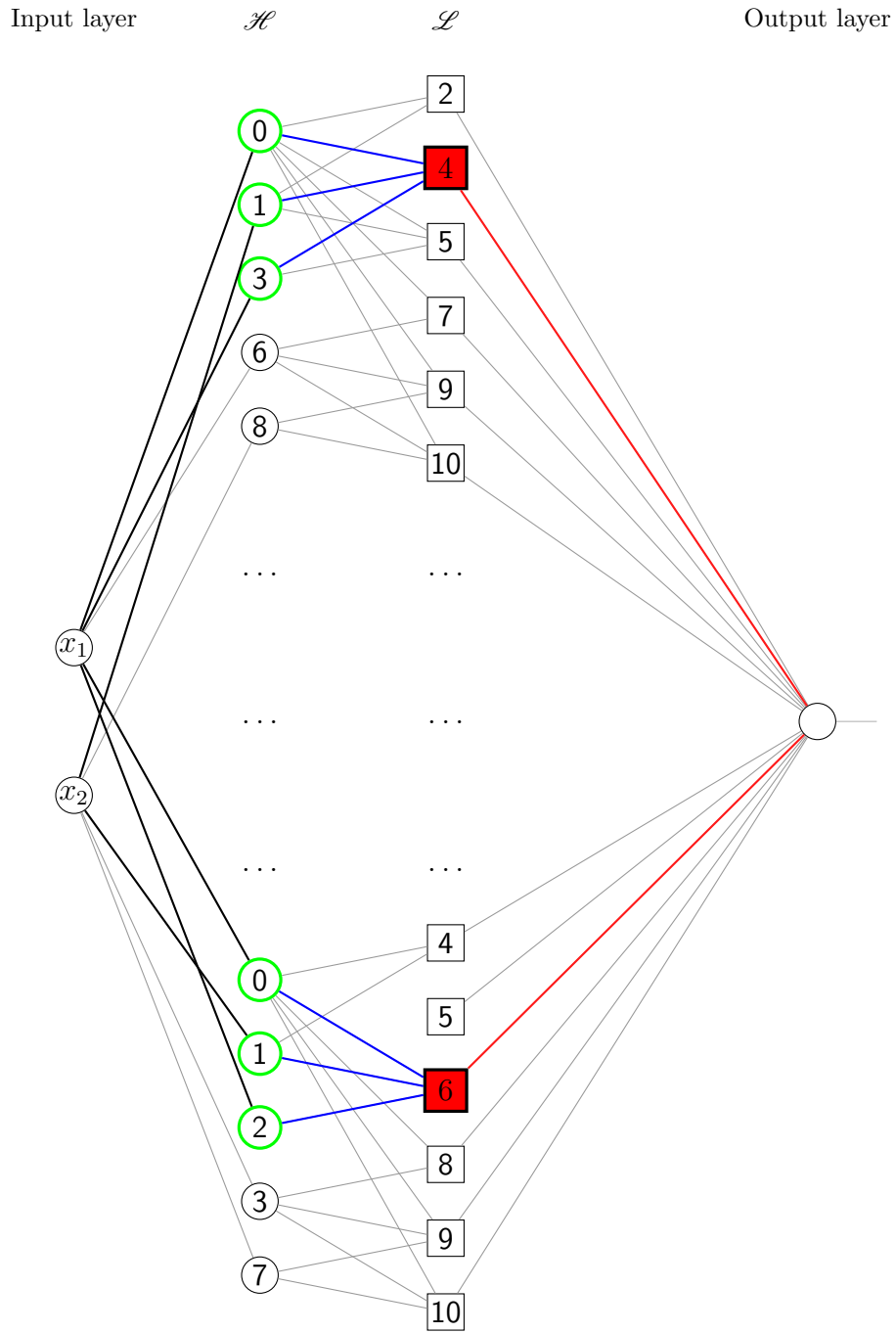

Although the hidden layers of the ``small'' networks are not connected across the sections belonging to each tree, this algorithm has two main differences with the previous one: 
\begin{enumerate}[$(i)$]
\item The output layer, which is shared by all ``small'' networks, computes a combination of {\it all} outputs of the second-layer neurons. 
\item The optimization is performed in one single run over the whole ``big'' network, and not network by network. Assuming that the trees are balanced, the first method performs, on average, $M$ different optimization programs in a space of $\mathscr O(K_n\log K_n)$ parameters, whereas the second one accomplishes only one minimization in a space of average dimension $\mathscr O(MK_n\log K_n)$.
\end{enumerate}
In the sequel, we let $s_{M,n}(\bx)$ be the regression function estimate corresponding to the second method. We note that this estimate still depends upon $\Theta_1, \hdots, \Theta_M$ and $\mathscr D_n$, but is {\it not} of the averaging form (\ref{average}). It will be more formally described in the next section.
\begin{rem}
The approach presented in this paper is based on the interpretation of trees as two-layer neural networks. This link does not immediately extend to more complex networks, especially those with more than two layers. One idea would be to artificially increase the depth of the networks by adding a collection of layers such that each newly added layer leaves the previous one intact. Nevertheless, such an architecture would probably be very difficult to optimize because of the numerous connections playing the same role.
\end{rem}
 \section{Some theory}
\subsection{Empirical risk minimization}
We now describe  in full detail the  construction of the two regression function estimates $r_{M,n}$ ({\bf independent training}) and $s_{M,n}$ ({\bf joint training}).

{\bf Method 1: Independent training.} Consider the $m$-th random tree in the ensemble and denote by $\mathscr G_1\equiv \mathscr G_1(\Theta_m,\mathscr D_n)$ the bipartite graph modeling the connections between the vector of inputs $\bx=(x^{(1)}, \hdots, x^{(d)})$ and the $K_n-1$ hidden neurons of the first layer. Similarly, let $\mathscr G_2\equiv \mathscr G_2(\Theta_m,\mathscr D_n)$ be the bipartite graph representing the connections between the first layer and the $K_n$ hidden neurons of the second layer.

Let $\mathbb M(\mathscr G_1)$ be the set of $d \times (K_n-1)$ matrices ${\bf W}_{1}=(a_{ij})$ such that  $a_{ij}=0$ if $(i,j)\notin \mathscr G_1$, and let $\mathbb M(\mathscr G_2)$ be the $(K_n-1)\times K_n$ matrices ${\bf W}_{2}=(b_{ij})$ such that  $b_{ij}=0$ if $(i,j)\notin \mathscr G_2$. The parameters that specify the first hidden units are encapsulated in a matrix ${\bf W}_{1}$ of $\mathbb M(\mathscr G_1)$ of weights over the edges of $\mathscr G_1$ and by a column vector of biases ${\bf b}_{1}$, of size $K_n-1$. Similarly, the parameters of the second hidden units are represented by a matrix ${\bf W}_{2}$ of $\mathbb M(\mathscr G_2)$ of weights over $\mathscr G_2$ and by a column vector ${\bf b}_{2}$ of offsets, of size $K_n$. Finally, we let the output weights and offset be ${\bf W}_{\!\textrm{out}}=(w_1, \hdots, w_{K_n})^{\top} \in \mathds R^{K_n}$ and ${b}_{\textrm{out}} \in \mathds R$, respectively ($\top$ denotes transposition and vectors are in column format).

Thus, the parameters that specify the $m$-th network are represented by a ``vector''
\begin{align*}
{\boldsymbol \lambda}=({\bf W}_{1},{\bf b}_{1},& {\bf W}_{2},{\bf b}_{2},{\bf W}_{\!\textrm{out}},{b}_{\textrm{out}}) \\
& \in \mathbb M(\mathscr G_1) \times \mathds R^{K_n-1}\times \mathbb M(\mathscr G_2) \times \mathds R^{K_n}\times \mathds R^{K_n}\times \mathds R.
\end{align*}
However, in order to obtain consistency we have to restrict the range of variation for these parameters. For a given matrix $M$, the notation $|M|$ means the matrix of absolute values of the entries of $M$. We assume that there exists a positive constant $C_1$ such that 
\begin{equation}
\label{cpoids}
\|{\bf W}_{2}\|_{\infty}+\|{\bf b}_{2}\|_{\infty}+\|{\bf W}_{\!\textrm{out}}\|_{1}+|{b}_{\textrm{out}}| \leq C_1 K_n,
\end{equation}
where $\|\cdot\|_{\infty}$ denotes the supremum norm of matrices and $\|\cdot\|_{1}$ is the $L_1$-norm of vectors. In other words, it is basically assumed that the weights and offsets (resp., the sum of absolute values of the weights and  the offset) absorbed by the computation units of the second layer (resp., the output layer) are at most of the magnitude of $K_n$.  We emphasize that this requirement is mild and that it leaves a lot of freedom for optimizing the parameters. We note in particular that it is satisfied by the original random tree estimates as soon as $Y$ is almost surely bounded, with the choice $C_1=(\frac{3}{2}+\|Y\|_{\infty})$ ($\|Y\|_{\infty}$ is the essential supremum of $Y$).

Therefore, letting
$$\Lambda(\Theta_m,\mathscr D_n)=\big\{{\boldsymbol \lambda}=({\bf W}_{1},{\bf b}_{1},{\bf W}_{2},{\bf b}_{2},{\bf W}_{\!\textrm{out}},{b}_{\textrm{out}}): (\ref{cpoids})\mbox{ is satisfied}\big\},$$
we see that the $m$-th neural network implements functions of the form
$$f_{{\boldsymbol \lambda}}(\bx)={\bf W}_{\!\textrm{out}}^{\top}\sigma_2\Big({\bf W}_{2}^{\top} \sigma_1({\bf W}_{1}^{\!\top}\bx+{\bf b}_{1})+{\bf b}_{2}\Big)+{b}_{\textrm{out}},\quad \bx \in \mathbb R^d,$$
where $\lambda \in \Lambda(\Theta_m,\mathscr D_n)$, and $\sigma_1$ and $\sigma_2$ are applied element-wise. Our aim is to adjust the parameter ${\boldsymbol \lambda}$ using the data $\mathscr D_n$ such that the function realized by the obtained network is a good estimate of $r$. Let
 $$\mathscr F(\Theta_m,\mathscr D_n)=\big\{f_{{\boldsymbol \lambda}}: {\boldsymbol \lambda} \in \Lambda(\Theta_m,\mathscr D_n)\big\}.$$
For each $m\in\{1, \hdots, M\}$, our algorithm constructs a regression function estimate $r(\cdot \,;\Theta_m,\mathscr D_n)$ by minimizing the empirical error 
$$J_n(f)=\frac{1}{n}\sum_{i=1}^n\big|Y_i-f(\bX_i)\big|^2$$
over functions $f$ in $\mathscr F(\Theta_m,\mathscr D_n)$, that is
$$J_n\big(r(\cdot \,;\Theta_m,\mathscr D_n)\big)\leq J_n(f)\quad \mbox{for all } f\in \mathscr F(\Theta_m,\mathscr D_n).$$
\begin{rem}
Here we assumed the existence of a minimum, though not necessarily its uniqueness. In cases where a minimum does not exist, the same analysis can be carried out with functions whose error is arbitrarily close to the infimum, but for the sake of simplicity we stay with the assumption of existence throughout the paper. Note also that we do not investigate the properties of the gradient descent algorithm used in Section $5$, and assume instead that the global minimum (if it exists) can be computed. 
\end{rem}
By repeating this minimization process for each $m\in \{1, \hdots, M\}$, we obtain a collection of (randomized) estimates $r(\cdot \,;\Theta_1,\mathscr D_n), \hdots, r(\cdot \,;\Theta_M,\mathscr D_n)$, which are aggregated to form the estimate 
\begin{equation*}
r_{M,n}(\bx)=\frac{1}{M}\sum_{m=1}^M r(\bx \,; \Theta_m,\mathscr D_n).
\end{equation*}
The estimate $r_{M,n}$ is but a generalization of the random forest estimate $t_{M,n}$ to the neural network framework, with an additional relaxation of crisp to fuzzy tree node membership due to the hyperbolic tangent activation functions: samples not merely fall into one direction per split and one final leaf but simultaneously into several tree branches and leaves.

{\bf Method 2: Joint training.} The notation needed to describe the second approach is a bit burdensome, but the ideas are simple. Following the above, we denote by $\mathscr G_{1,1}, \hdots, \mathscr G_{1,M}$ and $\mathscr G_{2,1}, \hdots, \mathscr G_{2,M}$ the bipartite graphs associated with the $M$ ``small'' original random trees. We also let ${\bf W}_{1,1},\hdots, {\bf W}_{1,M}$ and ${\bf b}_{1,1},\hdots, {\bf b}_{1,M}$ be the respective weight matrices and offset vectors of the first hidden layers of the $M$ ``small'' networks, with ${\bf W}_{1,m} \in \mathbb M(\mathscr G_{1,m})$ and ${\bf b}_{1,m} \in \mathbb R^{K_n-1}$, $1\leq m \leq M$. Similarly, we denote by ${\bf W}_{2,1},\hdots, {\bf W}_{2,M}$ and ${\bf b}_{2,1},\hdots, {\bf b}_{2,M}$ the respective weight matrices and offset vectors of the second layer, with ${\bf W}_{2,m} \in \mathbb M(\mathscr G_{2,m})$ and ${\bf b}_{2,m} \in \mathbb R^{K_n}$, $1\leq m\leq M$.

Next, we form the concatenated matrices $[{\bf W}_{1}]$, $[{\bf b}_{1}]$, $[{\bf W}_{2}]$, and $[{\bf b}_{2}]$, defined by
\begin{equation*}
[{\bf W}_{1}]=
\begin{pmatrix}
   {\bf W}_{1,1} & \cdots &{\bf W}_{1,M}
\end{pmatrix},
\quad
[{\bf b}_{1}]=
\begin{pmatrix}
   {\bf b}_{1,1} \\
   \vdots\\
   {\bf b}_{1,M}
\end{pmatrix},
\end{equation*}
and
\begin{equation*}
[{\bf W}_{2}]=
\begin{pmatrix}
  {\bf W}_{2,1} & {\bf 0}&\cdots &{\bf 0}& {\bf 0}\\
{\bf 0}  &{\bf W}_{2,2}& \cdots &{\bf 0} &{\bf 0}\\
  \vdots  &\vdots&\ddots&\vdots& \vdots\\
   {\bf 0}  &{\bf 0} &\hdots&{\bf W}_{2,M-1}&{\bf 0}\\
  {\bf 0} & {\bf 0} &\cdots&{\bf 0} &{\bf W}_{2,M}
\end{pmatrix},
\quad
[{\bf b}_{2}]=
\begin{pmatrix}
   {\bf b}_{2,1} \\
   \vdots\\
   {\bf b}_{2,M}
\end{pmatrix}.
\end{equation*}
Notice that $[{\bf W}_{1}]$, $[{\bf b}_{1}]$, $[{\bf W}_{2}]$, and $[{\bf b}_{2}]$ are of size $d\times M(K_n-1)$, $M(K_n-1)\times 1$, $M(K_n-1)\times MK_n$, and $MK_n\times 1$, respectively. Let us finally denote by ${\bf W}_{\!\textrm{out}}\in \mathds R^{MK_n}$ and ${b}_{\textrm{out}} \in \mathbb R$ the output weights and offset of the concatenated network.  All in all, the parameters of the network are represented by a ``vector''
$$[{\boldsymbol \lambda}]=\big([{\bf W}_{1}], [{\bf b}_{1}], [{\bf W}_{2}], [{\bf b}_{2}],{\bf W}_{\!\textrm{out}},{b}_{\textrm{out}}\big),$$
where $[{\bf W}_{1}]$, $[{\bf b}_{1}]$, $[{\bf W}_{2}]$, and $[{\bf b}_{2}]$ are defined above. As in the first method, we restrict the range of variation of these parameters and assume that there exists a positive constant $C_2$ such that 
\begin{equation}
\label{cpoids2}
\big\|[{\bf W}_{2}]\big\|_{\infty}+\big\|[{\bf b}_{2}]\big\|_{\infty}+\|{\bf W}_{\!\textrm{out}}\|_{1}+|{b}_{\textrm{out}}|  \leq C_2 K_n.
\end{equation}
Therefore, letting
\begin{align*}
    \Lambda(\Theta_1, & \hdots,  \Theta_M,\mathscr D_n)\\
& =\big\{[{\boldsymbol \lambda}]=\big([{\bf W}_{1}],[{\bf b}_{1}],[{\bf W}_{2}],[{\bf b}_{2}],{\bf W}_{\!\textrm{out}},{b}_{\textrm{out}}\big): (\ref{cpoids2})\mbox{ is satisfied}\big\},
\end{align*}
the ``big'' network implements functions of the form
$$f_{[{\boldsymbol \lambda}]}(\bx)={\bf W}_{\!\textrm{out}}^{\top}\sigma_2\Big([{\bf W}_{2}]^{\top} \sigma_1\big([{\bf W}_{1}]^{\!\top}\bx+[{\bf b}_{1}]\big)+[{\bf b}_{2}]\Big)+{b}_{\textrm{out}},\quad \bx\in \mathbb R^d,$$
where $[\boldsymbol \lambda]\in \Lambda(\Theta_1, \hdots, \Theta_M,\mathscr D_n)$, and $\sigma_1$ and $\sigma_2$ are applied element-wise. Next, let
 $$\mathscr F(\Theta_1, \hdots, \Theta_M,\mathscr D_n)=\big\{f_{[{\boldsymbol \lambda}]}: {[\boldsymbol \lambda]} \in \Lambda(\Theta_1, \hdots, \Theta_M,\mathscr D_n)\big\}.$$
Then the final estimate $s_{M,n}$ is obtained by minimizing the empirical error 
$$J_n(f)=\frac{1}{n}\sum_{i=1}^n\big|Y_i-f(\bX_i)\big|^2$$
over functions $f$ in $\mathscr F(\Theta_1, \hdots, \Theta_M,\mathscr D_n)$, that is
$$J_n(s_{M,n})\leq J_n(f)\quad \mbox{for all } f\in \mathscr F(\Theta_1, \hdots, \Theta_M,\mathscr D_n).$$
\subsection{Consistency}
In order to analyze the consistency properties of the regression function estimates $r_{M,n}$ and $s_{M,n}$, we first need to consider some specific class $\mathscr F$ of functions over $[0,1]^d$. It is defined as follows.

For a hyperrectangle $A = [a_1, b_1] \times \cdots \times [a_d, b_d] \subseteq [0,1]^d$, we let $A^{\backslash j} = \prod_{i \neq j} [a_i, b_i]$ and $\textrm{d} \bx^{\backslash j} = \textrm{d}x_1 \hdots \textrm{d}x_{j-1} \textrm{d}x_{j+1} \hdots \textrm{d}x_d$. Assume we are given a measurable function $f:[0,1]^d\to \mathbb R$ together with $A = [a_1, b_1] \times \cdots \times [a_d, b_d] \subseteq [0,1]^d$, and consider the following two statements:
\begin{itemize}
\item[$(i)$] For any $j \in \{1, \hdots, d\}$, the function 
\begin{align*}
x_j \mapsto \int_{A^{\backslash j}} f(\bx) \textrm{d} \bx^{\backslash j}
\end{align*}
is constant on $[a_j,b_j]$; 
\item[$(ii)$] The function $f$ is constant on $A$. 
\end{itemize}
\begin{defi}
\label{proper_reg_funct}
We let $\mathscr{F}$ be the class of continuous real functions on $[0,1]^d$ such that, for any $A = [a_1, b_1] \times \cdots \times [a_d, b_d] \subseteq [0,1]^d$, $(i)$ implies $(ii)$.
\end{defi}
Although the membership requirement for $\mathscr{F}$ seems at first glance a bit restrictive, it turns out that $\mathscr{F}$ is in fact a rich class of functions. For example, additive functions of the form 
\begin{align*}
f(\bx) = \sum_{j=1}^d f_j(x^{(j)}),
\end{align*}
where each $f_j$ is continuous, do belong to $\mathscr F$. This is also true for polynomial functions whose coefficients have the same sign. Also, product of continuous functions of the form
\begin{align*}
f(\bx) = \prod_{j=1}^d f_j(x^{(j)}),
\end{align*}
where, for all $j \in \{1, \hdots, d\}$, [$f_j >0$ or $f_j <0$], are included in $\mathscr F$.

Our main theorem states that the neural forest estimates $r_{M,n}$ and $s_{M,n}$ are consistent, provided the number $K_n$ of terminal nodes and the parameters $\gamma_1$ and $\gamma_2$ are properly regulated as functions of $n$.
\begin{theo}[{\bf Consistency of $r_{M,n}$ and $s_{M,n}$}]
\label{maintheo1}
Assume that $\bX$ is uniformly distributed in $[0,1]^d$,  $\|Y\|_{\infty}<\infty$, and $r \in \mathscr{F}$. Assume, in addition, that $K_n, \gamma_1, \gamma_2 \to\infty$ such that, as $n$ tends to infinity,  
\begin{align*}
    \frac{K_n^6\log (\gamma_2  K_n^5)}{n}\to 0, \quad K_n^{2}e^{-2 \gamma_2} \to 0, \quad \textrm{and} \quad \frac{K_n^4\gamma_2^2\log(\gamma_1)}{\gamma_1}\to 0.
\end{align*}
Then, as $n$ tends to infinity,  
$$\mathds E \big|r_{M,n}(\bX)-r(\bX)\big|^2 \to 0 \quad \mbox{and} \quad\mathds E \big|s_{M,n}(\bX)-r(\bX)\big|^2 \to 0.$$
\end{theo}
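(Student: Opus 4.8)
The plan is to view each of $r_{M,n}$ and $s_{M,n}$ as an empirical risk minimizer over a class of hyperbolic-tangent networks and to control its $L^2$ error through the classical approximation/estimation trade-off. For an estimator $\hat f_n$ minimizing $J_n$ over a class $\mathscr F_n$, and after truncating $\hat f_n$ at level $\|Y\|_{\infty}$ (which can only decrease the error since $|r|\le\|Y\|_{\infty}$),
\[
\mathds E\big|\hat f_n(\bX)-r(\bX)\big|^2\;\le\;\inf_{f\in\mathscr F_n}\mathds E\big|f(\bX)-r(\bX)\big|^2\;+\;2\,\mathds E\sup_{f\in\mathscr F_n}\big|J_n(f)-\mathds E J_n(f)\big|.
\]
For $r_{M,n}$, Jensen's inequality gives $\mathds E|r_{M,n}(\bX)-r(\bX)|^2\le\frac1M\sum_{m=1}^M\mathds E|r(\bX;\Theta_m,\mathscr D_n)-r(\bX)|^2$, so it is enough to treat a single network, i.e.\ $\mathscr F_n=\mathscr F(\Theta_m,\mathscr D_n)$. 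For $s_{M,n}$ the same scheme is applied with $\mathscr F_n=\mathscr F(\Theta_1,\dots,\Theta_M,\mathscr D_n)$; since $M$ is fixed, this ``big'' class has $O(K_n^2)$ parameters up to the constant $M$, and one checks directly that it contains the tanh version of the CART forest $t_{M,n}$ (concatenate the $M$ tree parametrizations and set the shared output weights to $\frac1M$ times the individual ones, which keeps $(\ref{cpoids2})$ satisfied for a suitable $C_2$), so all the bounds below transfer with $K_n$ playing the role of $MK_n=O(K_n)$.

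\textbf{Approximation error.} The natural competitor in $\mathscr F_n$ is the network $\tilde t_n$ obtained by plugging the \emph{original} CART weights and offsets into the tanh architecture; by the remark following $(\ref{cpoids})$ it satisfies the magnitude constraint with $C_1=\tfrac32+\|Y\|_{\infty}$, hence $\tilde t_n\in\mathscr F_n$. Writing $t_n$ for the corresponding threshold network (a single CART tree for $r_{M,n}$, the CART forest for $s_{M,n}$),
\[
\mathds E\big|\tilde t_n(\bX)-r(\bX)\big|^2\;\le\;2\,\mathds E\big|\tilde t_n(\bX)-t_n(\bX)\big|^2\;+\;2\,\mathds E\big|t_n(\bX)-r(\bX)\big|^2.
\]
The second term is the bias of the CART estimate with $K_n$ leaves; under $\bX$ uniform on $[0,1]^d$ and $r\in\mathscr F$ it tends to $0$, which is precisely the consistency content of the CART cell-shrinkage analysis (in the spirit of Scornet, Biau and Vert, 2015) and is the only place where membership of $r$ in $\mathscr F$ is used. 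For the first term one replaces $\tau$ by $\sigma_2=\tanh(\gamma_2\cdot)$ at the second layer and by $\sigma_1=\tanh(\gamma_1\cdot)$ at the first layer, and propagates the discrepancy through the output weights, which have $\ell_1$ norm $O(K_n)$: the second-layer saturation error is of order $K_n e^{-\gamma_2}$, whose square is controlled by $K_n^2 e^{-2\gamma_2}\to0$; a first-layer unit differs noticeably from $\pm1$ only when $\bX$ is within $O(\gamma_1^{-1}\log\gamma_1)$ of one of the $K_n-1$ split hyperplanes, an event of probability $O(K_n\gamma_1^{-1}\log\gamma_1)$ on which the output can move by $O(\gamma_2 K_n)$, which yields the square-summable bound $K_n^4\gamma_2^2\log(\gamma_1)/\gamma_1\to0$.

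\textbf{Estimation error.} It remains to bound $\mathds E\sup_{f\in\mathscr F_n}|J_n(f)-\mathds E J_n(f)|$. Since $\sigma_2$ is bounded by $1$, $(\ref{cpoids})$ forces $|f(\bx)|=O(K_n)$ for every $f\in\mathscr F_n$, so after truncation at $\|Y\|_{\infty}$ the squared loss is bounded by $O(K_n^2)$ and Lipschitz in $f$. The class $\mathscr F_n$ is described by a data-dependent architecture (one of at most $(dK_n)^{O(K_n)}$ possible tree graphs), by $O(K_n^2)$ real weights, and by layer-$1$ outputs living in $[-1,1]^{K_n-1}$; covering the architectures by a union bound, the layer-$1$ outputs, and the layer-$2$/output weights (on which $f$ is Lipschitz with a constant polynomial in $K_n$ and $\gamma_2$), one obtains a metric entropy of order $K_n^2\log(\gamma_2 K_n^5/\varepsilon)$. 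Feeding this into a standard uniform deviation inequality for the squared loss (which brings in a further factor from the $O(K_n^2)$ bound on the loss) gives a deviation of order $\big(K_n^6\log(\gamma_2 K_n^5)/n\big)^{1/2}$, which vanishes exactly under the first hypothesis. Combining the three estimates and letting $n\to\infty$ proves the theorem for both $r_{M,n}$ and $s_{M,n}$.

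\textbf{Main obstacle.} The crux is the approximation step: one has to carry the per-neuron ``tanh versus threshold'' errors through an output layer whose weights are of size $O(K_n)$, so that small errors get amplified, and to balance this against the growth of $K_n$ --- this is exactly what forces the intertwined conditions on $(K_n,\gamma_1,\gamma_2)$, in particular the second and third ones. A second, less visible, difficulty is that the vanishing of $\mathds E|t_n(\bX)-r(\bX)|^2$ is not free: it rests on the nontrivial fact that, when $r\in\mathscr F$, the CART splits eventually shrink every cell in all the directions that matter, and for $r_{M,n}$ this must hold at the level of a \emph{single} tree rather than of the averaged forest.
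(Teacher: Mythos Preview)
Your proposal is correct and closely tracks the paper for $r_{M,n}$ (Jensen, then the approximation/estimation split, covering numbers, and the tanh-versus-threshold comparison). Two tactical differences are worth flagging. First, your approximation competitor is the tanh version of the \emph{empirical} CART tree $t_n$, so you must invoke the full consistency $\mathds E|t_n(\bX)-r(\bX)|^2\to0$; the paper instead plugs in the \emph{population} pseudo-tree $t_{\boldsymbol\lambda^{\star}}$ with leaf values $\mathds E[Y\mid\bX\in L_{k'}]$, so only the pure bias $\mathds E|t_{\boldsymbol\lambda^{\star}}-r|^2\to0$ is needed (Proposition~\ref{prop-diameter}), while the variance of the data-dependent leaf averages is absorbed by the uniform-deviation term rather than treated separately. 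What you call ``the bias of the CART estimate'' is in fact its full MSE; the cell-shrinkage argument alone only controls the population version, so the black box you cite is strictly stronger than what the paper establishes internally. Second, for $s_{M,n}$ you rerun the scheme on the big class with the tanh forest as competitor, whereas the paper notes that the big class contains $r_{M,n}$ itself, deduces $J_n(s_{M,n})\le J_n(r_{M,n})$, and reduces to the already-proved small-network consistency plus two uniform-deviation terms---your route is more symmetric, the paper's avoids repeating the approximation work. A minor aside on the estimation step: the paper does not union-bound over tree architectures as you suggest, but simply embeds $\mathscr F(\Theta,\mathscr D_n)$ into the deterministic fully-connected class $\mathscr F_n$ of all two-hidden-layer networks of the given widths subject to (\ref{cpoids}).
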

It is interesting to note that Theorem \ref{maintheo1} still holds when the individual trees are subsampled and fully grown (that is, when $K_n = a_n$, i.e., one single observation in each leaf) as soon as the assumptions are satisfied with $a_n$ instead of $K_n$. Put differently, we require that the trees of the forest are either pruned (restriction on $K_n$) or subsampled (restriction on $a_n$). If not, 
the assumptions of Theorem \ref{maintheo1} are violated. 
So, to obtain a consistent prediction, it is therefore mandatory to keep the depth of the tree or the size of subsamples under control. 
We also note that Theorem \ref{maintheo1} can be adapted to deal with networks based on bootstrapped and fully grown trees. In this case, $n$ observations are chosen in $\mathscr D_n$ {\it with} replacement prior to each tree construction, and only one distinct example is left in the leaves. In this setting, care must be taken in the analysis to consider only the trees that use at least $K_n$ distinct data points---we leave the adaptation as a small exercise.

Let us finally point out that the proof of Theorem \ref{maintheo1} relies on the consistency of the individual ``small'' networks. Therefore, the proposed analysis does not really highlight the benefits of aggregating individual trees in terms of finite-sample analysis or asymptotic behavior. Undoubtedly, there is  room for further improvement.

\subsection{Proof of Theorem \ref{maintheo1}}
\paragraph{Consistency of $r_{M,n}$.}
Denote by $\mu$ the distribution of $\bX$. The consistency proof of $r_{M,n}$ starts with the observation that
\begin{align*}
 \mathds E \big|r_{M,n}(\bX)-r(\bX)\big|^2 &=  \mathds E \bigg| \frac{1}{M}\sum_{m=1}^M r(\bX \,;\Theta_m,\mathscr D_n)-r(\bX)\bigg|^2\\
 &\leq \frac{1}{M}\sum_{m=1}^M \mathds E \big|  r(\bX \,;\Theta_m,\mathscr D_n)-r(\bX)\big|^2\\
 & \quad \mbox{(by Jensen's inequality)}\\
 & = \mathds E \big|  r(\bX \,;\Theta,\mathscr D_n)-r(\bX)\big|^2.
\end{align*}
Therefore, we only need to show that, under the conditions of the theorem, $\mathds E |  r(\bX \,;\Theta,\mathscr D_n)-r(\bX)|^2\to 0$,  i.e., that a single random network estimate is consistent (note that the expectation is taken with respect to $\bX$, $\Theta$, and $\mathscr D_n$). 

We have (\citealp[see, e.g.,][]{LuZe95}, or \citealp[][Lemma 10.1]{GyKoKr02})
\begin{align}
&\mathds E \big|  r(\bX \,;\Theta,\mathscr D_n)-r(\bX)\big|^2 \nonumber \\
& \quad \leq 2 \mathds E \sup_{f \in \mathscr F(\Theta,\mathscr D_n)}\bigg|\frac{1}{n}\sum_{i=1}^n\big|Y_i-f(\bX_i)\big|^2-\mathds E\big|Y-f(\bX)\big|^2\bigg| \nonumber\\
&\qquad + \mathds E \inf_{f \in \mathscr F(\Theta,\mathscr D_n)} \int_{[0,1]^d}\big|f(\bx)-r(\bx)\big|^2\mu(\mbox{d}\bx). \label{E+A}
\end{align}
The first term---the estimation error---is handled in Proposition \ref{prop-ee} below by using nonasymptotic uniform deviation inequalities and covering numbers corresponding to $\mathscr F(\Theta,\mathscr D_n)$ (proofs are in Section 6).
\begin{pro}
\label{prop-ee}
Assume that $K_n, \gamma_2 \to \infty$ such that $K_n^6\log (\gamma_2 K_n^5)/n\to 0$. Then
$$\mathds E \sup_{f \in \mathscr F(\Theta,\mathscr D_n)}\bigg|\frac{1}{n}\sum_{i=1}^n\big|Y_i-f(\bX_i)\big|^2-\mathds E\big|Y-f(\bX)\big|^2\bigg| \to 0\quad \mbox{as } n \to \infty.$$
\end{pro}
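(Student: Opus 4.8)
The plan is to control the estimation error via a standard uniform-deviation bound for bounded least-squares regression over a class with a manageable covering number. The key obstacle is that the function class $\mathscr F(\Theta,\mathscr D_n)$ is itself random (it depends on $\Theta$ and $\mathscr D_n$ through the tree-induced graphs $\mathscr G_1,\mathscr G_2$), and, more importantly, the functions in it are \emph{not} uniformly bounded: the constraint (\ref{cpoids}) only bounds $\|{\bf W}_2\|_\infty$, $\|{\bf b}_2\|_\infty$, $\|{\bf W}_{\!\textrm{out}}\|_1$, $|{b}_{\textrm{out}}|$, while ${\bf W}_1$ and ${\bf b}_1$ are left unconstrained. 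However, because $\sigma_1$ and $\sigma_2$ take values in $[-1,1]$, the output of the second hidden layer lies in $[-1,1]^{K_n}$ regardless of ${\bf W}_1,{\bf b}_1$, and therefore $|f_{\boldsymbol\lambda}(\bx)|\le \|{\bf W}_{\!\textrm{out}}\|_1 + |{b}_{\textrm{out}}| \le C_1 K_n$ for every $f_{\boldsymbol\lambda}\in\mathscr F(\Theta,\mathscr D_n)$. So the class is uniformly bounded by a quantity of order $K_n$; this is what makes the problem tractable.

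First I would reduce to a deterministic covering-number statement. Conditionally on $(\Theta,\mathscr D_n)$, and restricting to the event $\|Y\|_\infty<\infty$ (say $|Y|\le L$ a.s.), the supremum inside the expectation is a uniform deviation of empirical means of the functions $g_f(x,y)=|y-f(x)|^2$, which are bounded by $B_n := (L+C_1K_n)^2 = \mathscr O(K_n^2)$. By a truncation/Lipschitz argument, an $\varepsilon$-cover of $\mathscr F(\Theta,\mathscr D_n)$ in sup-norm gives an $\mathscr O(\sqrt{B_n}\,\varepsilon)$-cover of $\{g_f\}$, so it suffices to bound the sup-norm covering number $\mathcal N(\varepsilon,\mathscr F(\Theta,\mathscr D_n),\|\cdot\|_\infty)$. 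Here I would invoke the standard machinery (Theorem 9.1 / Lemma 10.1-type bounds in \citealp{GyKoKr02}, or the argument of \citealp{LuZe95}): the expected uniform deviation is bounded by a term of order
$$
\sqrt{\frac{B_n^2 \,\log \mathcal N\!\left(\tfrac{1}{n},\mathscr F(\Theta,\mathscr D_n),\|\cdot\|_\infty\right)}{n}}
$$
up to logarithmic factors, so everything comes down to showing $\log\mathcal N(1/n,\mathscr F,\|\cdot\|_\infty) = \mathscr O(K_n^2 \log(\gamma_2 K_n^5))$ or so, and then checking that $B_n^2 \cdot K_n^2\log(\gamma_2K_n^5)/n \to 0$, which is exactly the hypothesis $K_n^6\log(\gamma_2K_n^5)/n\to 0$ once one tracks that $B_n^2 = \mathscr O(K_n^4)$.

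The covering-number estimate is the technical heart. I would build the cover layer by layer. The second layer and output layer have $\mathscr O(K_n^2)$ parameters (at most $(K_n-1)\times K_n$ entries in ${\bf W}_2$, plus $K_n$ biases, plus $K_n$ output weights), each confined to a bounded range of size $\mathscr O(K_n)$ by (\ref{cpoids}); and the maps they implement are Lipschitz in these parameters with Lipschitz constant polynomial in $K_n$ and in $\gamma_2$ (since $\sigma_2=\sigma(\gamma_2\cdot)$ has Lipschitz constant $\gamma_2$), because the inputs to the second layer are in $[-1,1]$. This yields a net of log-cardinality $\mathscr O(K_n^2\log(\gamma_2 K_n^{c}))$ for the second-and-output part. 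For the first layer, the subtlety is again that ${\bf W}_1,{\bf b}_1$ are unbounded; but since $\sigma_1$ saturates, a query point $\bx\in[0,1]^d$ only ``sees'' the first-layer weights through $\sigma_1({\bf W}_1^\top\bx+{\bf b}_1)\in[-1,1]^{K_n-1}$, and for fixed ${\bf W}_2$ the composition is $\gamma_2\|{\bf W}_2\|$-Lipschitz in that vector. One then needs a cover of the set of functions $\bx\mapsto \sigma_1({\bf W}_1^\top\bx+{\bf b}_1)$ over all (unconstrained) ${\bf W}_1\in\mathbb M(\mathscr G_1),{\bf b}_1$; here each coordinate is $\sigma(\gamma_1(x^{(j_k)}-\alpha_k))$ for a \emph{single} coordinate $j_k$ and a real offset, i.e. a one-dimensional monotone ridge function, so its covering number at scale $\varepsilon$ is $\mathscr O(\gamma_1/\varepsilon)$ by a direct argument (bounded variation / monotonicity), and there are $K_n-1$ of them, giving log-covering $\mathscr O(K_n\log(\gamma_1 n))$. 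Assembling the three pieces by a union/product bound and taking $\varepsilon \asymp 1/(n\cdot\mathrm{poly}(K_n,\gamma_2))$ gives the claimed $\log\mathcal N = \mathscr O(K_n^2\log(\gamma_2 K_n^5)+K_n\log(\gamma_1 n))$; the first summand dominates in the regime of the theorem, and the hypothesis $K_n^6\log(\gamma_2K_n^5)/n\to0$ forces the whole deviation bound to zero. The main obstacle, as stressed, is handling the unbounded first-layer parameters cleanly — everything hinges on exploiting the saturation of $\sigma_1$ to replace ``cover the weights'' by ``cover the bounded one-dimensional ridge functions,'' and on carefully propagating the $\gamma_1,\gamma_2$-dependent Lipschitz constants through the composition so that the final $\varepsilon$ and the resulting logarithmic factor match the stated assumption. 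Finally, since all these bounds are uniform over the (finitely many) realizations of the random graphs $\mathscr G_1(\Theta,\mathscr D_n),\mathscr G_2(\Theta,\mathscr D_n)$ — indeed the covering-number bound depends on the graphs only through $K_n$ — taking expectation over $\Theta,\mathscr D_n$ at the end is immediate, and dominated convergence (using $B_n^2/n\to0$ for the $L\ge|Y|$ truncation remainder) closes the argument.
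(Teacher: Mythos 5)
There is a genuine gap, and it sits exactly where you located the difficulty: the first hidden layer. Your plan is to bound the \emph{sup-norm} covering number of $\mathscr F(\Theta,\mathscr D_n)$ and to cover the first-layer outputs by treating each unit as $\sigma(\gamma_1(x^{(j_k)}-\alpha_k))$, a $\gamma_1$-Lipschitz monotone ridge function with covering number $\mathscr O(\gamma_1/\varepsilon)$. But, as you yourself note at the outset, ${\bf W}_1$ and ${\bf b}_1$ are \emph{unconstrained} in $\Lambda(\Theta_m,\mathscr D_n)$: a first-layer unit computes $\sigma_1(w\,x^{(j_k)}+b)$ with arbitrary $w\in\mathds R$, not slope fixed at $\gamma_1$. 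As $|w|\to\infty$ these functions approach step functions at arbitrary thresholds, which are pairwise at sup-distance $2$, so the sup-norm covering number of this class at any scale $\varepsilon<1$ is infinite; the $\mathscr O(\gamma_1/\varepsilon)$ bound only holds if the first-layer weights were frozen at their tree initialization, which is not the estimation class the proposition is about. Moreover, even if you patch this with some Lipschitz-in-$\gamma_1$ device, your final budget $\log\mathcal N=\mathscr O(K_n^2\log(\gamma_2K_n^5)+K_n\log(\gamma_1 n))$ retains a $\gamma_1$ term, and the proposition's hypotheses ($K_n,\gamma_2\to\infty$, $K_n^6\log(\gamma_2K_n^5)/n\to0$) impose no relation between $\gamma_1$ and $n$ — in Theorem \ref{maintheo1}, $\gamma_1$ may grow arbitrarily fast — so a $\gamma_1$-dependent bound cannot yield the stated conclusion.

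The paper's proof avoids both problems. It enlarges $\mathscr F(\Theta,\mathscr D_n)$ to the deterministic class $\mathscr F_n$ of \emph{all} constrained two-hidden-layer networks (dropping the sparsity pattern, keeping only (\ref{cpoids})), applies Pollard's inequality with \emph{empirical $L_1$} covering numbers $\mathscr N_1$, and bounds $\mathscr N_1(\cdot,\mathscr F_n,\bX_1^n)$ layer by layer in the style of \citet{LuZe95}, using the VC-type results of \citet[Theorem 9.4, Lemmas 16.4 and 16.5]{GyKoKr02}. The key point is that the first-layer class $\{\sigma_1(a^{\top}\bx+a_0)\}$ has empirical $L_1$ covering number at most $9(6e/\varepsilon)^{4d+8}$ — independent of $\gamma_1$ and of the weight magnitudes, because it is a composition of a fixed monotone squasher with affine functions, controlled by VC dimension rather than by a Lipschitz constant. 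Only $\gamma_2$ enters (through the Lipschitz constant of $\sigma_2$ acting on the bounded second-layer weights), which is why the final exponent is $(4d+9)K_n^2+2K_n+1$ with a logarithmic factor $\log(\gamma_2 K_n^5)$ and the condition $K_n^6\log(\gamma_2K_n^5)/n\to0$ suffices. Your monotonicity remark points in the right direction, but it must be implemented in the empirical $L_1$ (or random $L_2$) metric with a slope-free VC argument, not in sup-norm with a $\gamma_1$-dependent Lipschitz bound; the rest of your outline (uniform bound $C_1K_n$ from (\ref{cpoids}), reduction of the squared-loss class to the function class at the cost of a factor $\mathscr O(K_n)$, tracking $B_n^2=\mathscr O(K_n^4)$ against $n$) matches the paper's accounting.
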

To deal with the second term of the right-hand side of (\ref{E+A})---the approximation error---, we consider a piecewise constant function (pseudo-estimate) similar to the original CART-tree $t_n(.\,;\Theta,\mathscr D_n)$, with only one difference: the function computes the true conditional expectation $\mathbb E[Y|\bX \in L_{k'}]$ in each leaf $L_{k'}$, not the empirical one $\bar Y_{k'}$. Put differently, we take $({\bf W}_{\!\textrm{out}}^{\star})_{k'}= \mathbb E[Y|\bX \in L_{k'}]/2$ and ${b}_{\textrm{out}}^{\star}=\sum_{k'=1}^{K_n} \mathbb E[Y|\bX \in L_{k'}]/2$ in (\ref{NFI}). This tree-type pseudo-estimate has the form
$$t_{\boldsymbol \lambda^{\star}}(\bx)={\bf W}_{\!\textrm{out}}^{\star \top}\tau\Big({\bf W}_{2}^{\star\top} \tau({\bf W}_{1}^{\star\top}\bx+{\bf b}_{1}^{\star})+{\bf b}_{2}^{\star}\Big)+b_{\textrm{out}}^{\star},\quad \bx \in \mathbb R^d,$$
(recall that $\tau(u)=2\mathds1_{u\geq 0}-1$), for some ${\boldsymbol \lambda^{\star}}=({\bf W}_{1}^{\star},{\bf b}_{1}^{\star},{\bf W}_{2}^{\star},{\bf b}_{2}^{\star},{\bf W}_{\!\textrm{out}}^{\star},b_{\textrm{out}}^{\star})$. We have
\begin{align*}
&\inf_{f \in \mathscr F(\Theta,\mathscr D_n)} \int_{[0,1]^d}\big|f(\bx)-r(\bx)\big|^2\mu(\mbox{d}\bx)\\
&\quad = \inf_{f \in \mathscr F(\Theta,\mathscr D_n)} \int_{[0,1]^d}\big|f(\bx)-r(\bx)\big|^2\mu(\mbox{d}\bx)- 2\int_{[0,1]^d}\big|t_{\boldsymbol \lambda^{\star}}(\bx)-r(\bx)\big|^2\mu(\mbox{d}\bx)\\
& \qquad + 2\int_{[0,1]^d}\big|t_{\boldsymbol \lambda^{\star}}(\bx)-r(\bx)\big|^2\mu(\mbox{d}\bx).
\end{align*}
Therefore,
\begin{align*}
&\inf_{f \in \mathscr F(\Theta,\mathscr D_n)} \int_{[0,1]^d}\big|f(\bx)-r(\bx)\big|^2\mu(\mbox{d}\bx)\\
&\quad \leq \int_{[0,1]^d} \big|f_{\boldsymbol \lambda^{\star}}(\bx)-r(\bx)\big|^2\mu(\mbox{d}\bx)- 2\int_{[0,1]^d}\big|t_{\boldsymbol \lambda^{\star}}(\bx)-r(\bx)\big|^2\mu(\mbox{d}\bx)\\
& \qquad + 2\int_{[0,1]^d}\big|t_{\boldsymbol \lambda^{\star}}(\bx)-r(\bx)\big|^2\mu(\mbox{d}\bx)\\
& \quad  \leq 2\int_{[0,1]^d} \big|f_{\boldsymbol \lambda^{\star}}(\bx)-t_{\boldsymbol \lambda^{\star}}(\bx)\big|^2\mu(\mbox{d}\bx)+2\int_{[0,1]^d}\big|t_{\boldsymbol \lambda^{\star}}(\bx)-r(\bx)\big|^2\mu(\mbox{d}\bx).
\end{align*}
We prove in Proposition \ref{prop-lambda} that the expectation of the first of the two terms above tends to zero under appropriate conditions on $\gamma_1$ and $\gamma_2$. The second term is less standard and requires a careful analysis of the asymptotic geometric behavior of the cells of the tree pseudo-estimate $t_{\boldsymbol \lambda^{\star}}$. This is the topic of Proposition \ref{prop-diameter}. Taken together, inequality (\ref{E+A}) and Proposition \ref{prop-ee}-\ref{prop-diameter} prove the result.
\begin{pro}
\label{prop-lambda}
Assume that $\bX$ is uniformly distributed in $[0,1]^d$ and $\|Y\|_{\infty}<\infty$. Assume, in addition, that $K_n, \gamma_1, \gamma_2 \to\infty$ such that 
\begin{align*}
    K_n^{2}e^{-2 \gamma_2} \to 0 \quad \textrm{and} \quad  K_n^4\gamma_2^2\log(\gamma_1)/\gamma_1\to 0.
\end{align*}
Then
$$\mathbb E\int_{[0,1]^d} \big|f_{\boldsymbol \lambda^{\star}}(\bx)-t_{\boldsymbol \lambda^{\star}}(\bx)\big|^2\mu(\emph{d}\bx) \to 0\quad \mbox{as }n \to \infty.$$
\end{pro}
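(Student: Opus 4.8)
\emph{Proof idea.} The functions $f_{\boldsymbol\lambda^\star}$ and $t_{\boldsymbol\lambda^\star}$ are built from the \emph{same} weight matrices ${\bf W}_1^\star,{\bf W}_2^\star,{\bf W}_{\textrm{out}}^\star$ and the same offsets, and differ only in that $t_{\boldsymbol\lambda^\star}$ applies the hard threshold $\tau$ at both hidden layers while $f_{\boldsymbol\lambda^\star}$ applies $\sigma_1=\sigma(\gamma_1\cdot)$ and $\sigma_2=\sigma(\gamma_2\cdot)$. I would therefore bound their difference layer by layer, the elementary inequality $|\sigma(\gamma u)-\tau(u)|\le 2e^{-2\gamma|u|}$ (for $u\neq 0$) being the workhorse. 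It says the smooth and hard activations agree up to an exponentially small error \emph{as soon as the pre-activation stays away from $0$}, but gives no control near $\{u=0\}$; since the first-layer pre-activations are $h_k(\bx)=x^{(j_k)}-\alpha_{j_k}$, which vanish exactly on the $K_n-1$ splitting hyperplanes, this near-hyperplane region is the crux.

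Fix a width $\delta_n\in(0,1)$ (to be optimised at the end) and let $B_n=\bigcup_{k=1}^{K_n-1}\{\bx\in[0,1]^d:\,|x^{(j_k)}-\alpha_{j_k}|<\delta_n\}$, a union of thin slabs around the splitting hyperplanes. Because $\bX$ is uniform, $\mu(B_n)\le 2\delta_n(K_n-1)$. Writing $f_{\boldsymbol\lambda^\star}-t_{\boldsymbol\lambda^\star}=\sum_{k'=1}^{K_n}({\bf W}_{\textrm{out}}^\star)_{k'}(\widetilde v_{k'}-v_{k'})$, where $v_{k'}(\bx)\in\{-1,1\}$ is the exact second-layer output and $\widetilde v_{k'}(\bx)\in(-1,1)$ its smoothed counterpart, and using $|({\bf W}_{\textrm{out}}^\star)_{k'}|=|\mathbb E[Y\mid\bX\in L_{k'}]|/2\le\|Y\|_\infty/2$, one gets the crude bound $|f_{\boldsymbol\lambda^\star}(\bx)-t_{\boldsymbol\lambda^\star}(\bx)|\le K_n\|Y\|_\infty$, valid for every $\bx$, which I would use on $B_n$. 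On $[0,1]^d\setminus B_n$ all first-layer pre-activations satisfy $|h_k(\bx)|\ge\delta_n$, so the two first-layer output vectors agree coordinatewise up to $2e^{-2\gamma_1\delta_n}$; pushing this through the second layer (whose $k'$-th pre-activation is a $\pm1$-combination of $\ell(k')\le K_n-1$ first-layer outputs plus the fixed offset $b^0_{k'}$), using that $\sigma_2$ is $\gamma_2$-Lipschitz and that by (\ref{def_threshold_function}) the exact second-layer pre-activation has modulus $\ge 1/2$, yields $|\widetilde v_{k'}(\bx)-v_{k'}(\bx)|\le 2\gamma_2(K_n-1)e^{-2\gamma_1\delta_n}+2e^{-\gamma_2}$, whence, summing over the $K_n$ output units, $|f_{\boldsymbol\lambda^\star}(\bx)-t_{\boldsymbol\lambda^\star}(\bx)|\le\|Y\|_\infty\big(\gamma_2K_n^2e^{-2\gamma_1\delta_n}+K_ne^{-\gamma_2}\big)$ on $[0,1]^d\setminus B_n$.

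Combining the two regions gives, for a universal constant $C$,
$$\int_{[0,1]^d}\big|f_{\boldsymbol\lambda^\star}(\bx)-t_{\boldsymbol\lambda^\star}(\bx)\big|^2\mu(\mathrm{d}\bx)\ \le\ C\|Y\|_\infty^2\Big(\gamma_2^2K_n^4e^{-4\gamma_1\delta_n}+K_n^2e^{-2\gamma_2}+K_n^3\delta_n\Big).$$
Taking $\delta_n=(\log\gamma_1)/\gamma_1$ turns the first term into $\gamma_2^2K_n^4\gamma_1^{-4}$, which tends to $0$ because the assumption $K_n^4\gamma_2^2\log(\gamma_1)/\gamma_1\to0$ forces $K_n^4\gamma_2^2=o(\gamma_1)$; the third term equals $K_n^3(\log\gamma_1)/\gamma_1$, dominated by $K_n^4\gamma_2^2(\log\gamma_1)/\gamma_1\to0$; and the middle term vanishes by $K_n^2e^{-2\gamma_2}\to0$. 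Since every constant above depends only on $\|Y\|_\infty,K_n,\gamma_1,\gamma_2$ --- the randomness in $\Theta$ and $\mathscr D_n$ entering only through the structural facts ``there are $K_n-1$ hyperplanes'', ``every root-to-leaf path has length $\le K_n-1$'', and ``$|\mathbb E[Y\mid\bX\in L_{k'}]|\le\|Y\|_\infty$'' --- the displayed bound is deterministic, and taking expectations over $(\Theta,\mathscr D_n)$ finishes the proof.

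I expect the delicate point to be the calibration of $\delta_n$: it has to be small enough for $\mu(B_n)$ to be negligible against the $K_n^3$ amplification produced by the output layer, yet large enough for $e^{-\gamma_1\delta_n}$ to overcome the $\gamma_2K_n^2$ factor accumulated through the two hidden layers --- and the hypothesis $K_n^4\gamma_2^2\log(\gamma_1)/\gamma_1\to0$ is precisely what leaves room for the choice $\delta_n\asymp(\log\gamma_1)/\gamma_1$. A subsidiary chore is the bookkeeping of the $\gamma_2$-Lipschitz amplification in the second layer and checking that the $1/2$ margin in (\ref{def_threshold_function}), combined with $K_n^2e^{-2\gamma_2}\to0$, controls the layer-two smoothing error.
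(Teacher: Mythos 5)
Your proposal is correct and follows essentially the same route as the paper's proof: the same bound $|\sigma(\gamma u)-\tau(u)|\le 2e^{-2\gamma|u|}$, the $1/2$-margin of the exact second-layer pre-activations from (\ref{def_threshold_function}), the $\gamma_2$-Lipschitz propagation of the first-layer error, and the uniformity of $\bX$ to control the mass near the split hyperplanes, yielding a deterministic bound of the same order $\|Y\|_\infty^2\big(K_n^2e^{-2\gamma_2}+\gamma_2^2K_n^4\log(\gamma_1)/\gamma_1\big)$. The only difference is bookkeeping: you split the domain into slabs of width $\delta_n\asymp\log(\gamma_1)/\gamma_1$ and use a crude pointwise bound there, whereas the paper keeps a single pointwise estimate (triangle inequality plus Cauchy--Schwarz on the vector of second-layer activations) and performs the equivalent $\varepsilon$-split inside the integral $\int e^{-4\gamma_1|x^{(j)}-\alpha|}\mu(\mathrm{d}\bx)\le e^{-4\gamma_1\varepsilon}+2\varepsilon$.
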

\begin{pro}
\label{prop-diameter}
Assume that $\bX$ is uniformly distributed in $[0,1]^d$ and $\|Y\|_{\infty}<\infty$. If $r \in \mathscr{F}$, then
$$\mathbb E\int_{[0,1]^d} \big|t_{\boldsymbol \lambda^{\star}}(\bx)-r(\bx)\big|^2\mu(\emph{d}\bx) \to 0\quad \mbox{as }n \to \infty.$$
\end{pro}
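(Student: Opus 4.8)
I would proceed in three steps: reduce the statement to a bound on within-cell variances of $r$, pass from the data-driven partition to its population counterpart, and then use $r\in\mathscr F$ to show the within-cell variance vanishes.

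\textbf{Step 1: reduction to within-cell variances.} The output layer of $t_{\boldsymbol{\lambda}^{\star}}$ has weights $({\bf W}_{\!\textrm{out}}^{\star})_{k'}=\tfrac12\mathds E[Y\,|\,\bX\in L_{k'}]$ and offset $b_{\textrm{out}}^{\star}=\tfrac12\sum_{k'}\mathds E[Y\,|\,\bX\in L_{k'}]$, while the two threshold hidden layers produce the $\pm1$ one-hot encoding of the leaf $L(\bx)$ containing $\bx$; hence $t_{\boldsymbol{\lambda}^{\star}}(\bx)=\mathds E[Y\,|\,\bX\in L(\bx)]=\mathds E[r(\bX)\,|\,\bX\in L(\bx)]$. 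Since $\bX$ is uniform, $\mu$ is Lebesgue measure on $[0,1]^d$ and a one-line computation yields
$$\int_{[0,1]^d}\big|t_{\boldsymbol{\lambda}^{\star}}(\bx)-r(\bx)\big|^2\mu(\mathrm d\bx)=\sum_{k'=1}^{K_n}\mu(L_{k'})\,\V\!\big[r(\bX)\,\big|\,\bX\in L_{k'}\big],$$
so the proposition amounts to $\mathds E\sum_{k'}\mu(L_{k'})\V[r(\bX)\,|\,\bX\in L_{k'}]\to0$, the $L_{k'}$ being the (random) leaves of $t(\cdot\,;\Theta,\mathscr D_n)$ and the expectation over $\Theta$ and $\mathscr D_n$. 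I stress that this cannot be obtained from a diameter bound on the $L_{k'}$: adaptive CART cells need not shrink in every coordinate, and the role of $r\in\mathscr F$ is precisely to kill the residual variation of $r$ along the ``flat'' directions of a cell.

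\textbf{Step 2: passing to the population tree.} Introduce the theoretical CART tree, obtained by splitting each cell $A$ — over the $\Theta$-admissible directions $j\in\Mtry$ and cut positions $\alpha$ — so as to maximize the population criterion
$$L_A^{\star}(j,\alpha)=\V[r(\bX)\,|\,\bX\in A]-\tfrac{\mu(A_L)}{\mu(A)}\V[r(\bX)\,|\,\bX\in A_L]-\tfrac{\mu(A_R)}{\mu(A)}\V[r(\bX)\,|\,\bX\in A_R],$$
which is the population version of (\ref{chapitre0_definition_empirical_CART_criterion}), the irreducible-noise contribution cancelling by the tower property. In the growth regime of Theorem \ref{maintheo1} each cell still contains many observations, so a uniform concentration argument (in the spirit of Scornet, Biau and Vert, 2015) makes $L_n(j,\alpha)$ uniformly close to $L_A^{\star}(j,\alpha)$ and forces the empirical and theoretical partitions to agree asymptotically, their weighted within-cell $r$-variances differing by $o(1)$ in expectation. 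From here on I would work with the theoretical tree.

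\textbf{Step 3: decay of the within-cell variance, and the main obstacle.} For the theoretical tree the quantity $\sum_{k'}\mu(L_{k'})\V[r(\bX)\,|\,\bX\in L_{k'}]$ is nonnegative and nonincreasing along splits — splitting $A$ lowers it by $\mu(A)\max_{j\in\Mtry,\alpha}L_A^{\star}(j,\alpha)$ — hence converges as $K_n\to\infty$. To identify the limit as $0$: if $r$ is not constant on a cell $A$, the contrapositive of the defining property of $\mathscr F$ supplies a coordinate $j$ along which $x_j\mapsto\int_{A^{\backslash j}}r(\bx)\,\mathrm d\bx^{\backslash j}$ is non-constant on $[a_j,b_j]$, and for such $j$ one checks $\sup_\alpha L_A^{\star}(j,\alpha)>0$. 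Since along any infinite refining chain of cells each coordinate is admissible infinitely often ($\P(j\in\Mtry)=m_{\textrm{try}}/d>0$, independently across nodes), a cell on which $r$ varies cannot be refined indefinitely without triggering splits of strictly positive size; a volume/counting argument promotes this to the vanishing of the \emph{average} within-cell variance of $r$ as the number of leaves grows, and combined with Steps 1--2 and $\|Y\|_\infty<\infty$ (whence $\|r\|_\infty<\infty$, which dominates the sum and permits exchanging limit and expectation) it gives the claim. The crux is exactly here: turning ``split of strictly positive size'' into a bound that is quantitative and uniform over the uncountable, data-dependent family of cells, while accounting for the $\mtry$ randomization (the useful coordinate may be momentarily unavailable at a node). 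This is the point where the structural hypothesis $r\in\mathscr F$ does the work, generalizing the additive-model consistency argument of Scornet, Biau and Vert (2015); a secondary difficulty is that the uniform proximity of the empirical and theoretical criteria in Step 2 degrades in cells with few observations, so one must keep track of the random number of points per cell and of possibly unbalanced trees.
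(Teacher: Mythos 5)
Your Step 1 is fine (and is essentially the same reduction the paper uses, in variance rather than sup form: the paper bounds $|t_{\boldsymbol \lambda^{\star}}(\bx)-r(\bx)|^2$ by $\sup_{\bz,\bz'\in A_n(\bx,\Theta)}|r(\bz)-r(\bz')|^2$), and your observation that a diameter bound cannot work and that $r\in\mathscr F$ must kill the variation along flat directions is exactly the right intuition. The gap is in Steps 2--3. In Step 2 you claim that uniform closeness of $L_n$ to the population criterion $L^{\star}$ ``forces the empirical and theoretical partitions to agree asymptotically,'' and you then transfer the weighted within-cell variances to a population tree. This does not follow: closeness of criterion values does not give closeness of their argmaxes (near-ties and flat maxima of $L^{\star}$ make the selected cut unstable), so the two partitions can differ substantially even when the criteria are uniformly close, and nothing in your argument controls the difference of the two weighted variance sums. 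The paper (following Scornet, Biau and Vert, 2015) never compares partitions; it works with the empirical tree's cells throughout and shows directly that cuts which are near-optimal for $L_n$, hence near-optimal for $L^{\star}$, leave cells on which $r$ varies little---the structural input being Lemma \ref{proof_tech_lemma_1} ($L^{\star}\equiv 0$ on a cell implies $r$ constant on it, which is where $r\in\mathscr F$ enters), combined with the compactness/continuity machinery of their Proposition 2 to yield Proposition \ref{prop_equiv_prop2}, i.e.\ that $\sup_{\bz,\bz'\in A_n(\bX,\Theta)}|r(\bz)-r(\bz')|$ tends to $0$ in probability; boundedness of $r$ then gives convergence of the expectation.

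In Step 3 you identify the decisive difficulty yourself---turning ``$r$ non-constant on $A$ implies $\sup_\alpha L^{\star}_A(j,\alpha)>0$ for some admissible $j$'' into a bound that is quantitative and uniform over the uncountable, data-dependent family of cells, with the $\mtry$ randomization---and you leave it unresolved, calling it the crux. But that is precisely the content of the proposition: without it, the monotone-decrease argument only shows the weighted within-cell variance converges to \emph{some} limit, not to zero, and the pointwise positivity of the criterion on a fixed cell does not survive as cells shrink. So as it stands the proposal is a plausible plan whose central step (the analogue of Proposition \ref{prop_equiv_prop2}) is asserted rather than proved, and whose bridge from the empirical to the population tree would need a genuinely new stability argument that the paper deliberately avoids.
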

\paragraph{Consistency of $s_{M,n}$.}
The consistency of $s_{M,n}$ is a consequence of the first statement of Theorem \ref{maintheo1}. Denote by $\widehat{\bf W}_{1,1}, \hdots, \widehat{\bf W}_{1,M}$, $\widehat {\bf b}_{1,1}, \hdots, \widehat {\bf b}_{1,M}$, $\widehat{\bf W}_{2,1}, \hdots, \widehat{\bf W}_{2,M}$, $\widehat {\bf b}_{2,1}, \hdots, \widehat {\bf b}_{2,M}$, $\widehat {\bf W}_{\!\textrm{out}, 1}, \hdots, \widehat {\bf W}_{\!\textrm{out},M}$, and $\widehat b_{\textrm{out},1}, \hdots, \widehat b_{\textrm{out},M}$ the set of parameters (weights and offsets) output by the minimization programs performed at each tree-type network of {\bf Method 1}. It is then easy to see that the ``big'' network fitted with the parameters 
$$[\widehat {\boldsymbol \lambda}]=\big ([\widehat {\bf W}_{1}], [\widehat {\bf b}_{1}], [\widehat {\bf W}_{2}], [\widehat {\bf b}_{2}],[\widehat {\bf W}_{\!\textrm{out}}],\widehat {b}_{\textrm{out}}\big),$$
where
\begin{equation*}
[\widehat {\bf W}_{\!\textrm{out}}]=\frac{1}{M}
\begin{pmatrix}
 \widehat {\bf W}_{\!\textrm{out},1} \\
  \vdots  \\
 \widehat  {\bf W}_{\!\textrm{out},M}
\end{pmatrix}
\end{equation*}
and $\widehat {b}_{\textrm{out}} =\frac{1}{M}\sum_{m=1}^M \widehat {b}_{\textrm{out},m}$, exactly computes the function $r_{M,n}$. This implies 
\begin{align*}
\frac{1}{n}\sum_{i=1}^n \big|Y_i-s_{M,n}(\bX_i)\big|^2 & \leq \frac{1}{n} \sum_{i=1}^n \big|Y_i-r_{M,n}(\bX_i)\big|^2 \nonumber \\
& \leq \frac{1}{M} \sum_{m=1}^M \Big( \frac{1}{n} \sum_{i=1}^n \big|Y_i-r(\bX_i\,; \Theta_m, \mathscr{D}_n)\big|^2 \Big),
\end{align*}
by Jensen's inequality.  Denote by $\mathds{E}_{\bX, Y}$ the expectation with respect to $\bX$ and $Y$ only (that is, all the other random variables are kept fixed). Thus, simple calculations show that 
\begin{align}
    &  \mathds{E} \big|s_{M,n}(\bX) - r(\bX)\big|^2 \nonumber \\
    & \leq \mathds{E} \left| \mathds{E} \big|Y - s_{M,n}(\bX)\big|^2 - \frac{1}{n}\sum_{i=1}^n \big|Y_i-s_{M,n}(\bX_i)\big|^2 \right| \nonumber \\
    & \quad + \mathds{E} \left[ \frac{1}{M} \sum_{m=1}^M \Big| \frac{1}{n} \sum_{i=1}^n \big|Y_i-r(\bX_i\,; \Theta_m, \mathscr{D}_n)\big|^2  - \mathds{E}_{\bX,Y} \big|Y - r(\bX\,; \Theta_m, \mathscr{D}_n)\big|^2\Big| \right] \nonumber\\
    & \quad + \mathds{E} \left[ \frac{1}{M} \sum_{m=1}^M \Big(\mathds{E}_{\bX, Y} \big|Y - r(\bX\,; \Theta_m, \mathscr{D}_n)\big|^2 - \mathds{E} \big|Y - r(\bX)\big|^2\Big)\right]. \label{proof_big_net_1}
\end{align}
Regarding the third term on the right-hand side and under the assumptions of Theorem \ref{maintheo1}, we have 
\begin{align}
    & \mathds{E} \Bigg[ \frac{1}{M} \sum_{m=1}^M \Big(\mathds{E}_{\bX, Y} \big|Y - r(\bX\,; \Theta_m, \mathscr{D}_n)\big|^2 - \mathds{E} \big|Y - r(\bX)\big|^2\Big) \Bigg] \nonumber \\
    & =  \mathds{E} \Bigg[ \frac{1}{M} \sum_{m=1}^M \mathds{E}_{\bX, Y} \big|r(\bX) - r(\bX\,; \Theta_m, \mathscr{D}_n)\big|^2 \Bigg] \nonumber \\
    & =  \mathds{E} \big|r(\bX) - r(\bX\,; \Theta_m, \mathscr{D}_n)\big|^2 \nonumber \\
    & \quad \to 0 \quad \textrm{as} ~ n \to \infty, \label{proof_th2_0}
\end{align}
since $r(\bX\,; \Theta_m, \mathscr{D}_n)$ is consistent by the first statement of Theorem \ref{maintheo1}. Regarding the second term in (\ref{proof_big_net_1}), according to the proof of Proposition \ref{prop-ee},
\begin{align}
& \mathds{E} \Bigg[ \frac{1}{M} \sum_{m=1}^M  \bigg| \frac{1}{n}\sum_{i=1}^n \big|Y_i-r(\bX_i\,; \Theta_m, \mathscr{D}_n)\big|^2 - \mathds{E}_{\bX, Y} \big|Y - r(\bX\,; \Theta_m, \mathscr{D}_n)\big|^2 \bigg| \Bigg] \nonumber \\
& \leq  \mathds E \sup_{f \in \mathscr F_n}\bigg|\frac{1}{n}\sum_{i=1}^n\big|Y_i-f(\bX_i)\big|^2-\mathds E\big|Y-f(\bX)\big|^2\bigg| \nonumber \\
& \quad \to 0\quad \mbox{as } n \to \infty,
\label{proof_th2_1}
\end{align}
where the set $\mathscr F_n$ contains all neural networks, constrained by (\ref{cpoids}).
A similar result can be obtained for the ``big'' network $s_{M,n}$, just by replacing $K_n$ (an upper bound over the number of neurons in the two layers of the ``small'' networks) by $MK_n$. Thus, since $M$ is fixed, still under the assumptions of Theorem \ref{maintheo1}, we may write
\begin{align}
& \mathds{E} \bigg| \frac{1}{n}\sum_{i=1}^n \big|Y_i-s_{M,n}(\bX_i)\big|^2 - \mathds{E} \big|Y - s_{M,n}(\bX)\big|^2 \bigg| \nonumber \\
& \leq  \mathds E \sup_{f \in \mathscr F(\Theta_1, \hdots, \Theta_M,\mathscr D_n)}\bigg|\frac{1}{n}\sum_{i=1}^n\big|Y_i-f(\bX_i)\big|^2-\mathds E\big|Y-f(\bX)\big|^2\bigg| \nonumber \\
& \quad \to 0\quad \mbox{as } n \to \infty.
\label{proof_th2_2}
\end{align}
Therefore, taking expectation on both sides of (\ref{proof_big_net_1}), and assembling inequalities (\ref{proof_th2_0}), (\ref{proof_th2_1}), and (\ref{proof_th2_2}), we have
\begin{align*}
    \mathds{E} \big|s_{M,n}(\bX) - r(\bX)\big|^2 \to 0 \quad \textrm{as} ~ n \to \infty, 
\end{align*}
which concludes the proof. 
\section{Experiments}
In this section we validate the Neural Random Forest (NRF) models experimentally (Method 1 and Method 2). We compare them with standard Random Forests (RF), Neural Networks (NN) with one, two, and three hidden layers, as well as Bayesian Additive Regression Trees (\citealp[BART,][]{chipman2010bart}).
\subsection{Training procedure}
Overall, the training procedure for the NRF models goes as follows. A random forest is first learned using the {\it scikit-learn} implementation \citep{Pe12} for random forests. Based on this, the set of all split directions and split positions are extracted and used to build the neural network initialization parameters. The NRF models are then trained using the {\it TensorFlow} framework \citep{Ma15}.
\paragraph{Network optimization.}
The optimization objective when learning either of the network models (NRF or standard NN) is to minimize the mean squared error (MSE) on some training set. In neural network training, this is typically achieved by employing an iterative gradient-based optimization algorithm. The algorithm iterates over the training set, generates predictions, and then propagates the gradient of the resulting error signal with respect to all individual network parameters back through the network. The network parameters are updated such that this error is decreased, and slowly the model learns to generate the correct predictions. In practice, we found that {\it Adam} \citep{Ki14}, which also includes an adaptive momentum term, performed well as optimization algorithm, though any other iterative gradient-based optimization algorithm could be used instead. The results reported here were obtained with {\it Adam} using minibatches of size 32, default hyperparameter values ($\beta_1=0.9$, $\beta_2=0.999$, $\varepsilon=1e-08$), and initial learning rate $0.001$ for which a stable optimization behavior could be observed.
In preliminary experiments, minibatch size did not have a big impact on performance and was thus not tuned. At the beginning of every new epoch, the training set was shuffled and minibatches assigned anew so as to avoid overfitting to a specific order or choice of particular minibatches. 

Each neural network (including the NRF networks) was trained for 100 epochs. During training, both training loss and validation loss were monitored every time an epoch was completed. The final parameters chosen are the ones that gave minimum validation error across all 100 epochs. 
\paragraph{Neural random forests.} The individual networks in Method 1 are trained just like the larger network in Method 2 for 100 epochs, and the best parameters with minimum loss on the validation set during training were picked. Computed sequentially, training Method 1 takes longer since each ``small'' model has to be fitted individually.

We generally found that using a lower value for $\gamma_2$ than for $\gamma_1$ (the initial contrast parameters of the activation functions in the second and first hidden layer, respectively) is helpful. This is because with a relatively small contrast $\gamma_2$, the transition in the activation function from $-1$ to $+1$ is smoother and a stronger gradient signal reaches the first hidden layer in backpropagation training. 
Concretely, for our experiments we used $\gamma_1=100$ and $\gamma_2=1$.  

In some rare cases, the NRF just overfitted during optimization. This is, even though the training error went down, validation error only increased, i.e., the model's ability to generalize actually suffered from network optimization. Wherever NRF validation loss was actually worse than the RF validation loss during {\it all} epochs, we kept the original RF model predictions. As a consequence, we can expect the NRF predictions to be at least as good as the RF predictions, since cases where further optimization is likely to have lead to overfitting are directly filtered out.
\paragraph{Sparse vs. fully connected optimization.} All NRF networks described in the previous sections have sparse connectivity architecture between the layers due to the nature of the translated tree structures. However, besides training sparse NRF networks, a natural modification is to not limit network training to optimizing a small set of weights, but instead to relax the sparsity constraint and train a fully connected feed-forward network. With initial connection weights of 0 between neurons where no connections were described in the sparse framework, this model still gives the same predictions as the initial sparsely connected network. However, it can adapt a larger set of weights for optimizing the objective. During the experiments, we will refer to this relaxed NRF version as {\it fully connected}, in contrast to the {\it sparse} setting, where fewer weights are subject to optimization.
 Both algorithms fight overfitting in their own way: the sparse network takes advantage of the forest structure during the whole optimization process, whereas the fully connected network uses the smart initialization provided by the forest structure as a warm start. Indeed, the tendency of fully connected networks to overfit on small data sets can potentially be reduced with the inductive bias derived from the trees.
 	
We also stress that the fully connected approach differs from the sparse one only through the relaxation of the sparsity constraint during the optimization process. Apart from that, both approaches are declined with the same architectures: Method 1 (disconnected networks) or Method 2 (a ``big'' network). In fact, the fully connected approach should rather be understood as an algorithmic trick to initialize the network weights, as part of a traditional optimization procedure. In practice, without a fully differentiable implementation of sparse models, the fully connected models can be faster to optimize than their sparse counterpart, especially when training on a GPU. We used a dense matrix multiplication, forcing the entries of non-existing connections to 0. We acknowledge that this is not the most elegant solution and a fully differentiable implementation of sparse matrix multiplication could accelerate training here, though we did not pursue this way further.
 
\subsection{Benchmark comparison experiments}
We now compare the NRF models with standard RF, standard NN, and BART on regression data sets from UCI Machine Learning Repository \citep{Li13}. Concretely, we use the {\it Auto MPG}, {\it Housing}, {\it Communities and Crime} \citep{Re02}, {\it Forest Fires} \citep{Co07}, {\it Breast Cancer Wisconsin (Prognostic)}, {\it Concrete Compressive Strength} \citep{Ye98}, and {\it Protein Tertiary Structure} data sets as testing ground for the models.

To showcase the abilities of the NRF, we picked diverse, but mostly small regression data sets with few samples. These are the sets where RF often perform better than NN, since the latter typically require plentiful training data to perform reasonably well. It is on these small data sets where the benefits of neural optimization can usually not be exploited, but with the specific inductive bias of the NRF it becomes possible. We also study NRF performance on a larger data set, {\it Protein Tertiary Structure},  which contains $45000$ observations. 

Random forests are trained with 30 trees and maximum depth restriction of 6. The neural networks with one, two, or three hidden layers (NN1, NN2, NN3) trained for comparison are fully connected and have the same number of neurons in each layer as the NRF networks of Method 2 (for the third layer of NN3, the number of neurons is the same as in the second layer of NRF2). The initial parameters of standard NN were drawn randomly from a standard Gaussian distribution, and the above training procedure was followed. 

For comparison we also evaluate Bayesian Additive Regression Trees (\citealp[BART,][]{chipman2010bart}).
BART is trained also with 30 trees, 1000 MCMC steps (after burn-in of 100 steps), and otherwise default hyperparameters from the {\it BayesTree} R implementation. 
\paragraph{Data preparation.} We shuffled the observations in each data set randomly and split it into training, validation, and test part in a ratio of 50/25/25. Each experiment is repeated 10 times with different randomly assigned training, validation, and test set. 

\begin{table*}[!h]
\footnotesize
    \centering
    \begin{tabular}{@{\extracolsep{4pt}}|l| l | l |}
    \hline
        \hline
        {\bf Data set} & {\bf Number of samples} & {\bf Number of features}  \\ 
        \hline
        {\bf Auto MPG}    & 398 & 7  \\
        {\bf Housing}     & 506 & 13 \\ 
        {\bf Communities and Crime}   & 1994 & 101  \\ 
        {\bf Forest Fires}        & 517 & 10  \\ 
        {\bf Wisconsin}           & 194 & 32 \\
        {\bf Concrete}           & 1030 & 8\\
        {\bf Protein}           & 45730 & 9\\
        \hline
        \hline
    \end{tabular}
    \caption{Data set characteristics: number of samples and number of features, after removing observations with missing information or nonnumerical input features.}
    \label{dataset_stats}
\end{table*}
Nonnumerical features and samples with missing entries were systematically removed. Table \ref{dataset_stats} summarizes the characteristics of the resulting data sets. In preliminary experiments, data normalisation or rescaling to the unit cube did not  have a big impact on the models, so we kept the original values given in each of the sets.

\paragraph{Results.} Table \ref{results} summarizes the results in terms of RMSE (root mean squared error) for the different models. The first important comment is that all NRF models improve over the original RF, consistently across all data sets. So the NRF are indeed more competitive than the original random forests which they were derived from. These consistent improvements over the original RF can be observed both for sparse and fully connected NRF networks.

\begin{table*}[!h]
\footnotesize
    \centering
    \begin{tabular}[width=\textwidth]{@{\extracolsep{1pt}}|r|r|r|r|r|@{}}
    \hline
    \hline
        {\bf Data set} 		& {\bf NN1} 	& {\bf NN2} 		& {\bf NN3}	& {\bf RF} 			 \\ 
        \hline

        {\bf Auto MPG}  	& 4.56 (0.83) 	& 3.95 (0.39) 		& 5.02 (0.72)	& 3.44 (0.38) 		\\
        {\bf Housing}         	& 9.06 (0.85) 	& 7.81 (0.71) 		& 12.52 (1.08)	& 4.78 (0.88) 		\\
        {\bf Crime}             	& 5.39 (0.42) 	& 6.78 (0.47) 		& 6.64 (0.31)	& 0.17 (0.01)  		\\
        {\bf Forest Fires}       	& 96.7 (0.20) 	& {\it54.87 (34.33)}  	& 97.9 (0.90)	& 95.47 (43.52) 	\\
        {\bf Wisconsin}         	& 36.91 (0.88) 	& {\it34.71 (2.36)} 	& 38.43 (2.88)	& 45.63 (3.53) 		\\
        {\bf Concrete}            & 10.18 (0.49) 	&10.21 (0.68) 		& 11.52 (0.88)	&  8.39 (0.62)   		\\
        {\bf Protein}               & 6.12 (0.02) 	& 6.11 (0.02) 		& 6.12 (0.02)	&  5.06 (0.03)  		\\
        \hline
        \hline
    \end{tabular}
    
    \vspace{0.3cm}
    
    \begin{tabular}[width=\textwidth]{@{\extracolsep{1pt}}|r|r|r|r|r|@{}}
    \hline
    \hline
         {\bf NRF1 full} 	& {\bf NRF2 full}  		& {\bf NRF1 sparse} 	& {\bf NRF2 sparse}  & {\bf BART}		\\
        \hline

         \it{3.20(0.39)} 	& 3.35 (0.46)        		& 3.28 (0.41)	 	& 3.28 (0.42)  		& {\bf2.90 (0.33)}  	\\
         \it{4.34 (0.85)} 	& 4.68 (0.88)         		& 4.59 (0.91)	 	& 4.62 (0.88)  		& {\bf3.78 (0.51)}  	\\
         0.16 (0.01) 	& 0.16 (0.01)         		& \it{0.16 (0.01)}  	& 0.16 (0.01)    		& {\bf0.14 (0.01)} 	\\
     \bf{54.47 (34.64)} &78.60 (28.17) 			& 68.51 (35.56) 	& 82.80 (32.07) 	& 55.04 (16.40)  	\\
       	37.12 (2.89)	&41.22 (3.05) 			& 40.70 (2.51) 		& 38.03 (3.95)  		& {\bf 33.50 (2.26)}  	\\
       	 \it{6.28 (0.40)}	& 6.44 (0.37)      		&  7.42 (0.56)   		& 7.78 (0.56)       	& {\bf 5.20 (0.34)}  	\\
          4.82 (0.04)    	& {\it 4.77 (0.05)}      		&  4.82 (0.04)     	& 4.77 (0.05)   		& {\bf4.57 (0.04)}  	\\
        \hline
        \hline
    \end{tabular}
    \caption{RMSE test set results (and their standard deviation) for each of the models across the different data sets. ``Sparse'' stands for the sparsely connected NRF model, ``full'' for the fully connected. Best results are displayed in bold font, second best results in italic.}
    \label{results}
\end{table*}

        %

        
For most data sets, standard NN (regardless of the number of layers) do not achieve performance that is on par with the other models, though in some cases they give competitive RMSE score. Interestingly, NRF Method 1 seems to mostly outperform Method 2 with few exceptions. In fact, BART aside, the best overall performance is achieved by the fully connected NRF with Method 1, i.e., the averaging of individually trained per-tree networks. So one typically obtains bigger benefits from averaging several independently optimized single-tree networks than from jointly training one large network for all trees.
This means that ensemble averaging (Method 1) is more advantageous for the NRF model than the co-adaptation of parameters across trees (Method 2). We conjecture that by separating the optimization process for the individual networks, some implicit regularization occurs that reduces the vulnerability to overfitting when using Method 1. Additionally, it is important to note the excellent performance of BART, which is ranked first in 6 out of the 7 experiments. Although in this context BART should be seen as a benchmark, it remains that a good idea for future research is to use BART to design neural networks, and extend the approach presented in this article.

\paragraph{Training evolution.} To illustrate training behavior, the training and validation error for NRF (Method 2, fully connected) as well as for RF and NN from one of the experiments on the {\it Concrete Compressive Strength} data set are summarized in Figure \ref{fig:evolution}.
\begin{figure*}[!h]{{\extracolsep{8pt}}}
	\centering
	\includegraphics[width=0.45\textwidth]{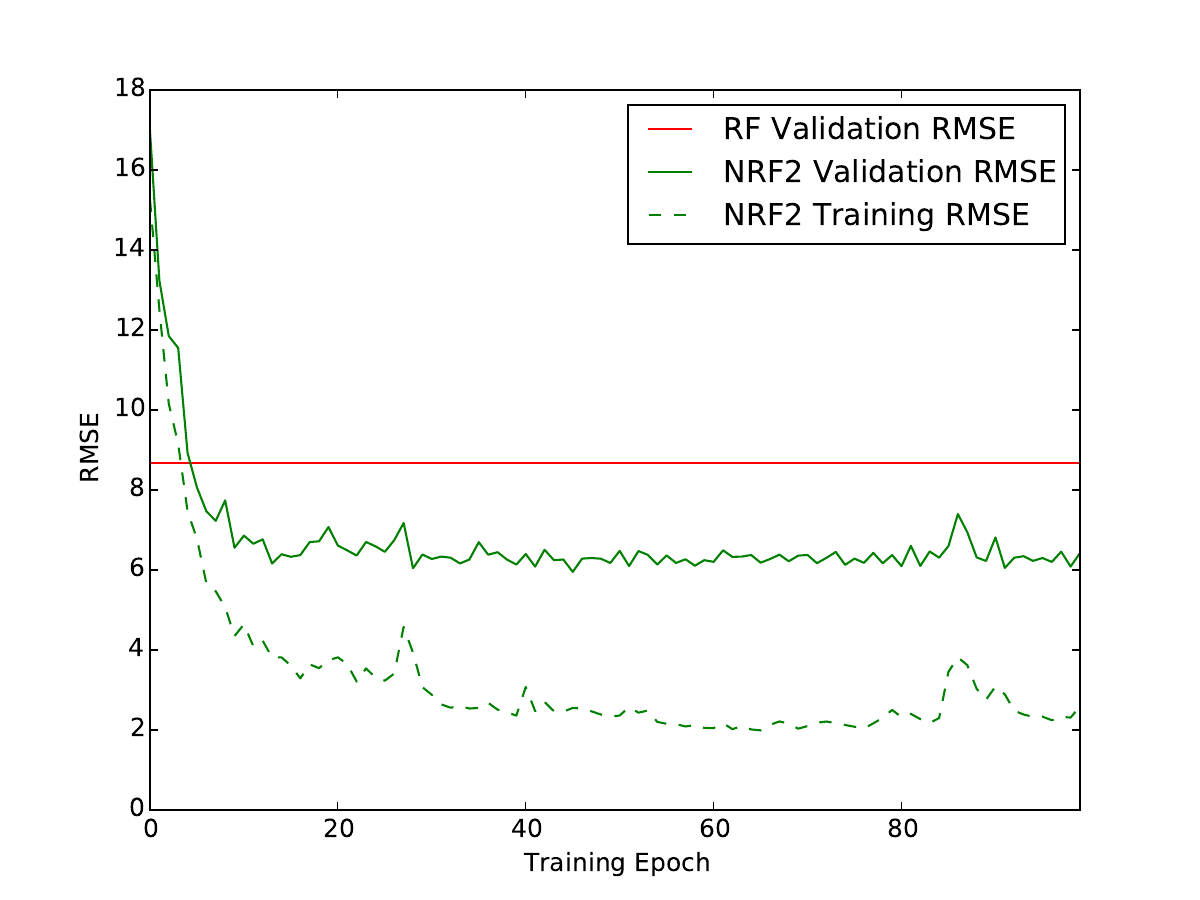}
	\includegraphics[width=0.45\textwidth]{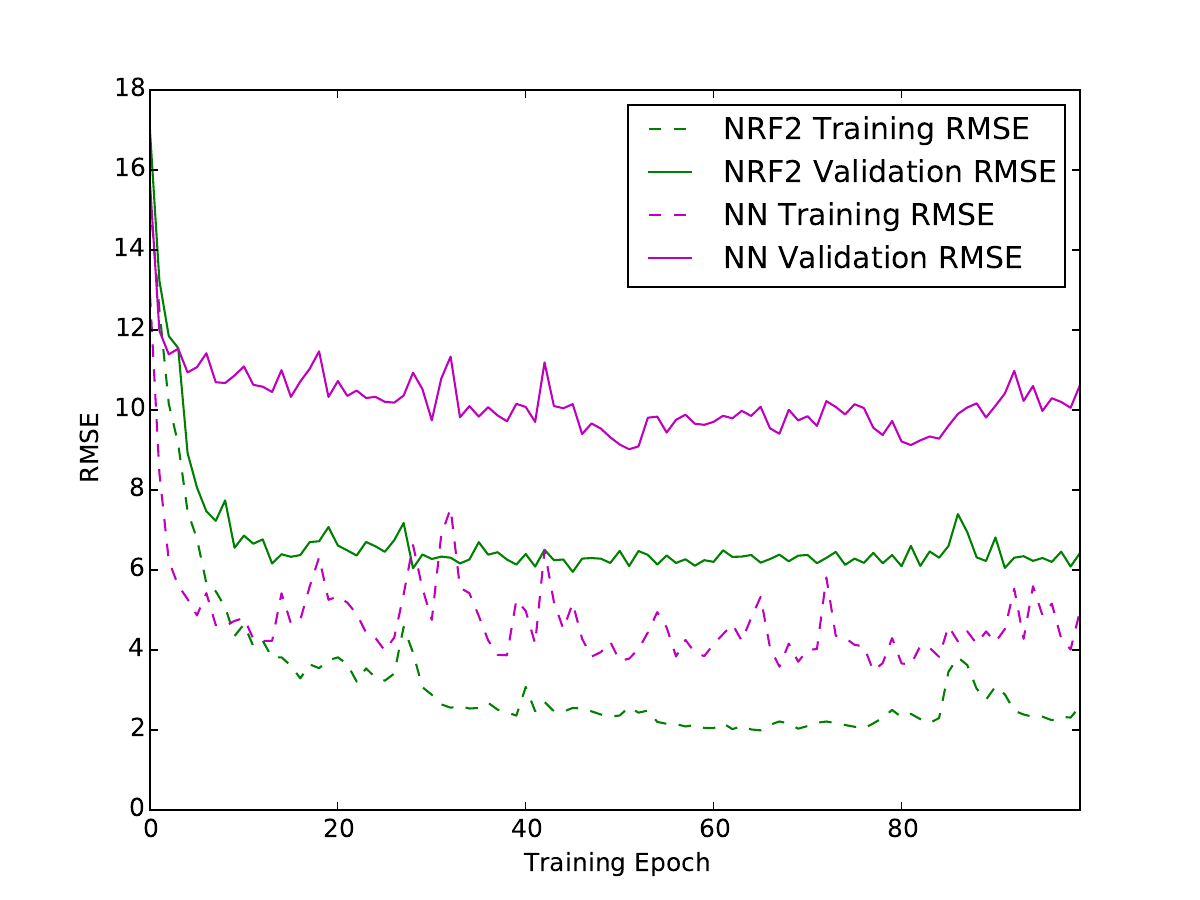}
	\caption{Evolution of training set and validation RMSE for different models. {\bf Left}: Validation RMSE for RF (red), along with validation and training set RMSE for NRF Method 2 (fully connected, green) across training epochs. {\bf Right}: Validation and training RMSE (solid and dotted lines, respectively) for NRF Method 2 (green) and a NN (purple) with same number of layers and neurons.}
	\label{fig:evolution}
\end{figure*}

Firstly, one can observe in the left plot that validation RMSE clearly drops below the level of RF RMSE during training. Note that the difference to the constant red line (RF validation error) represents the relative improvement over the RF on the validation set. So, in fact, network optimization helps the NRF to generate better predictions than the RF and this is achieved already very early in training. The initial RMSE value at epoch 0 is not exactly the same as for the RF, which is due to using smooth activation functions instead of sharp step functions (cf.~the contrast hyperparameters $\gamma_1$ and $\gamma_2$). 
In the plot on the right, the same NRF model behavior is shown again, but contrasted with a standard feed-forward NN of exactly the same size and number of parameters. Clearly, the NRF reaches much lower RMSE than the NN, both on training and validation set.


\subsection{Asymptotic behavior}
We also investigate the asymptotic behavior of the NRF model on an artificial data set created by sampling inputs $\bx$ uniformly from the $d$-dimensional hypercube $[0,1]^d$ and computing outputs $y$ as
\begin{equation*}
    y(x) = \sum_{j=1}^{d}\sin(20x^{(j)}-10) + \varepsilon,
\end{equation*}
where $\varepsilon$ is a zero mean Gaussian noise with variance $\sigma^2$, which corrupts the deterministic signal. We choose $d=2$ and $\sigma = 0.01$, and investigate the asymptotic behavior as the number of training samples increases. Figure \ref{fig:synthetic} illustrates the RMSE for an increasing number of training samples and shows that the NRF (Method 2, fully connected) error decreases much faster than the RF error as sample size increases.
\begin{figure}[h!]
\centering
\includegraphics[scale=0.6]{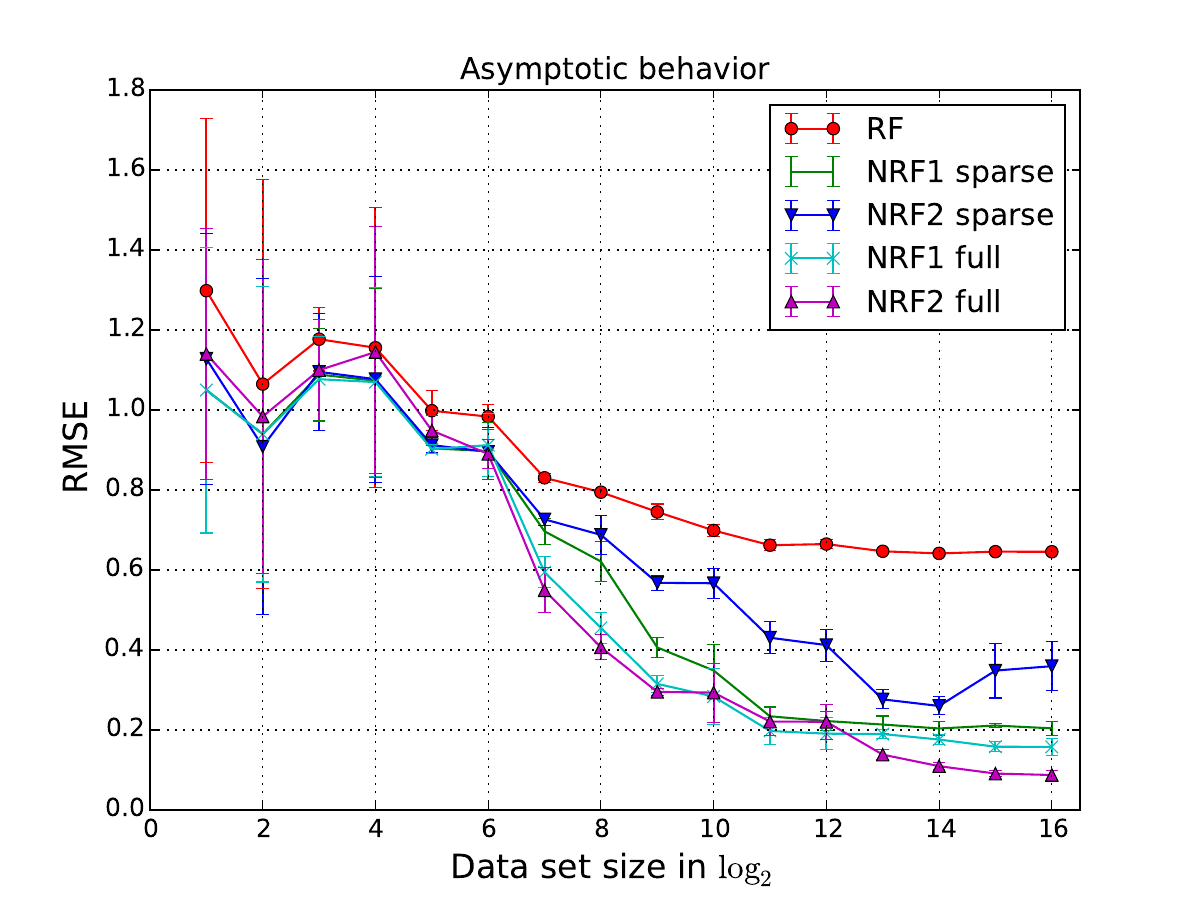}
\caption{This figure shows the test RMSE for synthetic data with exponentially increasing training set size ($x$-axis). Solid lines connect the mean RMSE values obtained across 3 randomly drawn datasets for each dataset size, whereas error bars show the empirical standard deviation; 30 trees, maximum depth 6, $\gamma_1=100$, $\gamma_2=1$.}
\label{fig:synthetic}
\end{figure}
\section{Some technical results}
\label{proofs}
\subsection{Proof of Proposition \ref{prop-ee}}
An easy adaptation of \citet[][Theorem 10.2]{GyKoKr02} shows that we can always assume that $\|Y\|_{\infty}<\infty$. Let $\mathbb M_{d,K_n-1}$ (resp., $\mathbb M_{K_n-1,K_n}$) be the vector space of $d\times (K_n-1)$ (resp., $(K_n-1)\times K_n$) matrices.
For a generic parameter
\begin{align*}
{\boldsymbol \lambda}=({\bf W}_{1}, {\bf b}_{1},& {\bf W}_{2}, {\bf b}_{2},{\bf W}_{\!\textrm{out}},{b}_{\textrm{out}})\\
&\in \mathbb M_{d,K_n-1} \times \mathds R^{K_n-1}\times \mathbb M_{K_n-1,K_n} \times \mathds R^{K_n}\times \mathds R^{K_n}\times \mathds R,
\end{align*}
we consider the constraint
\begin{equation}
\label{cpoids1bis}
\|{\bf W}_{2}\|_{\infty}+\|{\bf b}_{2}\|_{\infty}+\mathbf \|{\bf W}_{\!\textrm{out}}\|_1+|{b}_{\textrm{out}}| \leq C_1 K_n,
\end{equation}
and let
$$\Lambda=\big\{{\boldsymbol \lambda}=({\bf W}_{1},{\bf b}_{1},{\bf W}_{2},{\bf b}_{2},{\bf W}_{\!\textrm{out}},{b}_{\textrm{out}}): (\ref{cpoids1bis})\mbox{ is satisfied}\big\}.$$
We finally define
$$\mathscr F_n=\{f_{{\boldsymbol \lambda}}: {\boldsymbol \lambda} \in  \Lambda\},$$
where
$$f_{{\boldsymbol \lambda}}(\bx)={\bf W}_{\!\textrm{out}}^{\top}\sigma_2\Big({\bf W}_{2}^{\top} \sigma_1({\bf W}_{1}^{\!\top}\bx+{\bf b}_{1})+{\bf b}_{2}\Big)+{b}_{\textrm{out}},\quad \bx \in \mathbb R^d.$$
The set $\mathscr F_n$ contains all neural networks---constrained by (\ref{cpoids1bis})---with inputs in $\mathbb R^d$, two hidden layers of respective size $K_n-1$ and $K_n$, and one output unit. We note that $\mathscr F(\Theta,\mathscr D_n)\subseteq \mathscr F_n$ and that $\mathscr F_n$ is deterministic, in the sense that it does not depend neither on $\Theta$ nor on $\mathscr D_n$, but only on the size $n$ of the original data set. 

Clearly,
\begin{align*}
&\mathds E \sup_{f \in \mathscr F(\Theta,\mathscr D_n)}\bigg|\frac{1}{n}\sum_{i=1}^n\big|Y_i-f(\bX_i)\big|^2-\mathds E\big|Y-f(\bX)\big|^2\bigg| \\
& \quad \leq \mathds E \sup_{f \in \mathscr F_n}\bigg|\frac{1}{n}\sum_{i=1}^n\big|Y_i-f(\bX_i)\big|^2-\mathds E\big|Y-f(\bX)\big|^2\bigg|.
\end{align*}
According to (\ref{cpoids1bis}), each $f \in \mathscr F_n$ satisfies $\|f\|_{\infty}\leq C_1 K_n$. Thus, since $Y$ is assumed to be bounded, using uniformly bounded classes of functions we will be able to derive a useful exponential inequality.

Let $z_1^n=(z_1, \hdots, z_n)$ be a vector of $n$ fixed points in $\mathds R^d$ and let $\mathscr H$ be a set of functions from $\mathds R^d\to \mathds R$. For every $\varepsilon >0$, we let $\mathscr N_1(\varepsilon, \mathscr H,z_1^n)$ be the $L_1$ $\varepsilon$-covering number of $\mathscr H$ with respect to $z_1, \hdots, z_n$. Recall that $\mathscr N_1(\varepsilon, \mathscr H,z_1^n)$  is defined as the smallest integer $N$ such that there exist functions $h_1, \hdots, h_N:\mathds R^d\to \mathds R$ with the property that for every $h\in \mathscr H$ there is a $j=j(h)\in \{1, \hdots, N\}$ such that 
$$\frac{1}{n}\sum_{i=1}^n\big|h(z_i)-h_j(z_i)\big|<\varepsilon.$$
Note that if $Z_1^n = (Z_1,\hdots, Z_n)$ is a sequence of i.i.d.~random variables, then $\mathscr N_1(\varepsilon, \mathscr H,Z_1^n)$ is a random variable as well. 

Now, let $Z=(\bX,Y)$, $Z_1=(\bX_1,Y_1), \hdots, Z_n=(\bX_n,Y_n)$, and
\begin{align*}
\mathscr H_n=\big\{h(\bx,y)& :=|y-f(\bx)\big|^2: \\
& (\bx,y)\in [0,1]^d\times [-\|Y\|_{\infty},\|Y\|_{\infty}]\mbox{ and }f\in \mathscr F_n\big\}.
\end{align*}
Note that the functions in $\mathscr H_n$ satisfy 
$$0\leq h(\bx,y)\leq 2C_1^2K_n^2+2\|Y\|_{\infty}^2.$$
In particular,
$$0\leq h(\bx,y)\leq 4C_1^2K_n^2$$
if $n$ is large enough such that $C_1K_n\geq \|Y\|_{\infty}$ is satisfied, which we assume.

According to an inequality of \citet{Po84} \citep[see also][Theorem 9.1]{GyKoKr02}, we have, for arbitrary $\varepsilon >0$,
\begin{align}
&\mathds P\bigg\{ \sup_{f \in \mathscr F_n}\Big|\frac{1}{n}\sum_{i=1}^n\big|Y_i-f(\bX_i)\big|^2-\mathds E\big|Y-f(\bX)\big|^2\Big|>\varepsilon\bigg\}\nonumber\\
& \quad =\mathds P\bigg\{ \sup_{h \in \mathscr H_n}\Big|\frac{1}{n}\sum_{i=1}^nh(Z_i)-\mathds Eh(Z)\Big|>\varepsilon\bigg\}\nonumber\\
& \quad \leq 8 \mathbb E \mathscr N_1\big(\frac{\varepsilon}{8}, \mathscr H_n,Z_1^n\big)e^{-\frac{n\varepsilon^2}{2048C_1^4K_n^4}}.\label{zero}
\end{align}
So, we have to upper bound $\mathds E \mathscr N_1(\frac{\varepsilon}{8}, \mathscr H_n, Z_1^n\big)$. To begin with, consider two functions $h(\bx,y)= |y-f(\bx)|^2$ and $h_1(\bx,y)= |y-f_1(\bx)|^2$ of $\mathscr H_n$. Clearly,
\begin{align*}
&\frac{1}{n}\sum_{i=1}^n\big|h(Z_i)-h_1(Z_i)\big|\\
&\quad =\frac{1}{n}\sum_{i=1}^n\Big|\big|Y_i-f(\bX_i)\big|^2-\big|Y_i-f_1(\bX_i)\big|^2\Big|\\
& \quad =\frac{1}{n}\sum_{i=1}^n\big|f(\bX_i)-f_1(\bX_i)\big|\times \big|f(\bX_i)+f_1(\bX_i)-2Y_i\big|\\
& \quad  \leq \frac{4C_1K_n}{n}\sum_{i=1}^n\big|f(\bX_i)-f_1(\bX_i)\big|.
\end{align*}
Thus,
\begin{equation}
\label{one}
\mathscr N_1\Big(\frac{\varepsilon}{8}, \mathscr H_n, Z_1^n\Big)\leq \mathscr N_1\Big(\frac{\varepsilon}{64C_1K_n}, \mathscr F_n, \bX_1^n\Big).
\end{equation}
The covering number $\mathscr N_1(\frac{\varepsilon}{64C_1K_n}, \mathscr F_n, \bX_1^n)$ can be upper bounded independently of $\bX_1^n$ by extending the arguments of \citet[][]{LuZe95} from a network with one hidden layer to a network with two hidden layers. In the arguments below, we repeatedly apply \citet[][Theorem 9.4, Lemma 16.4, and Lemma 16.5]{GyKoKr02}. The neurons of the first hidden layer output functions that belong to the class
$$\mathscr G_1=\{\sigma_1(a^{\top} \bx+a_0): a\in \mathds R^d, a_0\in \mathds R\},$$
and it is easy to show that, for $\varepsilon \in (0,1/4)$,
$$\mathscr N_1(\varepsilon, \mathscr G_1,\bX_1^n) \leq 9\Big(\frac{6e}{\varepsilon}\Big)^{4d+8}.$$
Next, letting 
$$\mathscr G_2=\big\{bg: g \in \mathscr G_1, |b|\leq C_1K_n\big\},$$
we get
\begin{align*}
\mathscr N_1(\varepsilon, \mathscr G_2,\bX_1^n) &\leq \frac{4C_1K_n}{\varepsilon}\mathscr N_1\Big(\frac{\varepsilon}{2C_1K_n}, \mathscr G_1,\bX_1^n\Big)\\
& \leq \Big(\frac{36 e C_1 K_n}{\varepsilon}\Big)^{4d+9}.
\end{align*}
The  second units compute functions of the collection
$$\mathscr G_3=\Big\{\sigma_2\Big(\sum_{i=1}^{K_n-1}g_i+b_0\Big): g_i \in \mathscr G_2, |b_0|\leq C_1K_n\Big\}.$$
Note that $\sigma_2$ satisfies the Lipschitz property $|\sigma_2(u)-\sigma_2(v)| \leq \gamma_2|u-v|$ for all $(u,v)\in \mathds R^2$. Thus, 
\begin{align*}
\mathscr N_1(\varepsilon, \mathscr G_3,\bX_1^n) &\leq \frac{2C_1\gamma_2K_n^2}{\varepsilon}\mathscr N_1\Big(\frac{\varepsilon}{2\gamma_2K_n}, \mathscr G_2,\bX_1^n\Big)^{K_n-1}\\
& \leq \Big( \frac{72eC_1\gamma_2K_n^2}{\varepsilon}\Big)^{(4d+9)K_n+1}.
\end{align*}
Also, letting
$$\mathscr G_4=\{wg: g\in \mathscr G_3,|w|\leq C_1K_n\},$$
we see, assuming without loss of generality $C_1,\gamma_2 \geq 1$, that
\begin{align*}
\mathscr N_1(\varepsilon, \mathscr G_4,\bX_1^n) &\leq \frac{4C_1K_n}{\varepsilon}\mathscr N_1\Big(\frac{\varepsilon}{2C_1K_n}, \mathscr G_3,\bX_1^n\Big)\\
& \leq \Big( \frac{144eC_1^2\gamma_2K_n^3}{\varepsilon}\Big)^{(4d+9)K_n+2}.
\end{align*}
Finally, upon noting that
$$\mathscr F_n=\Big\{\sum_{i=1}^{K_n}g_i+{b}_{\textrm{out}} :  g_i \in \mathscr G_4, |{b}_{\textrm{out}}|\leq C_1K_n\Big\},$$
we conclude
\begin{align}
\mathscr N_1(\varepsilon, \mathscr F_n,\bX_1^n)  & \leq \frac{2C_1K_n(K_n+1)}{\varepsilon}\mathscr N_1\Big(\frac{\varepsilon}{K_n+1}, \mathscr G_4,\bX_1^n\Big)^{K_n}\nonumber\\
& \leq \Big( \frac{144eC_1^2\gamma_2(K_n+1)^4}{\varepsilon}\Big)^{(4d+9)K_n^2+2K_n+1}.\label{two}
\end{align}
Combining inequalities (\ref{zero})-(\ref{two}), we obtain
\begin{align*}
&\mathds P\bigg\{ \sup_{f \in \mathscr F_n}\Big|\frac{1}{n}\sum_{i=1}^n\big|Y_i-f(\bX_i)\big|^2-\mathds E\big|Y-f(\bX)\big|^2\Big|>\varepsilon\bigg\} \\
& \quad \leq 8\Big( \frac{9216eC_1^3\gamma_2(K_n+1)^5}{\varepsilon}\Big)^{(4d+9)K_n^2+2K_n+1}e^{-\frac{n\varepsilon^2}{2048C_1^4K_n^4}}.
\end{align*}
Therefore, for any $\varepsilon \in (0,1/4)$, and all $n$ large enough, 
\begin{align*}
&\mathds E\sup_{f \in \mathscr F_n}\Big|\frac{1}{n}\sum_{i=1}^n\big|Y_i-f(\bX_i)\big|^2-\mathds E\big|Y-f(\bX)\big|^2\Big|\\
& \leq \varepsilon + 8 \int_{\varepsilon}^{\infty} \Big( \frac{9216eC_1^3\gamma_2(K_n+1)^5}{t}\Big)^{(4d+9)K_n^2+2K_n+1}e^{-\frac{nt^2}{2048C_1^4K_n^4}}\,\mbox{d}t\\
& \leq \varepsilon+ 8  \Big( \frac{9216eC_1^3\gamma_2(K_n+1)^5}{\varepsilon}\Big)^{(4d+9)K_n^2+2K_n+1}\\
& \hspace{5cm}\times \Big[ - \frac{2048C_1^4K_n^4}{n\varepsilon}e^{-\frac{n\varepsilon t}{2048C_1^4K_n^4}}\Big]_{t=\varepsilon}^{\infty}\\
&   \leq \varepsilon+ 8  \Big( \frac{9216eC_1^3\gamma_2(K_n+1)^5}{\varepsilon}\Big)^{(4d+9)K_n^2+2K_n+1}\\
& \hspace{5cm}\times\Big(\frac{2048C_1^4K_n^4}{n\varepsilon}\Big)e^{-\frac{n\varepsilon ^2}{2048C_1^4K_n^4}}\\
&  \leq \varepsilon + 8\Big(\frac{2048C_1^4K_n^4}{n\varepsilon}\Big)\\
& \hspace{1cm} \times \exp \Big[\big((4d+9)K_n^2+2K_n+1\big)\log \Big(\frac{9216eC_1^3\gamma_2(K_n+1)^5}{\varepsilon}\Big)\\
& \hspace{2.5cm} -\frac{n\varepsilon ^2}{2048C_1^4K_n^4} \Big].
\end{align*}
Thus, under the conditions $K_n, \gamma_2 \to \infty$, and \
$K_n^6\log (\gamma_2 K_n^5)/n\to 0$, for all $n$ large enough, 
$$\mathds E\sup_{f \in \mathscr F_n}\Big|\frac{1}{n}\sum_{i=1}^n\big|Y_i-f(\bX_i)\big|^2-\mathds E\big|Y-f(\bX)\big|^2\Big|\leq 2 \varepsilon$$
and so,
$$\mathds E \sup_{f \in \mathscr F(\Theta,\mathscr D_n)}\Big|\frac{1}{n}\sum_{i=1}^n\big|Y_i-f(\bX_i)\big|^2-\mathds E\big|Y-f(\bX)\big|^2\Big|\leq 2 \varepsilon.$$
Since $\varepsilon$ was arbitrary, the proof is complete.
\subsection{Proof of Proposition \ref{prop-lambda}}
By definition,
$$t_{\boldsymbol \lambda^{\star}}(\bx)={\bf W}^{\star \top}_{\!\textrm{out}}\tau\Big({\bf W}_{2}^{\star\top} \tau({\bf W}_{1}^{\star\top}\bx+{\bf b}_{1}^{\star})+{\bf b}_{2}^{\star}\Big)+b_{\textrm{out}}^{\star}$$
and 
$$f_{\boldsymbol \lambda^{\star}}(\bx)={\bf W}^{\star \top}_{\!\textrm{out}}\sigma_2\Big({\bf W}_{2}^{\star\top} \sigma_1({\bf W}_{1}^{\star\top}\bx+{\bf b}_{1}^{\star})+{\bf b}_{2}^{\star}\Big)+b_{\textrm{out}}^{\star}.$$
We have, for all $\bx \in \mathbb R^d$,
\begin{align*}
&\big|f_{\boldsymbol \lambda^{\star}}(\bx)-t_{\boldsymbol \lambda^{\star}}(\bx)\big|^2 \\
& \quad \leq \|{\bf W}_{\!\textrm{out}}^{\star}\|^2\times \Big\|\sigma_2\Big({\bf W}_{2}^{\star\top} \sigma_1({\bf W}_{1}^{\star\top}\bx+{\bf b}_{1}^{\star})+{\bf b}_{2}^{\star}\Big)\\
& \hspace{3.3cm} -\tau\Big({\bf W}_{2}^{\star\top} \tau({\bf W}_{1}^{\star\top}\bx+{\bf b}_{1}^{\star})+{\bf b}_{2}^{\star}\Big)\Big\|^2\\
& \qquad \mbox{(by the Cauchy-Schwarz inequality})\\
& \quad \leq \frac{\|Y\|_{\infty}^2 K_n }{4}\times\Big\|\sigma_2\Big({\bf W}_{2}^{\star\top} \sigma_1({\bf W}_{1}^{\star\top}\bx+{\bf b}_{1}^{\star})+{\bf b}_{2}^{\star}\Big)\\
& \hspace{4cm} -\tau\Big({\bf W}_{2}^{\star\top} \tau({\bf W}_{1}^{\star\top}\bx+{\bf b}_{1}^{\star})+{\bf b}_{2}^{\star}\Big)\Big\|^2.
\end{align*}
By the triangle inequality,
\begin{align*}
&\Big\|\sigma_2\Big({\bf W}_{2}^{\star\top} \sigma_1({\bf W}_{1}^{\star\top}\bx+{\bf b}_{1}^{\star})+{\bf b}_{2}^{\star}\Big)-\tau\Big({\bf W}_{2}^{\star\top} \tau({\bf W}_{1}^{\star\top}\bx+{\bf b}_{1}^{\star})+{\bf b}_{2}^{\star}\Big)\Big\|^2\\
& \leq 2\Big\|\sigma_2\Big({\bf W}_{2}^{\star\top} \tau({\bf W}_{1}^{\star\top}\bx+{\bf b}_{1}^{\star})+{\bf b}_{2}^{\star}\Big)\\
& \hspace{1.2cm}-\tau\Big({\bf W}_{2}^{\star\top} \tau({\bf W}_{1}^{\star\top}\bx+{\bf b}_{1}^{\star})+{\bf b}_{2}^{\star}\Big)\Big\|^2\\
&\quad+ 2\Big\|\sigma_2\Big({\bf W}_{2}^{\star\top} \sigma_1({\bf W}_{1}^{\star\top}\bx+{\bf b}_{1}^{\star})+{\bf b}_{2}^{\star}\Big)\\
&  \hspace{1.5cm}-\sigma_2\Big({\bf W}_{2}^{\star\top} \tau({\bf W}_{1}^{\star\top}\bx+{\bf b}_{1}^{\star})+{\bf b}_{2}^{\star}\Big)\Big\|^2\\
& \quad :={\bf I} + {\bf II}.
\end{align*}
Upon noting that, for all $u\in \mathds R$, $|\sigma_2(u)-\tau(u)|\leq 2e^{-2\gamma_2 |u|}$, we see that
$${\bf I}\leq8\sum_{i=1}^{K_n}  \exp \Big[-4\gamma_2\big|\big({\bf W}_{2}^{\star\top} \tau({\bf W}_{1}^{\star\top}\bx+{\bf b}_{1}^{\star})+{\bf b}_{2}^{\star}\big)_i\big|\Big].$$
But, by the very definition of $({\bf W}_{1}^{\star},{\bf b}_{1}^{\star},{\bf W}_{2}^{\star},{\bf b}_{2}^{\star})$---see (\ref{def_threshold_function})---we have, for every $i$, 
$$\big|\big({\bf W}_{2}^{\star\top} \tau({\bf W}_{1}^{\star\top}\bx+{\bf b}_{1}^{\star})+{\bf b}_{2}^{\star}\big)_i\big|\geq 1/2.$$
Thus, ${\bf I}\leq 8K_ne^{-2\gamma_2}$.

For the term ${\bf II}$, note that $\sigma_2$ satisfies the Lipschitz property $|\sigma_2(u)-\sigma_2(v)| \leq \gamma_2|u-v|$ for all $(u,v)\in \mathds R^2$. Hence,
\begin{align*}
{\bf II} & \leq 2\sum_{i=1}^{K_n}\gamma_2^2\Big|\Big({\bf W}_{2}^{\star\top} \big ( \sigma_1({\bf W}_{1}^{\star\top}\bx+{\bf b}_{1}^{\star})-\tau({\bf W}_{1}^{\star\top}\bx+{\bf b}_{1}^{\star})\Big)_i\Big|^2\\
& \leq 2\gamma_2^2K_n^2 \big\|\sigma_1({\bf W}_{1}^{\star\top}\bx+{\bf b}_{1}^{\star})-\tau({\bf W}_{1}^{\star\top}\bx+{\bf b}_{1}^{\star})\big\|^2\\
& \quad \mbox{(by the Cauchy-Schwarz inequality and the definition of ${\bf W}_{2}^{\star}$)}\\
& \leq 8\gamma_2^2K_n^2 \sum_{i=1}^{K_n-1}e^{-4\gamma_1|({\bf W}_{1}^{\star\top}\bx+{\bf b}_{1}^{\star})_i|}.
\end{align*}
For fixed $i$ and arbitrary $\varepsilon >0$, we have
\begin{align*}
&\int_{[0,1]^d}e^{-4\gamma_1|({\bf W}_{1}^{\star\top}\bx+{\bf b}_{1}^{\star})_i|}\mu(\mbox{d}\bx)\\
&\quad =\int_{[0,1]}e^{-4\gamma_1 |x^{(j_n^{\star})}-\alpha_n^{\star}|}\mu(\mbox{d}\bx)\\
& \qquad \mbox{(for some $j \in \{1, \hdots, d\}$, depending upon $\Theta$ and $\mathscr D_n$)}\\
&\quad \leq e^{-4\gamma_1\varepsilon}+2\varepsilon,
\end{align*}
since $\bX$ is uniformly distributed in $[0,1]^d$. We see that, for all $n$ large enough, choosing $\varepsilon =\frac{\log(2\gamma_1)}{4\gamma_1}$,
$$\int_{[0,1]^d}e^{-4\gamma_1|({\bf W}_{1}^{\star\top}\bx+{\bf b}_{1}^{\star})_i|}\mu(\mbox{d}\bx)\leq \frac{\log(2\gamma_1)}{\gamma_1}.$$
Putting all the pieces together, we conclude that
\begin{align*}
\mathbb E\int_{[0,1]^d}\big|f_{{\boldsymbol \lambda}^{\star}}(\bx)-t_{{\boldsymbol \lambda}^{\star}}(\bx)\big|^2 \mu(\mbox{d}\bx) & \leq 2\|Y\|_{\infty}^2 K_n^2 \Big(e^{-2\gamma_2}+ \frac{\gamma_2^2K_n^2\log(2\gamma_1)}{\gamma_1}\Big).
\end{align*}
The upper bound tends to zero under the conditions of the proposition.
\subsection{Proof of Proposition \ref{prop-diameter}}
The proof of Proposition \ref{prop-diameter} rests upon the following lemma, which is a multivariate extension of Technical Lemma $1$ in \citet{ScBiVe15}.
\begin{lem} \label{proof_tech_lemma_1}
Assume that $\bX$ is uniformly distributed in $[0,1]^d$, $\|Y\|_{\infty}<\infty$, and $r \in \mathscr{F}$. Assume, in addition, that $L^{\star} \equiv 0$ for all cuts in some given nonempty cell $A$.
Then the regression function $r$ is constant on $A$.
\end{lem}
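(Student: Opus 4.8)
The plan is to translate the hypothesis ``$L^{\star}\equiv 0$ for all cuts in $A$'' into the statement that every one-dimensional slice integral of $r$ over $A$ is constant, and then to invoke $r\in\mathscr F$ (Definition \ref{proper_reg_funct}) to conclude that $r$ itself is constant on $A$.

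First I would rewrite the population criterion. Since $\bX$ is uniform on $[0,1]^d$ and $A=[a_1,b_1]\times\cdots\times[a_d,b_d]$ is a genuine cell (so $a_i<b_i$ for every $i$), the conditional law of $\bX$ given $\{\bX\in A\}$ is uniform on $A$, and any cut $(j,\alpha)$ with $\alpha$ in the open interval $(a_j,b_j)$ produces two children $A_L(j,\alpha)$ and $A_R(j,\alpha)$ of positive Lebesgue measure. By the law of total variance (the population counterpart of the algebraic identity underpinning (\ref{chapitre0_definition_empirical_CART_criterion})), the theoretical CART-split criterion satisfies
$$L^{\star}(j,\alpha)=p_L\,p_R\,\big(\mathds E[Y\mid\bX\in A_L]-\mathds E[Y\mid\bX\in A_R]\big)^2,$$
where $p_L=\mathds P(\bX\in A_L\mid\bX\in A)>0$ and $p_R=1-p_L>0$ for such interior cuts. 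Hence $L^{\star}(j,\alpha)=0$ forces $\mathds E[Y\mid\bX\in A_L(j,\alpha)]=\mathds E[Y\mid\bX\in A_R(j,\alpha)]$; since $\mathds E[Y\mid\bX\in A]$ equals the convex combination $p_L\mathds E[Y\mid\bX\in A_L]+p_R\mathds E[Y\mid\bX\in A_R]$, all three quantities coincide with the constant $c:=\mathds E[Y\mid\bX\in A]$.

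Next I would make this explicit in terms of $r(\bx)=\mathds E[Y\mid\bX=\bx]$, which is bounded and continuous since $r\in\mathscr F$. Using uniformity of $\bX$ again, for every $j\in\{1,\dots,d\}$ and every $\alpha\in(a_j,b_j)$,
$$\int_{a_j}^{\alpha}G_j(t)\,\textrm{d}t=c\Big(\prod_{i\neq j}(b_i-a_i)\Big)(\alpha-a_j),\qquad G_j(t):=\int_{A^{\backslash j}}r(\bx)\,\textrm{d}\bx^{\backslash j}.$$
Because $r$ is continuous on the compact set $[0,1]^d$, the slice integral $G_j$ is continuous on $[a_j,b_j]$ (dominated convergence), so the left-hand side above is continuously differentiable in $\alpha$; differentiating yields $G_j(\alpha)=c\prod_{i\neq j}(b_i-a_i)$ for all $\alpha\in(a_j,b_j)$, hence on all of $[a_j,b_j]$ by continuity. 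In other words, for each $j$ the map $x_j\mapsto\int_{A^{\backslash j}}r(\bx)\,\textrm{d}\bx^{\backslash j}$ is constant on $[a_j,b_j]$, which is precisely statement $(i)$ of Definition \ref{proper_reg_funct} for the cell $A$. Since $r\in\mathscr F$, statement $(i)$ implies statement $(ii)$, namely that $r$ is constant on $A$, which is the claim.

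The argument is short, and the only delicate points are bookkeeping ones. First, one must handle the $0/0$ convention in (\ref{chapitre0_definition_empirical_CART_criterion}) by restricting attention to interior cuts $\alpha\in(a_j,b_j)$, where $p_Lp_R>0$, so that vanishing of $L^{\star}$ genuinely forces equality of the child means. Second, the passage from the integral identity to the pointwise identity for $G_j$ rests on the fundamental theorem of calculus and hence on continuity of the slice integral $G_j$, which follows from continuity of $r$. The genuinely conceptual point — and the reason the class $\mathscr F$ is introduced — is the final step: in dimension $d>1$, vanishing of $L^{\star}$ only yields flatness of the one-dimensional marginal integrals of $r$, not flatness of $r$ itself, so the structural hypothesis $r\in\mathscr F$ is exactly what bridges this gap; in dimension $d=1$ no such hypothesis is needed, and one recovers Technical Lemma~1 of \citet{ScBiVe15}.
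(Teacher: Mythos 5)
Your proof is correct and takes essentially the same route as the paper: vanishing of the theoretical split criterion along each direction forces the slice integrals $x_j\mapsto\int_{A^{\backslash j}}r(\bx)\,\textrm{d}\bx^{\backslash j}$ to be constant, and $r\in\mathscr F$ then upgrades this to constancy of $r$ on $A$. The only difference is presentational: the paper computes $L^{\star}(1,z)=\frac{1}{(z-a)(b-z)}\bigl(R(z)-C\,\frac{z-a}{b-a}\bigr)^2$ explicitly and reads off linearity of the antiderivative $R$, whereas you reach the same identity via the law of total variance, $L^{\star}=p_Lp_R\bigl(\mathds E[Y\mid\bX\in A_L]-\mathds E[Y\mid\bX\in A_R]\bigr)^2$, and then differentiate the resulting integral identity---these are the same computation written in two equivalent ways.
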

\begin{proof} [Proof of Lemma \ref{proof_tech_lemma_1}]
We start by proving the result in dimension $d=1$. Letting $A = [a,b]$ $(0\leq a< b\leq 1$), one has
\begin{align*}
L^{\star}(1,z) & = \V[ Y | \bX \in A] - \P[ a \leq \bX \leq z \,|\, \bX\in A] \V[Y \,|\, a \leq \bX \leq z ] \\
& \quad -  \P[ z \leq \bX \leq b \,|\, \bX\in A] \V[Y \, | \, z < \bX \leq b ]\\
& = - \frac{1}{(b-a)^2} \bigg( \int_a^b r(t) \textrm{d}t\bigg)^2 + \frac{1}{(b-a)(z-a)}\bigg( \int_a^z r(t) \textrm{d}t\bigg)^2 \\
& \quad + \frac{1}{(b-a)(b-z)} \bigg(  \int_z^b r(t) \textrm{d}t\bigg)^2.
\end{align*}
Let $C = \int_a^b r(t)\textrm{d}t$ and $R(z) = \int_a^z r(t)\textrm{d}t$. Simple calculations show that
\begin{align*}
L^{\star}(1,z) & = \frac{1}{(z-a)(b-z)} \bigg(  R(z) - C\,\frac{z-a}{b-a} \bigg)^2.
\end{align*}
Thus, since $ L^{\star} \equiv 0$ on $\mathscr{C}_A$ by assumption, we obtain
\[
R(z) = C\,\frac{z-a}{b-a}.
\]
This proves that $R(z)$ is linear in $z$, and therefore that $r$ is constant on $[a,b]$. 

Let us now examine the general multivariate case, where the cell is a hyperrectangle $A= \Pi_{i = 1}^d [a_i,b_i] \subseteq [0,1]^d $. We recall the notation $A^{\backslash j} = \prod_{i \neq j} [a_i, b_i]$ and $\textrm{d} \bx^{\backslash j} = \textrm{d}x_1 \hdots \textrm{d}x_{j-1} \textrm{d}x_{j+1} \hdots \textrm{d}x_d$. From the univariate analysis above, we know that for all $j \in \{1, \hdots, d\}$ there exists a constant $C_j$ such that 
\begin{align*}
\int_{A^{\backslash j}} r(\bx) \textrm{d} \bx^{\backslash j} = C_j.
\end{align*}
Since $r \in \mathscr{F}$, it is constant on $A$. This proves the result. \end{proof}

The proof of the next proposition follows the arguments of Proposition $2$ in \citet{ScBiVe15}, with Lemma \ref{proof_tech_lemma_1} above in lieu of their Technical Lemma $1$. The proof is therefore omitted. We let $A_n(\bX, \Theta)$ be the cell of the tree grown with random parameter $\Theta$ that contains $\bX$.
\begin{pro} \label{prop_equiv_prop2}
Assume that $\bX$ is uniformly distributed in $[0,1]^d$, $\|Y\|_{\infty}<\infty$, and $r \in \mathscr{F}$. Then, for all $\rho, \xi>0$, there exists $N \in \mathds{N}^{\star}$ such that, for all $n >N$,
\begin{align*}
 \mathds{P} \bigg[ \sup\limits_{\bz, \bz' \in A_{n}(\bX, \Theta)}\big|r(\bz) - r(\bz')\big| \leq \xi \bigg] \geq 1 - \rho.
\end{align*}
\end{pro}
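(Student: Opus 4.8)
The statement to be proved is the oscillation-control result of \citet{ScBiVe15} (their Proposition~2) with their additive model replaced by the class $\mathscr F$, and Lemma \ref{proof_tech_lemma_1} in the role of their Technical Lemma~1; since $\Theta$ and $\mathscr D_n$ are frozen and no averaging intervenes, the plan is to argue for a single randomized CART tree. The key object is the \emph{population} split criterion: for a cell $A\subseteq[0,1]^d$ with $\mu(A)>0$, let $L^\star_A(j,\alpha)$ be the deterministic limit, as $a_n\to\infty$, of the empirical criterion $L_n(j,\alpha)$ of (\ref{chapitre0_definition_empirical_CART_criterion}) evaluated in $A$. The noise contributions cancel between the two terms of (\ref{chapitre0_definition_empirical_CART_criterion}), so by the law of total variance $L^\star_A(j,\alpha)$ is the conditional variance, given $\bX\in A$, of the quantity $\mathbb E\big[r(\bX)\mid \bX\in A'\big]$, where $A'$ is the side of the cut $(j,\alpha)$ containing $\bX$; hence $L^\star_A(j,\alpha)\ge 0$, and along any root-to-leaf path these quantities telescope with total at most $\mathbb V[r(\bX)]<\infty$. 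Lemma \ref{proof_tech_lemma_1} asserts that, for $r\in\mathscr F$ and $\bX$ uniform, $\sup_{(j,\alpha)}L^\star_A(j,\alpha)=0$ forces $r$ to be constant on $A$. Consequently, fixing $\xi>0$ and using uniform continuity of $r$ on the compact $[0,1]^d$ (so that a cell on which $r$ oscillates by at least $\xi$ has diameter bounded below), a compactness argument over the family of hyperrectangles $A$ with $\sup_{\bz,\bz'\in A}|r(\bz)-r(\bz')|\ge\xi$ produces a constant $\delta=\delta(\xi)>0$ such that $\sup_{(j,\alpha)}L^\star_A(j,\alpha)\ge\delta$ on this family.

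\textbf{Transfer to the empirical criterion.} Next I would pass this bound to the criterion actually optimized by the algorithm. The coordinate hyperrectangles of $[0,1]^d$ form a VC class and $Y$ is bounded, so a classical uniform law of large numbers (a Glivenko--Cantelli-type statement) gives $\sup_A\sup_{(j,\alpha)}\big|L_n(j,\alpha)-L^\star_A(j,\alpha)\big|\to0$ in probability as $a_n\to\infty$, the outer supremum over all cells $A$ that can arise as nodes of the construction. On the resulting high-probability event, every time the algorithm cuts a cell $A$ on which $r$ oscillates by more than $\xi$, the selected cut has true criterion $L^\star_A\ge\delta(\xi)/2$ --- and the $\Mtry$-randomization is harmless here, since the deviation bound holds uniformly over all directions --- so by the law of total variance the quantity $\mu(A)\,\mathbb V[r(\bX)\mid \bX\in A]$ decreases by at least $\mu(A)\,\delta(\xi)/2$ when $A$ is split.

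\textbf{Path argument and conclusion.} To finish, consider the path $[0,1]^d=A^{(0)}\supseteq A^{(1)}\supseteq\cdots\supseteq A^{(\ell)}=A_n(\bX,\Theta)$ down to the cell of the test point $\bX$. If $\sup_{\bz,\bz'\in A_n(\bX,\Theta)}|r(\bz)-r(\bz')|>\xi$, then by nestedness of the $A^{(k)}$ the oscillation of $r$ exceeds $\xi$ on every cell of the path, so each of the $\ell$ splits along it is of the ``effective'' type above; telescoping the per-split variance decrements and comparing with the global budget $\mathbb V[r(\bX)]$ controls the relevant weighted number of such splits, and carrying this bookkeeping over all leaves --- exactly as in \citet{ScBiVe15} --- shows that the $\mu$-mass of the leaves on which $r$ oscillates by more than $\xi$ tends to $0$ as $K_n$ (equivalently $a_n$) tends to infinity. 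This is precisely $\mathbb P\big[\sup_{\bz,\bz'\in A_n(\bX,\Theta)}|r(\bz)-r(\bz')|>\xi\big]\to0$, i.e., Proposition \ref{prop_equiv_prop2}. Proposition \ref{prop-diameter} then follows at once: on each leaf $A$ one has $\int_A|t_{\boldsymbol \lambda^{\star}}(\bx)-r(\bx)|^2\mu(\mathrm d\bx)=\mu(A)\,\mathbb V[r(\bX)\mid\bX\in A]\le\mu(A)\big(\sup_{\bz,\bz'\in A}|r(\bz)-r(\bz')|\big)^2$, the oscillation never exceeds $2\|Y\|_\infty$, so summing over leaves and applying dominated convergence yields $\mathbb E\int_{[0,1]^d}|t_{\boldsymbol \lambda^{\star}}(\bx)-r(\bx)|^2\mu(\mathrm d\bx)\to0$.

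\textbf{Main obstacle.} The delicate step is the last bookkeeping: the telescoping identity only controls a \emph{weighted} count of deep splits, not the length of any single path, so one must argue --- this is the genuinely hard part of \citet{ScBiVe15} that Lemma \ref{proof_tech_lemma_1} feeds into --- that the CART growth rule cannot leave high-oscillation regions at shallow depth while spending its $K_n$ splits elsewhere, and one must control the interplay between the fixed per-split gain $\delta(\xi)$ and the vanishing cell probabilities $\mu(A^{(k)})$ along a random path. Two secondary points also need care: the compactness argument behind $\delta(\xi)$ must accommodate cells that become thin in some coordinates (restricting attention to the coordinates in which the cell stays fat, as in the reference), and the uniform law of large numbers for $L_n-L^\star$ must remain valid although the number of reachable cells grows with $n$ --- which is why the assumption $a_n\to\infty$, implied by $K_n\to\infty$, is essential.
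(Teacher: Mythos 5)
Your overall strategy is not the one the paper (implicitly) uses, and as written it has genuine gaps — one of which you candidly flag yourself. The paper's proof is deliberately omitted because it \emph{is} the proof of Proposition 2 in \citet{ScBiVe15}: one first analyzes the \emph{theoretical} tree (splits chosen by maximizing the population criterion $L^{\star}$), showing via a compactness/limit argument and Lemma \ref{proof_tech_lemma_1} that the variation of $r$ inside theoretical cells of level $k$ is small for $k$ large, the $\Mtry$-randomization being absorbed because each coordinate is selected infinitely often along an infinite path almost surely; one then shows that, with high probability, the first $k$ empirical cuts lie close to theoretical ones (uniform convergence of $L_n$ to $L^{\star}$ on cells containing enough observations, plus a stability argument for the argmax), so the empirical cell is contained in a slightly enlarged theoretical cell. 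Your proposal replaces this by a direct variance-budget bookkeeping on the empirical tree, and that replacement does not go through as stated.

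Concretely: (1) the claim that whenever a cell with oscillation larger than $\xi$ is cut, the selected cut has $L^{\star}\geq \delta(\xi)/2$, is false with $\mtry<d$ — if the only informative directions are absent from $\Mtry$, every available cut can have population criterion arbitrarily close to $0$, so no per-split decrement is guaranteed; uniformity of the deviation bound over directions does not repair this, and this is precisely what the theoretical-tree route is designed to handle. (2) The uniform law of large numbers you invoke, $\sup_{A}\sup_{(j,\alpha)}|L_n-L^{\star}_A|\to 0$ over \emph{all} cells reachable by the construction, fails: deep cells contain $O(1)$ observations and their empirical conditional means do not concentrate; \citet{ScBiVe15} restrict this comparison to the first $k$ levels / cells with enough points. (3) The uniform margin $\delta(\xi)>0$ over all hyperrectangles with oscillation at least $\xi$ is not established: the infimum is approached by cells degenerating in some coordinates, where Lemma \ref{proof_tech_lemma_1} does not apply and where one would need the restriction of $r$ to a face to retain the $\mathscr{F}$-property — which the definition of $\mathscr{F}$ does not provide (unlike the additive case of the reference, where restriction preserves the structure). (4) Finally, even granting (1)–(3), your telescoping only bounds $\sum_{L}\ell(L)\mu(L)$ over high-oscillation leaves, hence controls deep such leaves only; nothing prevents the tree from spending its $K_n$ splits elsewhere and leaving a high-oscillation cell as a shallow leaf, and you explicitly acknowledge you do not have the argument closing this — so the proposal does not yet prove the proposition. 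Your claim that this bookkeeping is ``exactly as in \citet{ScBiVe15}'' is also inaccurate, since their argument proceeds through the theoretical/empirical tree comparison rather than a global variance budget.
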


We are now in a position to prove Proposition \ref{prop-diameter}. Recall that 
$$
t_{\boldsymbol \lambda^{\star}}(\bx) = \mathds{E}[Y | \bX \in A_n(\bx, \Theta)].
$$
Accordingly, 
\begin{align*}
\big|t_{\boldsymbol \lambda^{\star}}(\bx)-r(\bx)\big|^2
& = \bigg|\mathds{E}\big[Y \,|\, \bX \in A_n(\bx, \Theta)\big] -r(\bx)\bigg|^2 \\
& = \bigg|\mathds{E}\big[r(\bX) \,|\, \bX \in A_n(\bx, \Theta)\big] -r(\bx)\bigg|^2 \\
& \leq  \sup\limits_{\bz, \bz' \in A_n(\bx, \Theta)}\big|r(\bz) - r(\bz')\big|^2,
\end{align*}
since $r$ is continuous on $[0,1]^d$. Taking the expectation on both sides, we finally obtain
\begin{align*}
\mathds{E} \big|t_{\boldsymbol \lambda^{\star}}(\bX)-r(\bX)\big|^2 \leq \mathds{E} \bigg[ \sup\limits_{\bz, \bz' \in A_n(\bX, \Theta)}\big|r(\bz) - r(\bz')\big|^2 \bigg],
\end{align*}
which tends to zero as $n \to \infty$, according to Proposition \ref{prop_equiv_prop2} (since $r$ is bounded on $[0,1]^d$).

\paragraph{Acknowledgments.} We greatly thank the Associate Editor and the Referee for valuable comments and insightful suggestions, which lead to a substantial improvement of the paper.

\bibliography{biblio-bsw}

\begin{thebibliography}{35}
\providecommand{\natexlab}[1]{#1}
\providecommand{\url}[1]{\texttt{#1}}
\expandafter\ifx\csname urlstyle\endcsname\relax
  \providecommand{\doi}[1]{doi: #1}\else
  \providecommand{\doi}{doi: \begingroup \urlstyle{rm}\Url}\fi

\bibitem[Abadi et~al.(2015)Abadi, Agarwal, Barham, and et~al.]{Ma15}
M.~Abadi, A.~Agarwal, P.~Barham, and E.~Brevdo et~al.
\newblock {TensorFlow}: Large-scale machine learning on heterogeneous systems,
  2015.
\newblock URL \url{http://tensorflow.org/}.

\bibitem[Bengio(2009)]{Be09}
Y.~Bengio.
\newblock Learning deep architectures for {A}{I}.
\newblock \emph{Foundations and Trends in Machine Learning}, 2:\penalty0
  1--127, 2009.

\bibitem[Biau and Scornet(2016)]{BiSc15}
G.~Biau and E.~Scornet.
\newblock A random forest guided tour (with comments and a rejoinder by the
  authors).
\newblock \emph{TEST}, 25:\penalty0 197--227, 2016.

\bibitem[Boulesteix et~al.(2012)Boulesteix, Janitza, Kruppa, and
  K{\"o}nig]{BoJaKrKo12}
A.-L. Boulesteix, S.~Janitza, J.~Kruppa, and I.R. K{\"o}nig.
\newblock Overview of random forest methodology and practical guidance with
  emphasis on computational biology and bioinformatics.
\newblock \emph{Wiley Interdisciplinary Reviews: {D}ata Mining and Knowledge
  Discovery}, 2:\penalty0 493--507, 2012.

\bibitem[Breiman(2001)]{Br01}
L.~Breiman.
\newblock Random forests.
\newblock \emph{Machine Learning}, 45:\penalty0 5--32, 2001.

\bibitem[Breiman et~al.(1984)Breiman, Friedman, Olshen, and Stone]{BrFrOlSt84}
L.~Breiman, J.H. Friedman, R.A. Olshen, and C.J. Stone.
\newblock \emph{Classification and Regression Trees}.
\newblock Chapman \& Hall{/}CRC, Boca Raton, 1984.

\bibitem[Brent(1991)]{Br91}
R.P. Brent.
\newblock Fast training algorithms for multi-layer neural nets.
\newblock \emph{IEEE Transactions on Neural Networks}, 2:\penalty0 346--354,
  1991.

\bibitem[Chipman et~al.(2010)Chipman, George, and McCulloch]{chipman2010bart}
H.A. Chipman, E.I. George, and R.E. McCulloch.
\newblock {BART}: Bayesian additive regression trees.
\newblock \emph{The Annals of Applied Statistics}, 4:\penalty0 266--298, 2010.

\bibitem[Cortez and Morais(2007)]{Co07}
P.~Cortez and A.~Morais.
\newblock A data mining approach to predict forest fires using meteorological
  data.
\newblock In J.~Neves, M.F. Santos, and J.~Machado, editors, \emph{New Trends
  in Artificial Intelligence, Proceedings of the 13th Portugese Conference on
  Artificial Intelligence}, pages 512--523, 2007.

\bibitem[Devroye et~al.(1996)Devroye, {Gy\"orfi}, and Lugosi]{DeGyLu96}
L.~Devroye, L.~{Gy\"orfi}, and G.~Lugosi.
\newblock \emph{A Probabilistic Theory of Pattern Recognition}.
\newblock Springer, New York, 1996.

\bibitem[Fern\'andez-Delgado et~al.(2014)Fern\'andez-Delgado, Cernadas, Barro,
  and Amorim]{FeCeBaAm14}
M.~Fern\'andez-Delgado, E.~Cernadas, S.~Barro, and D.~Amorim.
\newblock Do we need hundreds of classifiers to solve real world classification
  problems?
\newblock \emph{Journal of Machine Learning Research}, 15:\penalty0 3133--3181,
  2014.

\bibitem[Geurts and Wehenkel(2005)]{GeWe05}
P.~Geurts and L.~Wehenkel.
\newblock Closed-form dual perturb and combine for tree-based models.
\newblock In \emph{Proceedings of the 22nd International Conference on Machine
  Learning}, pages 233--240, New York, 2005. ACM.

\bibitem[{Gy\"orfi} et~al.(2002){Gy\"orfi}, Kohler, {Krzy\.zak}, and
  Walk]{GyKoKr02}
L.~{Gy\"orfi}, M.~Kohler, A.~{Krzy\.zak}, and H.~Walk.
\newblock \emph{A Distribution-Free Theory of Nonparametric Regression}.
\newblock Springer, New York, 2002.

\bibitem[Hastie et~al.(2009)Hastie, Tibshirani, and Friedman]{HaTiFr09}
T.~Hastie, R.~Tibshirani, and J.~Friedman.
\newblock \emph{The Elements of Statistical Learning. Second Edition}.
\newblock Springer, New York, 2009.

\bibitem[Ioannou et~al.(2016)Ioannou, Robertson, Zikic, Kontschieder, Shotton,
  Brown, and Criminisi]{Io16}
Y.~Ioannou, D.~Robertson, D.~Zikic, P.~Kontschieder, J.~Shotton, M.~Brown, and
  A.~Criminisi.
\newblock Decision forests, convolutional networks and the models in-between.
\newblock \emph{arXiv:1603.01250}, 2016.

\bibitem[Ishwaran et~al.(2011)Ishwaran, Kogalur, Chen, and Minn]{IsKoChMi11}
H.~Ishwaran, U.B. Kogalur, X.~Chen, and A.J. Minn.
\newblock Random survival forests for high-dimensional data.
\newblock \emph{Statistical Analysis and Data Mining}, 4:\penalty0 115--132,
  2011.

\bibitem[Jordan and Jacobs(1994)]{Jo94}
M.I. Jordan and R.A. Jacobs.
\newblock Hierarchical mixtures of experts and the {EM} algorithm.
\newblock \emph{Neural Computation}, 6:\penalty0 181--214, 1994.

\bibitem[Kingma and Ba(2015)]{Ki14}
D.P. Kingma and J.~Ba.
\newblock Adam: {A} method for stochastic optimization.
\newblock In \emph{International Conference on Learning Representations}, 2015.

\bibitem[Kontschieder et~al.(2015)Kontschieder, Fiterau, Criminisi, and
  Bul{\`o}]{Ko15}
P.~Kontschieder, M.~Fiterau, A.~Criminisi, and S.~Rota Bul{\`o}.
\newblock Deep neural decision forests.
\newblock In \emph{International Conference on Computer Vision}, 2015.

\bibitem[Lichman(2013)]{Li13}
M.~Lichman.
\newblock {UCI} {M}achine {L}earning {R}epository, 2013.
\newblock URL \url{http://archive.ics.uci.edu/ml}.

\bibitem[Lugosi and Zeger(1995)]{LuZe95}
G.~Lugosi and K.~Zeger.
\newblock Nonparametric estimation via empirical risk minimization.
\newblock \emph{IEEE Transactions on Information Theory}, 41:\penalty0
  677--687, 1995.

\bibitem[Meinshausen(2006)]{Me06}
N.~Meinshausen.
\newblock Quantile regression forests.
\newblock \emph{Journal of Machine Learning Research}, 7:\penalty0 983--999,
  2006.

\bibitem[Menze et~al.(2011)Menze, Kelm, Splitthoff, Koethe, and
  Hamprecht]{GuHoMaVa11}
B.H. Menze, B.M. Kelm, D.N. Splitthoff, U.~Koethe, and F.A. Hamprecht.
\newblock On oblique random forests.
\newblock In D.~Gunopulos, T.~Hofmann, D.~Malerba, and M.~Vazirgiannis,
  editors, \emph{Machine Learning and Knowledge Discovery in Databases}, pages
  453--469. Springer, Berlin, 2011.

\bibitem[Olaru and Wehenkel(2003)]{OlWe03}
C.~Olaru and L.~Wehenkel.
\newblock A complete fuzzy decision tree technique.
\newblock \emph{Fuzzy Sets and Systems}, 138:\penalty0 221--254, 2003.

\bibitem[Pedregosa et~al.(2011)Pedregosa, Varoquaux, Gramfort, and
  et~al.]{Pe12}
F.~Pedregosa, G.~Varoquaux, A.~Gramfort, and V.~Michel et~al.
\newblock Scikit-learn: Machine learning in {P}ython.
\newblock \emph{Journal of Machine Learning Research}, 12:\penalty0 2825--2830,
  2011.

\bibitem[Pollard(1984)]{Po84}
D.~Pollard.
\newblock \emph{Convergence of Stochastic Processes}.
\newblock Springer, New York, 1984.

\bibitem[Redmond and Baveja(2002)]{Re02}
M.~Redmond and A.~Baveja.
\newblock A data-driven software tool for enabling cooperative information
  sharing among police departments.
\newblock \emph{European Journal of Operational Research}, 141:\penalty0
  660--678, 2002.

\bibitem[Richmond et~al.(2015)Richmond, Kainmueller, Yang, Myers, and
  Rother]{RiKaYaMyRo15}
D.L. Richmond, D.~Kainmueller, M.Y. Yang, E.W. Myers, and C.~Rother.
\newblock Relating cascaded random forests to deep convolutional neural
  networks for semantic segmentation.
\newblock \emph{arXiv:1507.07583}, 2015.

\bibitem[Rokach and Maimon(2008)]{RoMa08}
L.~Rokach and O.~Maimon.
\newblock \emph{Data Mining with Decision Trees: {T}heory and Applications}.
\newblock World Scientific, Singapore, 2008.

\bibitem[Scornet et~al.(2015)Scornet, Biau, and Vert]{ScBiVe15}
E.~Scornet, G.~Biau, and J.-P. Vert.
\newblock Consistency of random forests.
\newblock \emph{The Annals of Statistics}, 43:\penalty0 1716--1741, 2015.

\bibitem[Sethi(1990)]{Se90}
I.K. Sethi.
\newblock Entropy nets: {F}rom decision trees to neural networks.
\newblock \emph{Proceedings of the IEEE}, 78:\penalty0 1605--1613, 1990.

\bibitem[Sethi(1991)]{Se91}
I.K. Sethi.
\newblock Decision tree performance enhancement using an artificial neural
  network interpretation.
\newblock In I.K. Sethi and A.K. Jain, editors, \emph{Artificial Neural
  Networks and Statistical Pattern Recognition}, volume 6912, pages 71--88.
  Elsevier, Amsterdam, 1991.

\bibitem[Welbl(2014)]{We14}
J.~Welbl.
\newblock Casting random forests as artificial neural networks (and profiting
  from it).
\newblock In X.~Jiang, J.~Hornegger, and R.~Koch, editors, \emph{Pattern
  Recognition}, pages 765--771. Springer, 2014.

\bibitem[Yeh(1998)]{Ye98}
I.-C. Yeh.
\newblock Modeling of strength of high-performance concrete using artificial
  neural networks.
\newblock \emph{Cement and Concrete Research}, 28:\penalty0 1797--1808, 1998.

\bibitem[Yildiz and Alpaydin(2013)]{YiAl13}
O.T. Yildiz and E.~Alpaydin.
\newblock Regularizing soft decision trees.
\newblock In \emph{Information Sciences and Systems 2013}, pages 15--21, Cham,
  2013. Springer.

\end{thebibliography}

\end{document}